\newtheorem{assumption}{Assumption}
\newcommand{\R}{{\mathbb{R}}}
\newcommand{\E}{\mathbb{E}}
\begin{document}
	
	\title{Mean Aggregator is More Robust than Robust Aggregators under Label Poisoning Attacks on Distributed Heterogeneous Data}
	
	\author{\name Jie Peng \email pengj95@mail2.sysu.edu.cn \\
		\addr School of Computer Science and Engineering\\
		Sun Yat-Sen University\\
		Guangzhou, Guangdong 510006, China
		\AND
		\name Weiyu Li \email weiyuli@g.harvard.edu \\
		\addr School of Engineering and Applied Science\\
		Harvard University\\
		Cambridge, MA 02138, USA
		\AND
		\name Stefan Vlaski \email s.vlaski@imperial.ac.uk \\
		\addr Department of Electrical and Electronic Engineering \\
		Imperial College London\\
		London SW7 2BT, UK
		\AND
		\name Qing Ling \email lingqing556@mail.sysu.edu.cn \\
		\addr School of Computer Science and Engineering\\
		Sun Yat-Sen University\\
		Guangzhou, Guangdong 510006, China
	}
	
	\editor{}
	
	\maketitle
	
	\begin{abstract}
		Robustness to malicious attacks is of paramount importance for distributed learning. Existing works usually consider the classical Byzantine attacks model, which assumes that some workers can send arbitrarily malicious messages to the server and disturb the aggregation steps of the distributed learning process. To defend against such worst-case Byzantine attacks, various robust aggregators have been proposed. They are proven to be effective and much superior to the often-used mean aggregator. In this paper, however, we demonstrate that the robust aggregators are too conservative for a class of weak but practical malicious attacks, as known as label poisoning attacks, where the sample labels of some workers are poisoned. Surprisingly, we are able to show that the mean aggregator is more robust than the state-of-the-art robust aggregators in theory, given that the distributed data are sufficiently heterogeneous. In fact, the learning error of the mean aggregator is proven to be order-optimal in this case. Experimental results corroborate our theoretical findings, showing the superiority of the mean aggregator under label poisoning attacks.
	\end{abstract}
	
	\begin{keywords} distributed learning, Byzantine attacks, label poisoning attacks 
	\end{keywords}
	
	\section{Introduction}
	
	With the rising and rapid development of large machine learning models, distributed learning has attracted intensive research attention due to its provable effectiveness in solving large-scale problems \citep{verbraeken2020survey, li2020communication}. In distributed learning, there often exist one parameter server (called server thereafter) owning the global model and some computation devices (called workers thereafter) owning the local data. In the training process, the server sends the global model to the workers, and the workers use their local data to compute the local stochastic gradients or momenta on the global model and send them back to the server. Upon receiving the messages from all workers, the server aggregates them and uses the aggregated stochastic gradient or momentum to update the global model. After the training process, the trained global model is evaluated on the testing data. An essential component of distributed learning is federated learning \citep{mcmahan2017communication, yang2019federated, beltran2023decentralized, ye2023heterogeneous, fraboni2023general}, which is particularly favorable in terms of privacy preservation.
	
	However, the distributed nature of such a server-worker architecture is vulnerable to malicious attacks during the learning process \citep{lewis2023attacks}. Due to data corruptions, equipment failures, or cyber attacks, some workers may not follow the algorithmic protocol, and instead send incorrect messages to the server. Previous works often characterize these attacks by the classical Byzantine attacks model, which assumes that some workers can send arbitrarily malicious messages to the server so that the aggregation steps of the learning process are disturbed \citep{lamport1982byzantine}. For such worst-case Byzantine attacks, various robust aggregators have been proven effective and much superior to the mean aggregator \citep{chen2017distributed,xia2019faba,karimireddy2021learning,wu2023byzantine}.

	The malicious attacks encountered in reality, on the other hand, are often less destructive than the worst-case Byzantine attacks. For example, a distributed learning system may often suffer from label poisoning attacks, which are weak yet of practical interest. Considering a highly secure email system in a large organization (for example, government or university), if hackers (some users) aim to disturb the online training process of a spam detection model, one of the most effective ways for them is to mislabel received emails from ``spam'' to ``non-spam'', resulting in label poisoning attacks. Similar attacks may happen in fraudulent short message service (SMS) detection held by large communication corporations, too.
	
	To this end, in this paper, we consider label poisoning attacks where some workers have local data with poisoned labels and generate incorrect messages during the learning process. Under label poisoning attacks and with some mild assumptions, surprisingly we are able to show that the mean aggregator is more robust than the state-of-the-art robust aggregators in theory. To be specific, we prove that the mean aggregator has a better learning error bound than the robust aggregators (see Theorems \ref{thm:convergence with RAgg} and \ref{thm:convergence with Mean} in Section \ref{sec: convergence analysis}), given that the distributed data are sufficiently heterogeneous. The main contributions of this paper are summarized as follows.
	
	\textbf{C1)} To the best of our knowledge, our work is the first to investigate the robustness of the mean aggregator in distributed learning. Our work reveals an important fact that the mean aggregator is more robust than the existing robust aggregators under specific types of malicious attacks, which motivates us to rethink the usage of different aggregators within practical scenarios.
.
	
	\textbf{C2)} Under label poisoning attacks, we theoretically analyze the learning errors of the mean aggregator and the state-of-the-art robust aggregators. The results show that when the heterogeneity of the distributed data is large, the learning error of the mean aggregator is order-optimal regardless of the fraction of poisoned workers.
	
	\textbf{C3)} We empirically evaluate the performance of the mean aggregator and the state-of-the-art robust aggregators under label poisoning attacks. The experimental results fully support our theoretical findings.
	
	This paper significantly extends upon our previous conference paper \citep{peng2024mean}.
	First, \citet{peng2024mean} consider the distributed gradient descent algorithm, which ignores the stochastic gradient noise that is critical to distributed learning. To address this issue, we investigate distributed stochastic momentum as the backbone algorithm, and provide new theoretical analysis for distributed stochastic momentum with the mean aggregator or the robust aggregators. By properly handling the stochastic gradient noise, we establish the tight upper bounds and the lower bound for the learning error. These theoretical results recover those in \citet{peng2024mean} if the momentum coefficient is set to $1$ and the inner variance bound is $0$ (see Remark \ref{remak-10}). Second, we present new, comprehensive experimental results to compare the performance of various aggregators combined with the distributed stochastic momentum algorithm, validating our theoretical findings.
	
	\section{Related Works}

	Poisoning attacks can be categorized into targeted attacks and untargeted attacks; or model poisoning attacks and data poisoning attacks \citep{kairouz2021advances}. In this paper, we focus on the latter categorization.
	In model poisoning attacks, the malicious workers send arbitrarily poisoned models to the server, while data poisoning attacks yield poisoned messages by fabricating poisoned data at the malicious workers' side \citep{shejwalkar2022back}. Below we briefly review the related works of the two types of poisoning attacks in distributed learning, respectively.
	
	Under model poisoning attacks, most of the existing works design robust aggregators for aggregating local stochastic gradients of the workers and filter out the potentially poisoned messages. The existing robust aggregators include Krum \citep{blanchard2017machine}, geometric median \citep{chen2017distributed}, coordinate-wise median \citep{yin2018byzantine}, coordinate-wise trimmed-mean \citep{yin2018byzantine}, FABA \citep{xia2019faba}, centered clipping \citep{karimireddy2021learning}, VRMOM \citep{tu2021variance}, etc. The key idea behind these robust aggregators is to find a point that has bounded distance to the true stochastic gradient such that the learning error is under control. \citet{farhadkhani2022byzantine} and \citet{allouah2023fixing} propose a unified framework to analyze the performance of these robust aggregators under attacks.
	However, the above works do not consider the effect of the stochastic gradient noise which may provide a shelter for Byzantine attacks and increase the learning error. To address this issue, \citet{khanduri2019byzantine, wu2020federated, karimireddy2021learning, rammal2024communication, guerraoui2024byzantine}  propose to use the variance-reduction  and momentum techniques to alleviate the effect of the stochastic gradient noise and enhance the Byzantine-robustness.
	Though these methods work well when the data distributions are the same over the workers, their performance degrades when the data distributions become heterogeneous \citep{li2019rsa, karimireddy2021byzantine}. Therefore, \citet{li2019rsa} suggests using model aggregation rather than stochastic gradient aggregation to defend against model poisoning attacks in the heterogeneous case. \citet{karimireddy2021byzantine,peng2022byzantine,allouah2023fixing} propose to use the bucketing/resampling and nearest neighbor mixing techniques to reduce the heterogeneity of the messages, prior to aggregation.

	Some other works focus on asynchronous learning \citep{yang2023buffered} or decentralized learning without a server \citep{peng2021byzantine, he2022byzantine, wu2023byzantine}, under model poisoning attacks. Nevertheless, we focus on synchronous distributed learning with a server in this paper. 
	
	There are also a large amount of papers focusing on data poisoning attacks \citep{sun2019can,bagdasaryan2020backdoor,wang2020attack,rosenfeld2020certified, cina2024machine}. To defend against data poisoning attacks, the existing works use data sanitization to remove poisoned data \citep{steinhardt2017certified}, and prune activation units that are inactive on clean data \citep{liu2018fine}. For more defenses against data poisoning attacks, we refer the reader to the survey paper \citep{kairouz2021advances}.
	
	In practice, however, attacks may not necessarily behave as arbitrarily malicious as the above well-established works consider. Some weaker attacks models are structured; for example,  \citet{tavallali2022adversarial} considers the label poisoning attacks in which some workers mislabel their local data and compute the incorrect messages using those poisoned data. Specifically, \citet{tolpegin2020data,lin2021ml,jebreel2023fl, jebreel2024lfighter} consider the case where some workers flip the labels of their local data from source classes to target classes. Notably, label poisoning is a kind of data poisoning but not necessarily the worst-case attack, since label poisoning attacks fabricate the local data, yet only on the label level. 	

	It has been shown that the robust aggregators designed for model poisoning attacks can be applied to defend against the label poisoning attacks, as validated by \citet{fang2020local, karimireddy2021byzantine,gorbunov2022variance}. There also exist some works designing new robust aggregators based on specific properties of label poisoning.
	For example, the work of \citet{tavallali2022adversarial} proposes regularization-based defense to detect and exclude the samples with flipped labels in the training process. However, \citet{tavallali2022adversarial} requires to access a clean validation set, which has privacy concerns in distributed learning. Another work named as LFighter \citep{jebreel2024lfighter} is the state-of-the-art defense for label poisoning attacks in federated learning. \citet{jebreel2024lfighter} proposes to cluster the local gradients of all workers, identify the smaller and denser clusters as the potentially poisoned gradients, and discard them. The key idea of LFighter is that the difference between the stochastic gradients connected to the source and target output neurons of poisoned workers and regular workers becomes larger when the training process evolves. Therefore, we are able to identify the potentially poisoned stochastic gradients. However, LFighter only works well when data distributions at different workers are similar. If the heterogeneity of the distributed data is large, the performance of LFighter degrades, as we will show in Section \ref{sec: experiments}.
	
	Though the recent works of \citet{karimireddy2021byzantine} and \citet{farhadkhani2024relevance} respectively prove the optimality of certain robust aggregators for model poisoning attacks and data poisoning attacks, they only consider the case that the poisoned workers can cause unbounded disturbances to the learning process. In contrast, we consider the case that the disturbances caused by the poisoned workers is bounded and prove the optimality of the mean aggregator for label poisoning attacks when the distributed data are sufficiently heterogeneous, and experimentally validate our theoretical findings.

	The recent work of \citet{shejwalkar2022back}, similar to our findings, reveals the robustness of the mean aggregator under poisoning attacks in production federated learning systems. Nevertheless, their study is restrictive in terms of the poisoning ratio (for example, less than 0.1\% workers are poisoned while we can afford 10\% in the numerical experiments) and lacks theoretical analysis. In contrast, we provide both theoretical analysis and experimental validations.
	
	In conclusion, our work is the first one to investigate the robustness of the mean aggregator in distributed learning. It reveals an important fact that the robust aggregators cannot always outperform the mean aggregator under specific attacks, promoting us to rethink the application scenarios for the use of robust aggregators.
	
	\section{Problem Formulation}
	
	Consider a distributed learning system with one server and $W$ workers. Denote the set of workers as $\mathcal{W}$ with $|\mathcal{W}|=W$, and the set of regular workers as $\mathcal{R}$  with $|\mathcal{R}| = R$. Note that the number and identities of the regular workers are unknown. Our goal is to solve the following distributed learning problem defined over the regular workers in $\mathcal{R}$, at the presence of the set of poisoned workers $\mathcal{W}\setminus\mathcal{R}$:
	\begin{align}\label{problem}
		\min_{x \in \R^D} f(x) &\triangleq \frac{1}{R} \sum_{w \in \mathcal{R}} f_w(x), \\
		\text{with } \  f_w(x) &\triangleq \frac{1}{J}\sum_{j=1}^{J} f_{w, j}(x), \ \ \forall w \in \mathcal{R}. \nonumber
	\end{align}
	Here, $x \in \R^D$ is the global model and $f_w(x)$ is the local cost of worker $w \in \mathcal{R}$ that averages the costs $f_{w, j}(x)$ of $J$ samples. Without loss of generality, we assume that all workers have the same number of samples $J$.
	
	We begin with characterizing the behaviors of the poisoned workers in $\mathcal{W}\setminus\mathcal{R}$. Different to the classical Byzantine attacks model that assumes some workers to disobey the algorithmic protocol and send arbitrarily malicious messages to the server \citep{lamport1982byzantine}, here we assume the poisoned workers to: (i) have samples with poisoned labels; (ii) exactly follow the algorithmic protocol during the distributed learning process. The formal definition is given as follows.

	\begin{definition}[\textbf{Label poisoning attacks}]\label{Def: delta-LPA}
		In solving \eqref{problem}, there exist a number of poisoned workers, whose local costs are in the same form as the regular workers but an arbitrary fraction of sample labels are poisoned. Nevertheless, these poisoned workers exactly follow the algorithmic protocol during the distributed learning process.
	\end{definition}
	
	We solve \eqref{problem} with the distributed stochastic momentum algorithm, which includes the popular distributed gradient descent and distributed stochastic gradient descent algorithms as special cases. At each iteration, each worker randomly accesses one local sample to compute a local stochastic gradient, updates its local momentum, and sends the local momentum to the server. Then, the server aggregates the local momenta of all workers. However, as we have emphasized, the number and identities of the regular workers are unknown, such that the server cannot distinguish the true local momenta from the regular workers and the poisoned local momenta from the poisoned workers. We call the true and poisoned local momenta as messages, which the server must judiciously aggregate.
	
	The distributed stochastic momentum algorithm works as follows. At iteration $t$, the server first broadcasts the global model $x^t$ to all workers. Then,
	each worker $w \in \mathcal{W}$ selects a sample index $i_w^t$ uniformly randomly from $\{1, \cdots, J\}$ and computes the corresponding stochastic gradient at the global model $x^t$. We denote the true stochastic gradient of regular worker $w \in \mathcal{R}$ as $\nabla f_{w, i_w^t}(x^t)$ and the poisoned stochastic gradient of poisoned worker $w \in \mathcal{W}\setminus\mathcal{R}$ as $\nabla \tilde{f}_{w, i_w^t}(x^t)$. Next, the workers update their local momenta and send to the server. For each regular worker $w \in \mathcal{R}$, its true local momentum is
	\begin{align}
		m_w^t = (1 - \alpha) m_w^{t-1} + \alpha \nabla f_{w, i_w^t}(x^t),
	\end{align}
	where $\alpha \in [0, 1]$ is the momentum coefficient and $m_w^{-1}$ is initialized as $\nabla f_{w, i_w^0}(x^0)$. For each poisoned worker $w \in \mathcal{W} \setminus \mathcal{R}$, its poisoned local momentum is
	\begin{align}
		\tilde{m}_w^t = (1 - \alpha) \tilde{m}_w^{t-1} + \alpha \nabla \tilde{f}_{w, i_w^t} (x^t),
	\end{align}
	where $\tilde{m}_w^{-1}$ is initialized as $\nabla \tilde{f}_{w, i_w^0}(x_w^0)$.
	For notational convenience, we denote the message sent by worker $w$, no matter true or poisoned, as

	\begin{align}
		\hat{m}_w^t = \left\{
		\begin{aligned}
			& m_w^t, && w \in \mathcal{R}, \\
			& \tilde{m}_w^t, && w \in \mathcal{W}\setminus\mathcal{R}.
		\end{aligned}
		\right.
	\end{align}

	Upon receiving all messages $\{ \hat{m}_w^t: w\in\mathcal{W}\}$, the server may choose to aggregate them with a robust aggregator $\text{RAgg}(\cdot)$ and then move a step along the negative direction, as
	\begin{align}\label{alg: distributed-SGD-with-agg}
		x^{t+1} = x^t - \gamma \cdot \text{RAgg}(\{ \hat{m}_w^t : w\in\mathcal{W}\}),
	\end{align}
	where $\gamma>0$ is the step size. State-of-the-art robust aggregators include trimmed mean (TriMean) \citep{chen2017distributed}, FABA \citep{xia2019faba},  centered clipping (CC) \citep{karimireddy2021learning}, to name a few.
	
	In this paper, we argue that the mean aggregator $\text{Mean}(\cdot)$, which is often viewed as vulnerable, is more robust than the state-of-the-art robust aggregators under label poisoning attacks. With the mean aggregator, the update is
	\begin{align}\label{alg: distributed-SGD}
		x^{t+1} = x^t - \gamma \cdot \text{Mean}(\{ \hat{m}_w^t : w\in\mathcal{W}\}),
	\end{align}
	where
	\begin{align}\label{eq: Mean}
		\text{Mean}(\{ \hat{m}_w^t : w\in\mathcal{W}\}) \triangleq \frac{1}{W} \sum_{w\in\mathcal{W}} \hat{m}_w^t.
	\end{align}

	We summarize the distributed stochastic momentum algorithm with different aggregators in Algorithm \ref{algorithm: 1}. Note that when the momentum coefficient $\alpha=1$, it reduces to the popular distributed stochastic gradient descent algorithm. Further, if at each iteration, each worker accesses all $J$ samples to compute the local gradient other than stochastic gradient, the algorithm becomes distributed gradient descent.
	
	\begin{algorithm}
		\caption{}
		\label{algorithm: 1}
		{\bf Input:} Initializations $x^0 \in \R^D$, $m_w^{-1} = \nabla f_{w, i_w^0}(x^0)$ if $w \in \mathcal{R}$, $\tilde{m}_w^{-1} = \nabla \tilde{f}_{w, i_w^0}(x_w^0)$ if $w \in \mathcal{W}\setminus\mathcal{R}$, with $i_w^0$ being uniformly randomly sampled from $\{1, \cdots, J\}$; step size $\gamma $; momentum coefficient $\alpha$; number of overall iterations $T$.
		\begin{algorithmic}[1]
			\For{$t = 0, 1, \cdots, T-1$}
			\State Server broadcasts $x^t$ to all workers.
			\State Regular worker $w \in \mathcal{R}$ uniformly randomly samples $i_w^t$ from $\{1, \cdots, J\}$, computes
			
			$\nabla f_{w, i_w^t}(x^t)$, updates $m_w^t = (1 - \alpha) m_w^{t-1} +  \alpha \nabla f_{w, i_w^t}(x^t)$, and sends $\hat{m}_w^t = m_w^t$ to server.

			\State Poisoned worker $w \in \mathcal{W}\setminus\mathcal{R}$ uniformly randomly samples $i_w^t$ from $\{1, \cdots, J\}$, computes
			
			$\nabla \tilde{f}_{w, i_w^t}(x^t)$, updates $\tilde{m}_w^t = (1 - \alpha) \tilde{m}_w^{t-1} +  \alpha \nabla \tilde{f}_{w, i_w^t}(x^t)$, and sends $\hat{m}_w^t = \tilde{m}_w^t$ to server.
			
			\State Server receives $\{\hat{m}_w^t\}_{w \in \mathcal{W}}$ from all workers and updates $x^{t+1}$ according to \eqref{alg: distributed-SGD-with-agg} or \eqref{alg: distributed-SGD}.

			\EndFor \State \textbf{end for}

		\end{algorithmic}
	\end{algorithm}

	\section{Convergence Analysis }
	\label{sec: convergence analysis}
	In this section, we analyze the learning errors of Algorithm \ref{algorithm: 1} with different aggregators under label poisoning attacks. We make the following assumptions. For regular worker $w \in \mathcal{R}$, we respectively denote the true gradients of the local cost and the $j$-th sample cost as $\nabla f_w(\cdot)$ and $\nabla f_{w,j}(\cdot)$. For poisoned worker $w \in \mathcal{W} \setminus \mathcal{R}$, we respectively denote the poisoned gradients of the local cost and the $j$-th sample cost as $\nabla \tilde{f}_w(\cdot)$ and $\nabla \tilde{f}_{w,j}(\cdot)$.
	\begin{assumption}[\textbf{Lower boundedness}]\label{Assump: lower boundedness}
		The global cost $f(\cdot)$ is lower bounded by $f^* $, i.e., $f(x) \geq f^*$.	
	\end{assumption}
	\begin{assumption}[\textbf{Lipschitz continuous gradients}]\label{Assump: L-smooth}
		The global cost $f(\cdot)$ has $L$-Lipschitz continuous gradients. That is, for any $x, y \in \R^D$, it holds that
		\begin{align}
			\|\nabla f(x) - \nabla f(y)\| \leq L \|x - y\|.
		\end{align}
	\end{assumption}
	\begin{assumption}[\textbf{Bounded heterogeneity}]\label{Assump: bounded heterogeneity}
		For any $x \in \R^D$, the maximum distance between the local gradients of any regular worker $w\in \mathcal{R}$ and the global gradient is upper-bounded by $\xi$, i.e.,
		\begin{align}\label{eq: xi}
			\max_{w \in \mathcal{R} } \|\nabla f_w(x) - \nabla f(x)\| \leq \xi.
		\end{align}
	\end{assumption}
	
	\begin{assumption}[\textbf{Bounded inner variance}]\label{Assump: bounded variance}
		For any $x \in \R^D$, the variance of the local stochastic gradients of any worker $w \in \mathcal{W}$ with respect to the local gradient is upper-bounded by $\sigma^2$, i.e.,
		\begin{align}\label{eq: sigma}
			&\E_{i_w}\|\nabla f_{w, i_w}(x) - \nabla f_w(x)\|^2 \leq \sigma^2, \quad \forall w \in \mathcal{R}, \\
			&\E_{i_w}\|\nabla \tilde{f}_{w, i_w}(x) - \nabla \tilde{f}_w(x)\|^2 \leq \sigma^2, \quad \forall w \in \mathcal{W} \setminus \mathcal{R},
		\end{align}
		where $i_w$ denotes a sample index uniformly randomly selected from $\{1, \cdots, J\}$.
	\end{assumption}

	Assumptions \ref{Assump: lower boundedness}, \ref{Assump: L-smooth}, \ref{Assump: bounded heterogeneity} and \ref{Assump: bounded variance} are all common in the analysis of distributed first-order stochastic algorithms. In particular, Assumption \ref{Assump: bounded heterogeneity} characterizes the heterogeneity of the distributed data across the regular workers; larger $\xi$ means higher heterogeneity. Assumption \ref{Assump: bounded variance} is made for both regular and poisoned workers. This is reasonable since the poisoned workers only poison their local labels while keep local features clean, such that the variances of poisoned local stochastic gradients do not drastically change. We will  validate Assumption \ref{Assump: bounded variance} with numerical experiments in Appendix \ref{sec: Appendix G}.

	\begin{assumption}[\textbf{Bounded disturbances of poisoned local gradients}]\label{Assump: bounded effect of poisoned local gradients}
		For any $x \in \R^D$, the maximum distance between the poisoned local gradients of poisoned workers $w \in \mathcal{W}\setminus\mathcal{R}$ and the global gradient is upper-bounded by $A$, i.e.,
		\begin{align}\label{eq: bound of A-2}
			\max_{w \in \mathcal{W}\setminus\mathcal{R}} \|\nabla \tilde{f}_{w}(x) - \nabla f(x)\| \leq A.
		\end{align}
	\end{assumption}
	
	Assumption \ref{Assump: bounded effect of poisoned local gradients} bounds the disturbances caused by the poisoned workers. This assumption does not hold for the worst-case Byzantine attacks model, where the disturbances caused by the Byzantine workers can be arbitrary. However, under label poisoning attacks, we prove that this assumption holds for distributed softmax regression as follows. We will also demonstrate with numerical experiments that this assumption holds naturally in training neural networks.
	
	\subsection{Justification of Assumption \ref{Assump: bounded effect of poisoned local gradients}}
	\label{subsec: reasonableness of bounded gradient}

	\textbf{Example: Distributed softmax regression under label poisoning attacks.} Distributed softmax regression is common for classification tasks, where the local cost of worker $w \in \mathcal{R}$ is in the form of

	\begin{align}\label{ex1: local cost function of softmax regression}
		f_{w}(x) = \frac{1}{J}\sum_{j=1}^{J} f_{w, j}(x), ~ \text{where} ~
		f_{w, j}(x) =  - \sum_{k=1}^{K} \bm{1}\{b^{(w, j)} = k\} \log \frac{\exp(x_k^Ta^{(w, j)})}{\sum_{l=1}^K \exp(x_l^T a^{(w, j)})}.
	\end{align}
	In \eqref{ex1: local cost function of softmax regression}, $K$ stands for the number of classes; $ (a^{(w, j)}, b^{(w, j)})$ represents the $j$-th sample of worker $w$ with $a^{(w, j)} \in \R^d$ and $b^{(w, j)} \in \R$ being the feature and the label, respectively; $\bm{1}\{b ^{(w, j)}= k\}$ is the indicator function that outputs $1$ if $b^{(w, j)} =k$ and $0$ otherwise; $x_k \triangleq [x]_{kd: (k+1)d} \in \R^d$ is the $k$-th block of $x$.
	
	Note that for poisoned worker $w\in\mathcal{W}\setminus\mathcal{R}$, the labels are possibly changed from $b^{(w, j)}$ to $\tilde{b}^{(w,j)}$ for all $j\in\{1, \cdots, J\}$. Therefore, the local cost of worker $w \in \mathcal{W}\setminus\mathcal{R}$ is in the form of
	\begin{align}\label{ex1: local cost function of softmax regression-p}
		\tilde{f}_{w}(x) = \frac{1}{J}\sum_{j=1}^{J} \tilde{f}_{w, j}(x), ~ \text{where} ~
		\tilde{f}_{w, j}(x) =  - \sum_{k=1}^{K} \bm{1}\{\tilde{b}^{(w, j)} = k\} \log \frac{\exp(x_k^Ta^{(w, j)})}{\sum_{l=1}^K \exp(x_l^T a^{(w, j)})}.
	\end{align}

	It is straightforward to verify that the global cost $f(x) $ with the local costs $f_w(x)$ in \eqref{ex1: local cost function of softmax regression} satisfies Assumptions \ref{Assump: lower boundedness} and \ref{Assump: L-smooth}. Since the gradients of local costs $f_w(x)$ in \eqref{ex1: local cost function of softmax regression} are bounded (see Lemma \ref{lemma:bounded gradient in softmax regression} in Appendix \ref{sec: Appendix A}), the global cost $f(x)$ satisfies Assumptions \ref{Assump: bounded heterogeneity} and $\xi$ refers to the heterogeneity of the local costs $f_w(x)$. Further, since the gradients of the regular sample costs $f_{w, j}(\cdot)$ and the poisoned sample costs $\tilde{f}_{w, j}(\cdot)$ are all bounded (see Lemma \ref{lemma:bounded gradient in softmax regression} in Appendix \ref{sec: Appendix A}), the local cost of any worker $w \in \mathcal{W}$ satisfies Assumption \ref{Assump: bounded variance}. Next, we show that Assumption \ref{Assump: bounded effect of poisoned local gradients} also holds.
	
	\begin{lemma}\label{lemma: bound of softmax regression}
		Consider the distributed softmax regression problem where the local costs of the workers are in the forms of \eqref{ex1: local cost function of softmax regression} and \eqref{ex1: local cost function of softmax regression-p}. Therein, the poisoned workers are under label poisoning attacks, with arbitrary fractions of sample labels being poisoned. If $a^{(w,j)}$ is entry-wise non-negative 
		for all $w \in \mathcal{W}$ and all $j \in \{1, \cdots, J\}$, then Assumption \ref{Assump: bounded effect of poisoned local gradients} is satisfied with
		\begin{align}\label{eq: A}
			A  \leq  2\sqrt{K} \max_{w \in \mathcal{W}} \|\frac{1}{J} \sum_{j=1}^{J} a^{(w, j)}\|.
		\end{align}
	\end{lemma}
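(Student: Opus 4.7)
The plan is to bound $\|\nabla \tilde{f}_w(x)\|$ and $\|\nabla f(x)\|$ separately and then apply the triangle inequality, exploiting the entry-wise non-negativity of the features $a^{(w,j)}$ together with the fact that each softmax residual lies in $[-1,1]$.

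First I would compute the block structure of the softmax-regression gradient. For any sample $(a^{(w,j)}, b^{(w,j)})$, a direct calculation gives
\begin{align*}
\nabla_{x_k} f_{w,j}(x) = \left( \frac{\exp(x_k^T a^{(w,j)})}{\sum_{l=1}^K \exp(x_l^T a^{(w,j)})} - \bm{1}\{b^{(w,j)}=k\}\right) a^{(w,j)},
\end{align*}
and the analogous expression with $\tilde b^{(w,j)}$ for the poisoned worker. Abbreviating the softmax probability as $p_k^{(w,j)} \in [0,1]$, each scalar coefficient $c_k^{(w,j)} \triangleq p_k^{(w,j)} - \bm{1}\{\tilde b^{(w,j)}=k\}$ satisfies $|c_k^{(w,j)}| \leq 1$.

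Next I would show the key block-wise bound. For any worker $w \in \mathcal{W}\setminus\mathcal{R}$, since $a^{(w,j)}$ is entry-wise non-negative, the triangle inequality applied coordinate-by-coordinate yields
\begin{align*}
\left| \left[ \frac{1}{J} \sum_{j=1}^J c_k^{(w,j)} a^{(w,j)} \right]_i \right| \leq \frac{1}{J}\sum_{j=1}^J \left[ a^{(w,j)} \right]_i,
\end{align*}
so $\|\nabla_{x_k} \tilde f_w(x)\| \leq \|\frac{1}{J}\sum_{j=1}^J a^{(w,j)}\|$. Summing the squared block norms over the $K$ classes gives $\|\nabla \tilde f_w(x)\| \leq \sqrt{K}\,\|\frac{1}{J}\sum_{j=1}^J a^{(w,j)}\|$. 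The same argument applied first within each regular worker and then averaged over $\mathcal{R}$ (using $\mathcal{R}\subset\mathcal{W}$ and that an average is bounded by a maximum of norms) yields $\|\nabla f(x)\| \leq \sqrt{K}\, \max_{w \in \mathcal{W}} \|\frac{1}{J}\sum_{j=1}^J a^{(w,j)}\|$.

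Finally, I would combine these two estimates through the triangle inequality $\|\nabla \tilde f_w(x) - \nabla f(x)\| \leq \|\nabla \tilde f_w(x)\| + \|\nabla f(x)\|$ to arrive at the claimed constant $A \leq 2\sqrt{K}\,\max_{w \in \mathcal{W}} \|\frac{1}{J}\sum_{j=1}^J a^{(w,j)}\|$, uniformly in $x$. I do not expect a serious obstacle: the only subtle point is the invocation of entry-wise non-negativity of the features, which is needed because without it one cannot pull the absolute value inside the sum at the coordinate level and would instead pay an extra factor depending on $\|a^{(w,j)}\|$ sample-by-sample. Everything else is a routine application of convexity of norms and the fact that the softmax residual is bounded in magnitude by $1$.
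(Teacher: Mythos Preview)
Your proposal is correct and follows essentially the same approach as the paper: bound $\|\nabla \tilde f_w(x)\|$ and $\|\nabla f(x)\|$ separately by $\sqrt{K}\,\|\frac{1}{J}\sum_j a^{(w,j)}\|$ using the block structure of the softmax gradient together with the entry-wise non-negativity of the features, then combine. The only cosmetic difference is that the paper works with squares (using $\|a-b\|^2\le 2\|a\|^2+2\|b\|^2$ and invoking its Lemma~\ref{lemma:bounded gradient in softmax regression} for the per-worker bound), whereas you apply the triangle inequality directly; both routes yield the same constant $2\sqrt{K}$.
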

	
	\begin{proof}
			See Appendix \ref{sec: Appendix A.2}.
	\end{proof}
	
	Lemma \ref{lemma: bound of softmax regression} explicitly gives the upper bound of the smallest possible $A$ in Assumption \ref{Assump: bounded effect of poisoned local gradients}. Observe that the non-negativity assumption of $a^{(w, j)}$ naturally holds; for example, in image classification tasks, each entry of the feature stands for a pixel value. For other tasks, we can shift the features to meet this requirement.
	
	\noindent \textbf{Relation between Assumptions \ref{Assump: bounded heterogeneity} and \ref{Assump: bounded effect of poisoned local gradients}.} Interestingly, the constants $\xi$ and $A$ in Assumptions \ref{Assump: bounded heterogeneity} and \ref{Assump: bounded effect of poisoned local gradients} are tightly related. Similar to Lemma \ref{lemma: bound of softmax regression} that gives the upper bound of the smallest possible $A$ in Assumption \ref{Assump: bounded effect of poisoned local gradients}, for the distributed softmax regression problem, we can give the upper bound of the smallest possible $\xi$ in Assumption \ref{Assump: bounded heterogeneity} as follows.
	
	\begin{lemma}\label{lemma: upper bound of heterogeneity of softmax regression}
		Consider the distributed softmax regression problem where the local costs of the regular workers are in the form of \eqref{ex1: local cost function of softmax regression}. If $a^{(w,j)}$ is entry-wise non-negative for all $w \in \mathcal{R}$ and all $j \in \{1, \cdots, J\}$, then Assumption \ref{Assump: bounded heterogeneity} is satisfied with
		\begin{align}\label{eq: upper bound of xi}
			\xi \leq  2\sqrt{K} \max_{w \in \mathcal{R}}  \|\frac{1}{J} \sum_{j=1}^J a^{(w, j)}\|.
		\end{align}

	\end{lemma}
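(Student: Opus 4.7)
The plan is to mirror the proof of Lemma \ref{lemma: bound of softmax regression}, noting that the bound on $A$ came from a generic argument about the norm of softmax gradients that does not use anything specific about whether the labels are poisoned or not. Indeed, the quantity I want to control, $\|\nabla f_w(x)-\nabla f(x)\|$ for $w\in\mathcal{R}$, is structurally the same object as the one bounded in Lemma \ref{lemma: bound of softmax regression}, with $\tilde{f}_w$ replaced by $f_w$. So the overall strategy is: first establish a uniform bound on $\|\nabla f_w(x)\|$ of the form $\sqrt{K}\|\frac{1}{J}\sum_j a^{(w,j)}\|$ for every regular worker $w\in\mathcal{R}$, then apply the triangle inequality together with the convexity of the norm to the average that defines $\nabla f(x)$.

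The first step is a standard softmax computation. Writing $p_k^{(w,j)}(x)\triangleq \exp(x_k^T a^{(w,j)})/\sum_l \exp(x_l^T a^{(w,j)})$, the block-$k$ gradient reads
\begin{align*}
\nabla_{x_k} f_w(x) = \frac{1}{J}\sum_{j=1}^J \bigl(p_k^{(w,j)}(x) - \mathbf{1}\{b^{(w,j)}=k\}\bigr)\, a^{(w,j)}.
\end{align*}
The key observation is that the scalar coefficient $c_k^{(w,j)}\triangleq p_k^{(w,j)}(x)-\mathbf{1}\{b^{(w,j)}=k\}$ always lies in $[-1,1]$, because $p_k^{(w,j)}\in[0,1]$ and the indicator is in $\{0,1\}$. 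Combined with the entry-wise non-negativity of $a^{(w,j)}$, this gives, coordinate-wise,
\begin{align*}
\bigl|[\nabla_{x_k} f_w(x)]_i\bigr| \leq \frac{1}{J}\sum_{j=1}^J |c_k^{(w,j)}|\, a_i^{(w,j)} \leq \Bigl[\frac{1}{J}\sum_{j=1}^J a^{(w,j)}\Bigr]_i.
\end{align*}
Squaring, summing over coordinates, and then over the $K$ blocks yields $\|\nabla f_w(x)\|\leq \sqrt{K}\,\|\frac{1}{J}\sum_j a^{(w,j)}\|$ for every $w\in\mathcal{R}$.

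The second step is immediate: since $\nabla f(x)=\frac{1}{R}\sum_{w\in\mathcal{R}}\nabla f_w(x)$, the triangle inequality gives $\|\nabla f(x)\|\leq \max_{w\in\mathcal{R}}\|\nabla f_w(x)\|\leq \sqrt{K}\max_{w\in\mathcal{R}}\|\frac{1}{J}\sum_j a^{(w,j)}\|$, and one more application of the triangle inequality to $\|\nabla f_w(x)-\nabla f(x)\|\leq \|\nabla f_w(x)\|+\|\nabla f(x)\|$ produces the factor of $2$ and delivers the claimed bound \eqref{eq: upper bound of xi}.

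There is no real obstacle here; the only subtle point is the coordinate-wise bound using non-negativity of $a^{(w,j)}$, and this is exactly the same trick already invoked in Lemma \ref{lemma: bound of softmax regression}. In fact, I would simply cite the bound on $\|\nabla \tilde{f}_w(x)\|$ implicit in the proof of Lemma \ref{lemma: bound of softmax regression} (it depends only on having labels in $\{1,\dots,K\}$ and non-negative features, not on whether those labels are the true ones or the poisoned ones) and apply it twice, once to $\nabla f_w$ and once to each summand of $\nabla f$.
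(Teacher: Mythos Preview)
Your proposal is correct and follows essentially the same approach as the paper: the paper first invokes the gradient bound $\|\nabla f_w(x)\|\le\sqrt{K}\|\frac{1}{J}\sum_j a^{(w,j)}\|$ (stated as a separate lemma, with the same coordinate-wise argument using non-negativity that you sketch), and then applies the triangle inequality to $\|\nabla f_w(x)-\nabla f(x)\|$ to obtain the factor of $2$. The only cosmetic difference is that the paper writes the triangle-inequality step as $(1-\tfrac{1}{R})\|\nabla f_w\|+\tfrac{1}{R}\sum_{w'\neq w}\|\nabla f_{w'}\|\le 2\max_{w'}\|\nabla f_{w'}\|$, whereas you split into $\|\nabla f_w\|+\|\nabla f\|$ and bound $\|\nabla f\|$ separately; the two routes are equivalent.
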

	
		\begin{proof}
			See Appendix \ref{sec: Appendix A.3}.
	\end{proof}
	
	In particular, in the sufficiently heterogeneous case that each regular worker only has the samples from one class and the samples from one class only belong to one regular worker, the constant $\xi$ is in the same order of $ \max_{w \in \mathcal{R}}  \|\frac{1}{J} \sum_{j=1}^J a^{(w, j)}\|$ (see Lemma \ref{lemma: lower and upper bound of heterogeneity of softmax regression} in Appendix \ref{sec: Appendix A}). Further, if the feature norms of the regular and poisoned workers have similar magnitudes, which generally holds in practice, then $\max_{w \in \mathcal{R}}  \|\frac{1}{J} \sum_{j=1}^J a^{(w, j)}\|$ is in the same order as $\max_{w \in \mathcal{W}}  \|\frac{1}{J} \sum_{j=1}^J a^{(w, j)}\|$. Hence, we can conclude that $A = O(\xi)$ when the distributed data are sufficiently heterogeneous. This conclusion will be useful in our ensuing analysis.

	\subsection{Main Results}
	
	To analyze the learning errors of Algorithm \ref{algorithm: 1} with robust aggregators, we need to characterize the approximation abilities of the robust aggregators, namely, how close their outputs are to the average of the messages from the regular workers. This gives rise to the definition of $\rho$-robust aggregator \citep{wu2023byzantine,dong2023byzantine}.
	\begin{definition}[\textbf{$\rho$-robust aggregator}]\label{Def: (delta_max, rho)-RAR}
		Consider any $W$ messages $y_1, y_2, \cdots, y_W \in \R^D$, among which $R$ messages are from regular workers $w \in \mathcal{R}$. An aggregator $\text{RAgg}(\cdot)$ is said to be a $\rho$-robust aggregator if there exists a contraction constant $\rho \geq 0$ such that
		\begin{align}
			\| \text{RAgg}(\{y_1, \cdots, y_W\}) - \bar{y}\| \leq \rho \cdot \max_{w \in \mathcal{R}} \|y_w - \bar{y} \|,
		\end{align}	
		where $\bar{y} = \frac{1}{R} \sum_{w \in \mathcal{R}} y_w$ is the average message of the regular workers.
	\end{definition}

	From Definition \ref{Def: (delta_max, rho)-RAR}, a small contraction constant $\rho$ means that the output of the robust aggregator is close to the average of the messages from the regular workers. The error is proportional to the heterogeneity of the messages from the regular workers, characterized by $\max_{w \in \mathcal{R}} \|y_w - \bar{y}\|$.
	
	However, since a robust aggregator cannot distinguish the regular and poisoned workers, $\rho$ is unable to be arbitrarily close to $0$. Additionally, when the messages from the poisoned workers are majority, there is no guarantee to satisfy Definition \ref{Def: (delta_max, rho)-RAR}. Therefore, we have the following lemma.
	
	\begin{lemma}\label{lemma: lower bound of rho}
		Denote $\delta \triangleq 1-\frac{R}{W}$ as the fraction of the poisoned workers. Then a $\rho$-robust aggregator exists only if $\delta<\frac12$ and $\rho\ge\min\{\frac{\delta}{1 - 2\delta},1\}$.
	\end{lemma}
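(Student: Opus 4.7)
The plan is to obtain both claims by a single style of adversarial construction: exhibit a message tuple $(y_1,\dots,y_W)$ together with two different candidate regular sets $\mathcal{R}_1,\mathcal{R}_2$, both of size $R$, that are consistent with the same observed input but correspond to very different averages $\bar{y}$. Since the aggregator is a deterministic function of the input tuple, it must produce a single output that satisfies the $\rho$-robust bound in both scenarios simultaneously; this forces either a contradiction (ruling out certain values of $\delta$) or a numerical lower bound on $\rho$.

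For the necessity of $\delta<\tfrac{1}{2}$: when $\delta\geq\tfrac{1}{2}$ we have $R\leq W/2$, so disjoint index sets $\mathcal{R}_1,\mathcal{R}_2\subseteq\{1,\dots,W\}$ of size $R$ exist. I would pick any nonzero $v\in\R^D$ and set $y_w=v$ on $\mathcal{R}_2$, $y_w=0$ elsewhere. In scenario~1 (regular set $\mathcal{R}_1$) every regular message is zero, so Definition~\ref{Def: (delta_max, rho)-RAR} forces $\text{RAgg}(\cdot)=0$; in scenario~2 (regular set $\mathcal{R}_2$) every regular message equals $v$, forcing $\text{RAgg}(\cdot)=v$. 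Identical inputs cannot produce two different outputs, contradiction.

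For the lower bound on $\rho$ when $\delta<\tfrac{1}{2}$: let $B\triangleq W-R$, so $B<R$, and partition $\{1,\dots,W\}$ into disjoint pieces $A,C,P$ with $|A|=R-B$ and $|C|=|P|=B$. Taking $y_w=0$ on $A\cup C$ and $y_w=v$ on $P$ for a nonzero $v$, I would compare scenario~1 (regular set $A\cup C$, hence $\bar{y}=0$ and zero maximum deviation, so $\text{RAgg}(\cdot)=0$) with scenario~2 (regular set $A\cup P$, hence $\bar{y}=\tfrac{B}{R}v=\tfrac{\delta}{1-\delta}v$ and $\max_{w\in\mathcal{R}}\|y_w-\bar{y}\|=\max\bigl\{\tfrac{\delta}{1-\delta},\,\tfrac{1-2\delta}{1-\delta}\bigr\}\|v\|$). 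Applying the $\rho$-robust inequality to scenario~2 with output $0$ gives $\tfrac{\delta}{1-\delta}\|v\|\leq\rho\cdot\max\bigl\{\tfrac{\delta}{1-\delta},\,\tfrac{1-2\delta}{1-\delta}\bigr\}\|v\|$, which splits as $\rho\geq\tfrac{\delta}{1-2\delta}$ when $\delta\leq\tfrac{1}{3}$ (the $1-2\delta$ term then dominates the max) and $\rho\geq 1$ when $\delta>\tfrac{1}{3}$ (the $\delta$ term then dominates); these combine to $\rho\geq\min\{\tfrac{\delta}{1-2\delta},\,1\}$.

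There is essentially no hard step here; the only piece of care is verifying which argument of the maximum dominates in scenario~2, a routine case split at $\delta=\tfrac{1}{3}$. The conceptual key is the construction of the ``camouflaged minority'': choosing the poisoned workers in $P$ to mimic a genuine minority subgroup of regular workers, so that relabeling which side of the split counts as regular changes $\bar{y}$ by a quantifiable amount while leaving the aggregator's view unchanged.
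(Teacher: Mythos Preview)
Your proposal is correct and follows essentially the same approach as the paper's proof: construct a single message tuple together with two candidate regular sets of size $R$, observe that the aggregator's output depends only on the tuple, and derive conflicting constraints from the $\rho$-robust inequality applied to each scenario. The paper phrases this as a ``rearrangement'' of the messages while keeping $\mathcal{R}=\{1,\dots,R\}$ fixed, whereas you keep the tuple fixed and relabel $\mathcal{R}$; these are equivalent formulations, and your computations (including the case split at $\delta=1/3$) match the paper's exactly.
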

	
		\begin{proof}
			See Appendix \ref{sec: Appendix B.1}.
	\end{proof}
	
	We prove that several state-of-the-art robust aggregators, such as TriMean \citep{chen2017distributed}, CC \citep{karimireddy2021learning} and FABA \citep{xia2019faba}, all satisfy Definition \ref{Def: (delta_max, rho)-RAR} when the fraction of poisoned workers is below their respective thresholds. Their corresponding contraction constants $\rho$ are given in Appendix \ref{sec: Appendix B}.
	\begin{remark}
		Our definition is similar to $(f, \kappa)$-robustness in \citet{allouah2023fixing}, while our heterogeneity measure is $\max_{w \in \mathcal{R}} \|y_w - \bar{y}\|$ instead of $\frac{1}{R}\sum_{w \in \mathcal{R}} \|y_w - \bar{y}\|^2$. Due to the fact $\max_{w \in \mathcal{R}} \|y_w - \bar{y}\|^2 \leq \sum_{w \in \mathcal{R}} \|y_w - \bar{y}\|^2$, our definition implies $(f, \kappa)$-robustness in \citet{allouah2023fixing}. Further, according to Propositions 8 and 9 in \citet{allouah2023fixing}, our definition also implies $(f, \lambda)$-resilient averaging and $(\delta_{\max}, c)$-ARAgg in \citet{farhadkhani2022byzantine} and \citet{karimireddy2021byzantine}, respectively. The lower bound of $\rho$ is determined by the fraction of the poisoned workers $\delta$. A smaller $\delta$ leads to a smaller lower bound of $\rho$, which aligns with our intuition. Similar results can be found in \citet{farhadkhani2022byzantine} and \citet{allouah2023fixing}.
	\end{remark}

	Thanks to the contraction property in Definition \ref{Def: (delta_max, rho)-RAR}, we can prove that the learning error of Algorithm \ref{algorithm: 1} with a $\rho$-robust aggregator is bounded under label poisoning attacks.

	\begin{theorem}\label{thm:convergence with RAgg}
		Consider Algorithm \ref{algorithm: 1} with a $\rho$-robust aggregator $\text{RAgg}(\cdot)$ to solve \eqref{problem} and suppose that Assumptions \ref{Assump: lower boundedness}, \ref{Assump: L-smooth}, \ref{Assump: bounded heterogeneity}, and \ref{Assump: bounded variance} hold.
		Under label poisoning attacks where the fraction of poisoned workers is $\delta \in [0, \frac{1}{2})$, if the step size is $\gamma = \min\Big\{O\Big(\sqrt{\frac{ LF^0 + \rho^2 \sigma^2}{TL^2 \sigma^2(\rho^2 + 1)}}\Big), \frac{1}{8L}\Big\} $,
		the momentum coefficient is $\alpha = 8L\gamma$, then we have
		\begin{align}\label{eq: thm1}
			& \frac{1}{T} \sum_{t=1}^{T} \E\|\nabla f(x^t)\|^2 \\
			= & O\Bigg(\rho^2 \xi^2 + \sqrt{\frac{( L F^0 + \rho^2 \sigma^2)(\rho^2 + 1)\sigma^2}{T}} + \frac{ LF^0 + (\rho^2 + 1)\sigma^2 + \rho^2 \xi^2}{T}\Bigg). \notag
		\end{align}
		where the expectation is taken over the algorithm's randomness and $F^0 \triangleq f(x^0) - f^*$.
	\end{theorem}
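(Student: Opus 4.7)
The plan is to follow the standard nonconvex descent-lemma template, modified to handle the aggregation error of the $\rho$-robust aggregator. Applying Assumption \ref{Assump: L-smooth} to the update $x^{t+1} = x^t - \gamma g^t$ with $g^t \triangleq \text{RAgg}(\{\hat m_w^t : w \in \mathcal{W}\})$ yields the usual descent inequality, and the main task becomes controlling $\E\|g^t - \nabla f(x^t)\|^2$. I would split this via the triangle inequality into (i) an aggregation error $\|g^t - \bar m^t\|$ where $\bar m^t \triangleq \frac{1}{R}\sum_{w\in\mathcal{R}} m_w^t$ is the mean of the regular workers' momenta, and (ii) a momentum-gradient error $\|\bar m^t - \nabla f(x^t)\|$.

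For term (i), Definition \ref{Def: (delta_max, rho)-RAR} gives $\|g^t - \bar m^t\| \le \rho \max_{w\in\mathcal{R}}\|m_w^t - \bar m^t\|$, and unrolling the momentum recursion $m_w^t = (1-\alpha)m_w^{t-1} + \alpha\nabla f_{w,i_w^t}(x^t)$ shows that $\max_{w\in\mathcal{R}}\|m_w^t - \bar m^t\|^2$ is, in expectation, bounded by an $O(\xi^2)$ bias from Assumption \ref{Assump: bounded heterogeneity} plus an $O(\sigma^2)$ noise contribution from Assumption \ref{Assump: bounded variance} (which does \emph{not} benefit from averaging over the $R$ regular workers because of the maximum). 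This produces a contribution of order $\rho^2(\xi^2 + \sigma^2)$ to the squared aggregation error. For term (ii), setting $e^t \triangleq \bar m^t - \nabla f(x^t)$ and manipulating the recursion for $\bar m^t$ together with $L$-smoothness, I would derive a contraction of the form
\[
\E\|e^t\|^2 \le (1-\alpha)\,\E\|e^{t-1}\|^2 + \tfrac{\alpha^2\sigma^2}{R} + \tfrac{L^2\gamma^2}{\alpha}\,\E\|g^{t-1}\|^2,
\]
where the cross term is controlled via Young's inequality with weight $1/\alpha$, and the $\sigma^2/R$ factor reflects averaging of independent stochastic noise across regular workers.

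Plugging the two bounds back into the descent inequality, choosing $\alpha = 8L\gamma$ so that the contraction of $e^t$ matches the momentum decay, and choosing $\gamma \le 1/(8L)$ so that the $\tfrac{L\gamma^2}{2}\|g^t\|^2$ term can be absorbed using a bound of the form $\E\|g^t\|^2 \lesssim \E\|\nabla f(x^t)\|^2 + \rho^2\xi^2 + \rho^2\sigma^2 + \E\|e^t\|^2$, I would telescope $f(x^{t+1}) - f(x^t)$ over $t = 0,\dots,T-1$. Using $f(x^0) - f^* = F^0$ on the left and dividing by $\gamma T$ yields a bound of the form
\[
\tfrac{1}{T}\sum_t \E\|\nabla f(x^t)\|^2 \lesssim \tfrac{F^0}{\gamma T} + \gamma L(\rho^2+1)\sigma^2 + \rho^2\xi^2 + \tfrac{(\rho^2+1)\sigma^2}{T}.
\]
The stated rate then follows by tuning $\gamma$ to balance the first two terms, which produces the $\sqrt{(LF^0 + \rho^2\sigma^2)(\rho^2+1)\sigma^2/T}$ contribution, with the stationary bias $\rho^2\xi^2$ and the $O(1/T)$ residual remaining unchanged.

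The main obstacle I anticipate is correctly separating the two roles that the inner-variance parameter $\sigma^2$ plays: in the worker-wise deviation $\max_{w\in\mathcal{R}}\|m_w^t - \bar m^t\|$ the noise does \emph{not} average over workers and therefore gets multiplied by $\rho^2$ in the final bound, whereas the same noise appearing inside the recursion for $e^t$ \emph{does} average and produces a $\sigma^2/R$ factor. Propagating these two noise contributions through the momentum dynamics without double-counting — and absorbing the $O(L^2\gamma^2/\alpha)\E\|g^{t-1}\|^2$ term into the descent via the choice $\alpha = 8L\gamma$ — is the delicate bookkeeping that determines the exact $(\rho^2+1)\sigma^2$ and $\rho^2\xi^2$ coefficients in the statement.
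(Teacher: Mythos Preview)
Your proposal is correct and follows essentially the same route as the paper: descent lemma, split $\|g^t-\nabla f(x^t)\|^2$ into an aggregation error (controlled by $\rho$-robustness, giving the $\rho^2\xi^2$ bias plus a non-averaged $\rho^2\sigma^2$ noise term) and a momentum-tracking error $e^t$ governed by a contraction recursion with an averaged $\alpha^2\sigma^2/R$ term, then telescope and tune $\gamma$. The only point you leave implicit is that the telescoping is carried out via a Lyapunov function of the form $\E[f(x^{t+1})] + c_1\E\|e^t\|^2 + c_2\E\|\nabla f(x^t)\|^2$ (the paper invokes \citet[Lemma~10]{karimireddy2021byzantine} for the $e^t$ recursion and then builds exactly such a potential), which is what lets the $(1-\Theta(\alpha))$ contraction of $\E\|e^t\|^2$ cancel against the descent inequality; once you make that explicit, the bookkeeping you describe goes through as stated.
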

	
	\begin{proof}
			See Appendix \ref{sec: Appendix C}.
	\end{proof}

	Interestingly, we are also able to prove that under label poisoning attacks, Algorithm \ref{algorithm: 1} with the mean aggregator has a bounded learning error.

	\begin{theorem}\label{thm:convergence with Mean}
		Consider Algorithm \ref{algorithm: 1} with the mean aggregator $\text{Mean}(\cdot)$ to solve \eqref{problem} and suppose that Assumptions \ref{Assump: lower boundedness}, \ref{Assump: L-smooth}, \ref{Assump: bounded variance}, and \ref{Assump: bounded effect of poisoned local gradients} hold.
		Under label poisoning attacks where the fraction of poisoned workers is $\delta \in [0, 1)$, if the step size is $ \gamma = \min \Big\{ O\Big(\sqrt{\frac{L F^0 + \delta^2 \sigma^2}{TL^2 \sigma^2 (\delta^2 + 1)}}\Big), \frac{1}{8L}\Big\}$, the momentum coefficient is $\alpha = 8L\gamma$, then we have
		\begin{align}\label{eq: convergence error of Mean}
			& \frac{1}{T} \sum_{t=1}^{T} \E\|\nabla f(x^t)\|^2 \\
			= & O\Bigg( \delta^2 A^2 + \sqrt{\frac{( LF^0 + \delta^2 \sigma^2)(\delta^2 + 1) \sigma^2}{T}} + \frac{L F^0 + (\delta^2 + 1) \sigma^2 + \delta^2 A^2}{T}\Bigg). \notag
		\end{align}
		where the expectation is taken over the algorithm's randomness and $F^0 \triangleq f(x^0) - f^*$.
	\end{theorem}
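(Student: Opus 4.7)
The plan is to mirror the proof of Theorem \ref{thm:convergence with RAgg}, replacing the $\rho$-robust contraction by a bias bound specific to the mean aggregator. The natural reference vector here is the \emph{full-population} average
$\nabla \bar{f}(x) \triangleq \frac{1}{W}\bigl(\sum_{w \in \mathcal{R}} \nabla f_w(x) + \sum_{w \in \mathcal{W}\setminus\mathcal{R}} \nabla \tilde{f}_w(x)\bigr)$,
toward which the aggregated momentum $\bar{m}^t \triangleq \text{Mean}(\{\hat{m}_w^t\}_{w\in\mathcal{W}})$ drifts in expectation. Using $R = (1-\delta)W$ and Assumption \ref{Assump: bounded effect of poisoned local gradients}, the key bias bound follows immediately:
$\|\nabla \bar{f}(x) - \nabla f(x)\| = \|\tfrac{1}{W}\sum_{w\in\mathcal{W}\setminus\mathcal{R}}(\nabla\tilde{f}_w(x) - \nabla f(x))\| \leq \delta A$,
which plays the role that $\rho\xi$ plays in Theorem \ref{thm:convergence with RAgg}.

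Next I would apply the $L$-smooth descent inequality
$f(x^{t+1}) \leq f(x^t) - \gamma\langle\nabla f(x^t),\bar{m}^t\rangle + \tfrac{L\gamma^2}{2}\|\bar{m}^t\|^2$,
split the inner product via $\pm\nabla f(x^t)$ and Young's inequality, and decompose the resulting gradient-tracking error as $\|\bar{m}^t-\nabla f(x^t)\|^2 \leq 2\delta^2 A^2 + 2\|\bar{m}^t-\nabla\bar{f}(x^t)\|^2$. The first piece produces the irreducible $\delta^2 A^2$ term of the final bound; the second is the momentum tracking error relative to the biased reference $\nabla\bar{f}$.

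To control the tracking error, I would subtract $\nabla\bar{f}(x^t)$ from the momentum recursion to obtain
$\bar{m}^t - \nabla\bar{f}(x^t) = (1-\alpha)\bigl(\bar{m}^{t-1}-\nabla\bar{f}(x^{t-1})\bigr) + (1-\alpha)\bigl(\nabla\bar{f}(x^{t-1})-\nabla\bar{f}(x^t)\bigr) + \alpha\bigl(\bar{g}^t - \nabla\bar{f}(x^t)\bigr)$,
where $\bar{g}^t$ is the per-iteration average of the workers' stochastic gradients. Smoothness bounds the middle difference by $L\gamma\|\bar{m}^{t-1}\|$, while Assumption \ref{Assump: bounded variance} combined with worker-wise independent sampling controls the last term's conditional variance by at most $\sigma^2$. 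A Young-inequality expansion then produces a contractive recursion of the form
$\E\|\bar{m}^t-\nabla\bar{f}(x^t)\|^2 \leq (1-\tfrac{\alpha}{2})\,\E\|\bar{m}^{t-1}-\nabla\bar{f}(x^{t-1})\|^2 + O(L^2\gamma^2/\alpha)\,\E\|\bar{m}^{t-1}\|^2 + \alpha^2\sigma^2$.

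Finally I would merge the descent inequality and the tracking recursion through a Lyapunov function $V^t = \E[f(x^t) - f^*] + \tau\,\E\|\bar{m}^{t-1}-\nabla\bar{f}(x^{t-1})\|^2$ with $\tau = \Theta(\gamma/\alpha)$, show that $V^{t+1} \leq V^t - \tfrac{\gamma}{4}\E\|\nabla f(x^t)\|^2 + O\bigl(\gamma\delta^2 A^2 + \gamma\alpha\sigma^2 + L\gamma^2\sigma^2\bigr)$ once $\gamma \leq 1/(8L)$ and $\alpha = 8L\gamma$, telescope from $t=1$ to $T$, and tune $\gamma$ as prescribed to balance the three error terms, yielding the claimed $\delta^2 A^2 + 1/\sqrt{T} + 1/T$ decomposition. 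The main obstacle is the Lyapunov coupling: $\tau$ must be large enough to absorb the $\|\bar{m}^{t-1}\|^2$ coefficient produced by the tracking recursion yet small enough that the per-iteration noise $\alpha^2\sigma^2$ does not inflate the leading constants, and the negative drift $-\tfrac{\gamma}{4}\E\|\nabla f(x^t)\|^2$ must survive after the substitution $\|\bar{m}^t\|^2 \leq 2\|\nabla f(x^t)\|^2 + 2\|\bar{m}^t-\nabla f(x^t)\|^2$. Tying $\alpha$ to $L\gamma$ is precisely what makes this balance possible; the same mechanism drives Theorem \ref{thm:convergence with RAgg} under the substitution $\rho\to\delta$, $\xi\to A$.
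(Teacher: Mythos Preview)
Your decomposition is different from the paper's and introduces a gap. You track the full-population momentum $\bar{m}^t=\text{Mean}(\{\hat{m}_w^t\})$ against the full-population gradient $\nabla\bar{f}(x^t)=\tfrac{1}{W}\bigl(\sum_{w\in\mathcal{R}}\nabla f_w(x^t)+\sum_{w\in\mathcal{W}\setminus\mathcal{R}}\nabla\tilde{f}_w(x^t)\bigr)$, and in the tracking recursion you invoke ``smoothness bounds the middle difference by $L\gamma\|\bar{m}^{t-1}\|$'', i.e.\ $\|\nabla\bar{f}(x^{t-1})-\nabla\bar{f}(x^t)\|\le L\|x^{t-1}-x^t\|$. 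But Assumption~\ref{Assump: L-smooth} only gives $L$-Lipschitz gradients for $f=\tfrac{1}{R}\sum_{w\in\mathcal{R}}f_w$; there is no smoothness assumption on the poisoned local costs $\tilde{f}_w$, hence none on $\bar{f}$. Your contractive recursion for $\E\|\bar{m}^t-\nabla\bar{f}(x^t)\|^2$ therefore cannot be closed under the stated hypotheses.

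The paper sidesteps this by choosing the \emph{regular-only} momentum average $\bar{m}^t=\tfrac{1}{R}\sum_{w\in\mathcal{R}}m_w^t$ as the reference and decomposing $\|m^t-\nabla f(x^t)\|^2\le 2\|m^t-\bar{m}^t\|^2+2\|\bar{m}^t-\nabla f(x^t)\|^2$, where $m^t=\text{Mean}(\{\hat{m}_w^t\})$. The tracking error $e^t=\bar{m}^t-\nabla f(x^t)$ then involves only $\nabla f$, so Assumption~\ref{Assump: L-smooth} suffices for the recursion (this is exactly the Lemma used in the proof of Theorem~\ref{thm:convergence with RAgg}). The ``aggregation error'' $\|m^t-\bar{m}^t\|^2=\tfrac{1}{W^2}\|\sum_{w\in\mathcal{W}\setminus\mathcal{R}}(\tilde{m}_w^t-\bar{m}^t)\|^2$ is bounded by splitting $\tilde{m}_w^t-\bar{m}^t$ into variance pieces (controlled by Assumption~\ref{Assump: bounded variance}) and the bias $\|\E[\tilde{m}_w^t]-\E[\bar{m}^t]\|^2$, which satisfies a contractive recursion with forcing term $\|\nabla\tilde{f}_w(x^t)-\nabla f(x^t)\|^2\le A^2$ (Assumption~\ref{Assump: bounded effect of poisoned local gradients}) and hence is at most $A^2$. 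Crucially, this last recursion never compares $\nabla\tilde{f}_w$ at two different iterates, so no smoothness of $\tilde{f}_w$ is needed. The fix for your argument is simply to move the reference from $\nabla\bar{f}$ to the regular-only average; the rest of your Lyapunov strategy then goes through essentially as in Theorem~\ref{thm:convergence with RAgg} with the substitution $\rho^2\xi^2\mapsto\delta^2 A^2$.
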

	
		\begin{proof}
			See Appendix \ref{sec: Appendix D}.
	\end{proof}
	
	Theorems \ref{thm:convergence with RAgg} and \ref{thm:convergence with Mean} demonstrate that Algorithm \ref{algorithm: 1} with both $\rho$-robust aggregators and the mean aggregator can sublinearly converge to neighborhoods of a first-order stationary point of \eqref{problem}, while the non-vanishing learning errors are $O(\rho^2\xi^2)$ for $\rho$-robust aggregators and $O(\delta^2 A^2)$ for the mean aggregator. It is worth noting that the constants within $O(\rho^2 \xi^2)$ for the $\rho$-robust aggregator and $O(\delta^2 A^2)$ for the mean aggregator are the same (see Theorem \ref{complete version of convergence with RAgg} in Appendix \ref{sec: Appendix C} and Theorem \ref{complete version of convergence with Mean} in Appendix \ref{sec: Appendix D}). We omit these constants here to present the theoretical results concisely. Observe that without the $O(\rho^2\xi^2)$ and $O(\delta^2 A^2)$ terms, the $O(\frac{1}{\sqrt{T}})$ convergence rates are optimal for first-order nonconvex stochastic optimization algorithms \citep{arjevani_lower_2023}.
	
	Before comparing the learning errors in Theorems \ref{thm:convergence with RAgg} and \ref{thm:convergence with Mean}, we first give the lower bound of the learning error for a class of identity-invariant algorithms.

	\begin{theorem}\label{thm:lower bound}
		Under label poisoning attacks with $\delta = 1-\frac{R}{W}$ fraction of poisoned workers, consider any algorithm running for $T$ iterations and generating $x^t$ at iteration $t$. Suppose that the output of the algorithm is invariant with respect to the identities of the workers. Then, there exist $R$ regular local functions $\{f_w(x): {w \in \mathcal{R}}\}$ and $W-R$ poisoned local functions $\{\tilde{f}_w(x): {w \in \mathcal{W}\setminus\mathcal{R}}\}$, which are all composed of $J$ sample costs $f_{w, j}(x)$ or $\tilde{f}_{w, j}(x)$ respectively, satisfying Assumptions \ref{Assump: lower boundedness}, \ref{Assump: L-smooth},  \ref{Assump: bounded heterogeneity}, \ref{Assump: bounded variance}, and \ref{Assump: bounded effect of poisoned local gradients} such that the iterates $\{x^t: t = 1, \cdots, T\}$ of the algorithm satisfy
		\begin{align}\label{eq: lower bound}
				\frac{1}{T} \sum_{t=1}^{T} \E\|\nabla f(x^t)\|^2 =  \Omega(\delta^2 \min\{A^2, \xi^2\}).
		\end{align}
				where the expectation is taken over the algorithm's randomness.
			\end{theorem}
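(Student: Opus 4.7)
The plan is a two-world indistinguishability (coupling) argument. I will construct two problem instances, A and B, in which every worker transmits the same gradient message as a function of the query point $x$, so an identity-invariant algorithm produces the same distribution of iterates $\{x^t\}$ in both worlds; meanwhile, the two global costs $f_A$ and $f_B$ will satisfy $\|\nabla f_A(x) - \nabla f_B(x)\| = \Omega(\delta \min\{A,\xi\})$ uniformly in $x$. A Young-type inequality then forces at least one of the two worlds to exhibit the claimed lower bound on the average squared gradient norm.

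Concretely, I work in $D=1$ with $J=1$ sample per worker (so Assumption \ref{Assump: bounded variance} holds trivially for any $\sigma^2 \ge 0$) and quadratic local costs of the form $f_w(x) = \tfrac{L}{2}(x - c_w)^2$, which are $L$-smooth and bounded below (Assumptions \ref{Assump: lower boundedness} and \ref{Assump: L-smooth}). Let $r := \tfrac{1}{2}\min\{A,\xi\}$ and $s := W - R$. For $\delta < 1/2$ (so $s \le R$), I use three groups: workers $1,\ldots,s$ with $c_w = -r/L$, workers $s+1,\ldots,R$ with $c_w = 0$, and workers $R+1,\ldots,W$ with $c_w = r/L$. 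World A declares $\{1,\ldots,R\}$ regular and $\{R+1,\ldots,W\}$ poisoned; world B keeps the middle group regular but swaps the labels of the two outer groups. Since the local cost of each worker is fixed irrespective of its label, the per-worker message is identical in the two worlds. A direct computation gives $\nabla f_A(x) - \nabla f_B(x) = -2sr/R$, and $s/R = \delta/(1-\delta) \ge \delta$ here. For $\delta \ge 1/2$, I use an analogous two-outer-group construction (two outer groups of size $R$ with $c_w = \mp r/L$, plus $W - 2R$ always-poisoned middle workers) and obtain $|\nabla f_A - \nabla f_B| = 2r$; since $\delta^2 = \Theta(1)$ in this regime, both cases yield a uniform separation of squared gradients of order $\delta^2 r^2$.

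The final step combines indistinguishability with averaging. Because the per-worker messages are pointwise identical across the two worlds, the iterate sequence has the same distribution in A and B. Then
\begin{align*}
\E\|\nabla f_A(x^t)\|^2 + \E\|\nabla f_B(x^t)\|^2 \ge \tfrac{1}{2}\,\E\|\nabla f_A(x^t) - \nabla f_B(x^t)\|^2 = \Omega(\delta^2 r^2).
\end{align*}
Averaging this over $t = 1,\ldots,T$, at least one of the two worlds satisfies $\tfrac{1}{T}\sum_{t=1}^T \E\|\nabla f(x^t)\|^2 = \Omega(\delta^2 r^2) = \Omega(\delta^2 \min\{A^2,\xi^2\})$, which is the claim.

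The main obstacle is the bookkeeping needed to verify Assumptions \ref{Assump: bounded heterogeneity} and \ref{Assump: bounded effect of poisoned local gradients} simultaneously in both worlds with the prescribed $\xi$ and $A$. The choice $r = \tfrac{1}{2}\min\{A,\xi\}$ is exactly what balances these two constraints: a short computation shows $\max_{w\in\mathcal{R}}\|\nabla f_w - \nabla f\| \le r \le \xi/2$ and $\max_{w \notin \mathcal{R}}\|\nabla \tilde f_w - \nabla f\| \le 2r \le A$ in each world. Beyond this, the split $\delta < 1/2$ vs.\ $\delta \ge 1/2$ is routine, since in the latter regime $\delta^2$ is bounded below by a constant and the two-group construction already furnishes the target scaling.
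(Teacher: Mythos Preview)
Your proposal is correct and follows essentially the same two-world indistinguishability argument as the paper: construct two instances with identical per-worker messages but different regular/poisoned assignments, invoke identity-invariance to force identical iterates, and use a constant gradient separation $\|\nabla f_A - \nabla f_B\| = \Omega(\delta\min\{A,\xi\})$ together with the parallelogram-type inequality to conclude. The paper's construction differs only cosmetically: it works in two coordinates with linear-plus-quadratic costs $f(x;k)=\tfrac{(1-\delta)c}{\sqrt{2}}[x]_k+\tfrac{L}{2}\|x\|^2$ indexed by two labels $k\in\{1,2\}$, whereas you work in one dimension with shifted quadratics and three centers; and you make an explicit case split on $\delta<1/2$ versus $\delta\ge 1/2$, while the paper's computation is written as if $\delta<1/2$ throughout. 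Your one-dimensional setup and the explicit handling of $\delta\ge 1/2$ are arguably cleaner, but the underlying idea and the final bound are the same.
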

			
				\begin{proof}
					See Appendix \ref{sec: Appendix E}.
			\end{proof}
			
		The identity-invariant property in Theorem \ref{thm:lower bound} means that, given any $W$ local costs of which $R$ are regular, the output of the algorithm is invariant with respect to which local costs are regular or poisoned. This property excludes the ``omniscient'' algorithms that know the identities of the workers and thus can exclude the local costs of the poisoned workers. In fact, all practical algorithms are identity-invariant, including Algorithm 1 with any $\rho$-robust aggregator or the mean aggregator. In \citet{karimireddy2021byzantine} and \citet{allouah2023fixing}, the authors implicitly confine their analyses to the algorithms that share the same identity-invariant property as stated in Theorem \ref{thm:lower bound}, resulting in comparable lower bounds to ours. On the other hand, in \citet{karimireddy2021learning}, the authors establish a lower bound for the algorithms with a stronger iteration-wise permutation-invariant property, which excludes our distributed stochastic momentum algorithm.

For any identity-invariant algorithm, it is impossible to fully eliminate the effect of malicious attacks, which results in the non-vanishing learning error as demonstrated in Theorem \ref{thm:lower bound}. When the disturbances caused by the poisoned workers are small such that  $A < \xi$, the lower bound in \eqref{eq: lower bound} becomes $\Omega(\delta^2A^2)$, matching the learning error of Algorithm \ref{algorithm: 1} with the mean aggregator in \eqref{eq: convergence error of Mean}. It implies that the learning error of the mean aggregator is order-optimal in the presence of small disturbances. On the other hand, when the disturbances caused by the poisoned worker are large such that $A \geq \xi$, the lower bound in \eqref{eq: lower bound} becomes $\Omega(\delta^2\xi^2)$. In Table \ref{Table:delta_max, rho}, we compare the learning errors for different aggregators, given large heterogeneity such that $A$ is at most the same order as $\xi$ (which holds when the distributed data are sufficiently heterogeneous, as we have discussed in Section \ref{subsec: reasonableness of bounded gradient}). 
			\begin{table}[!h]
				\centering
				\begin{tabular}{|c|c|c|}
					\hline
					&\\ [-8pt]
					Aggregator & Learning error \\
					\hline
					&\\ [-10pt]
					TriMean & $O (\frac{\delta^2\xi^2}{(1 - 2\delta)^2}) $ \\
					\hline
					&\\ [-8pt]
					CC  &$O( \delta\xi^2)$ \\[2pt]
					\hline
					&\\ [-10pt]
					FABA  &$O (\frac{\delta^2\xi^2}{(1 - 3\delta)^2}) $ \\
					\hline
					&\\ [-8pt]
					Mean & $O(\delta^2 \xi^2)$  \\
					\hline
					\hline
					&\\ [-8pt]
					Lower bound   & $\Omega(\delta^2 \xi^2)$ \\
					\hline
				\end{tabular}
				\caption{Learning errors of Algorithm \ref{algorithm: 1} with TriMean, CC, FABA and the mean aggregator when the heterogeneity of distributed data is sufficiently large such that $A$ is in the same order as $\xi$. The lower bound of the learning error is also given.
				}
				\label{Table:delta_max, rho}
			\end{table}
			
			According to Table \ref{Table:delta_max, rho}, we know that the learning errors of TriMean, FABA and the mean aggregator all match the lower bound in terms of the order, when $\delta$ is small. However, the learning errors of TriMean and FABA explode when $\delta$ approaches $\frac{1}{2}$ and $\frac{1}{3}$, respectively, while the mean aggregator is insensitive. Therefore, the learning error of the mean aggregator is order-optimal regardless of the fraction of poisoned workers. In addition, the learning error of the mean aggregator is smaller than that of CC by a magnitude of $\delta$.
			
					\begin{remark}
						\label{remak-10}
						Note that our backbone algorithm, distributed stochastic momentum, degenerates to distributed stochastic gradient descent (when $\alpha = 1$) and distributed gradient descent (when $\alpha = 1$ and $\sigma = 0$). Following our analysis, the non-vanishing learning errors of distributed stochastic gradient descent, when using a $\rho$-robust aggregators and the mean aggregator, are $O(\rho^2 \sigma^2 + \rho^2 \xi^2)$ and $O(\delta^2 \sigma^2 + \delta^2 A^2)$ respectively (substituting $\alpha = 1$ to the proofs of Theorems \ref{thm:convergence with RAgg} and \ref{thm:convergence with Mean} yields these results). When the data heterogeneity term $\xi^2$ and the disturbance term $A^2$ both dominate the variance of the stochastic gradients $\sigma^2$, our conclusions made in this paper remain valid. Further letting $\sigma = 0$, the non-vanishing learning errors of distributed gradient descent, when using
						a $\rho$-robust aggregator and the mean aggregator, are $O(\rho^2 \xi^2)$ and $O(\delta^2 A^2)$ respectively. This way, we can obtain the same results in Table \ref{Table:delta_max, rho}, as shown in \citet{peng2024mean}.
					\end{remark}

					\section{Numerical Experiments}
					\label{sec: experiments}
					In this section, we conduct numerical experiments to validate our theoretical findings and demonstrate the performance of Algorithm \ref{algorithm: 1} with the mean and robust aggregators under label poisoning attacks. The code is available at \url{https://github.com/pengj97/LPA}.
					
						\begin{table}[t]
						\centering
						\setlength{\tabcolsep}{2mm}{\begin{tabular}{|c|c|}
								\hline
								& \\ [-8pt]
								Layer Name & Layer size \\\hline
								& \\ [-8pt]
								Convolution + ReLU & 3 $\times$ 3 $\times$ 16 \\ \hline
								& \\ [-8pt]
								Max pool & 2 $\times$ 2 \\ \hline
								& \\ [-8pt]
								Convolution + ReLU & 3 $\times$ 3 $\times$ 32 \\ \hline
								& \\ [-8pt]
								Max pool & 2 $\times$ 2 \\ \hline
								& \\ [-8pt]
								Fully connected + ReLU & 128 \\ \hline
								& \\ [-8pt]
								Softmax & 10 \\ \hline
						\end{tabular}}
						\caption{Architecture of the convolutional neural network trained on the CIFAR10 dataset.}
						\label{table: CNN}
					\end{table}

					\begin{figure}
						\centering
						\includegraphics[scale=0.18]{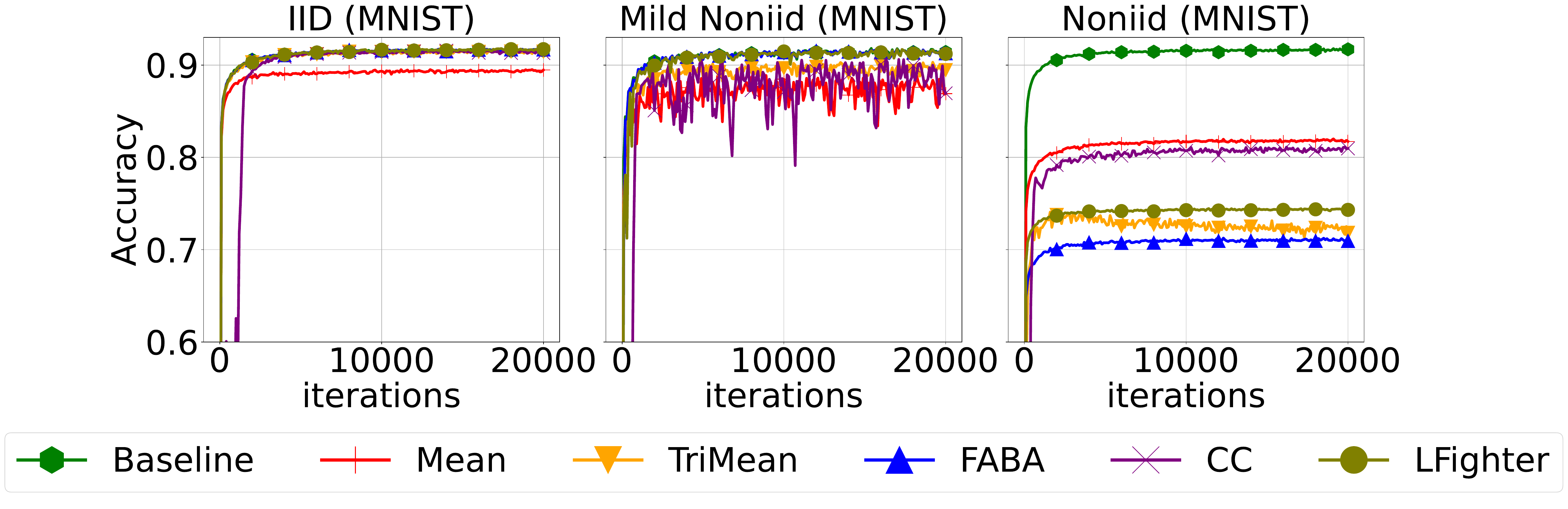}
						\caption{Accuracies of softmax regression on the MNIST dataset under static label flipping attacks.}
						\label{fig:SR_mnist_label_flipping}
					\end{figure}

					\begin{figure}
						\centering
						\includegraphics[scale=0.18]{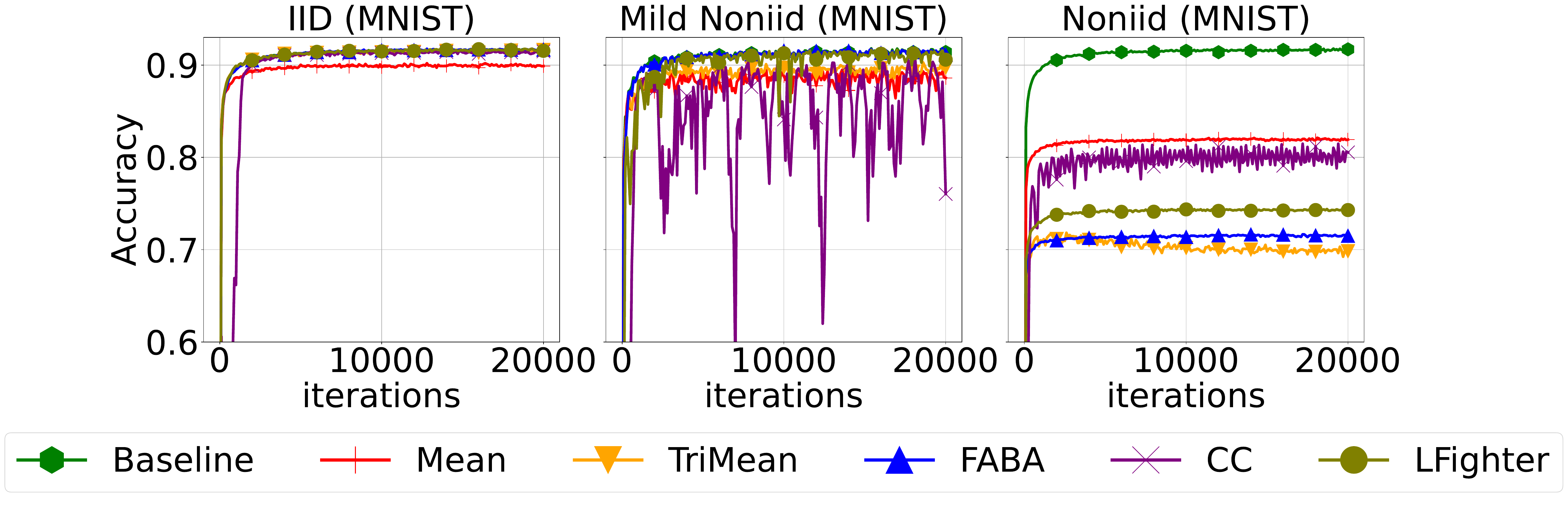}
						\caption{Accuracies of softmax regression on the MNIST dataset under dynamic label flipping attacks.}
						\label{fig:SR_mnist_furthest_label_flipping}
					\end{figure}
					
					\begin{figure}
						\centering
						\hspace{-20pt}
						\includegraphics[scale=0.18]{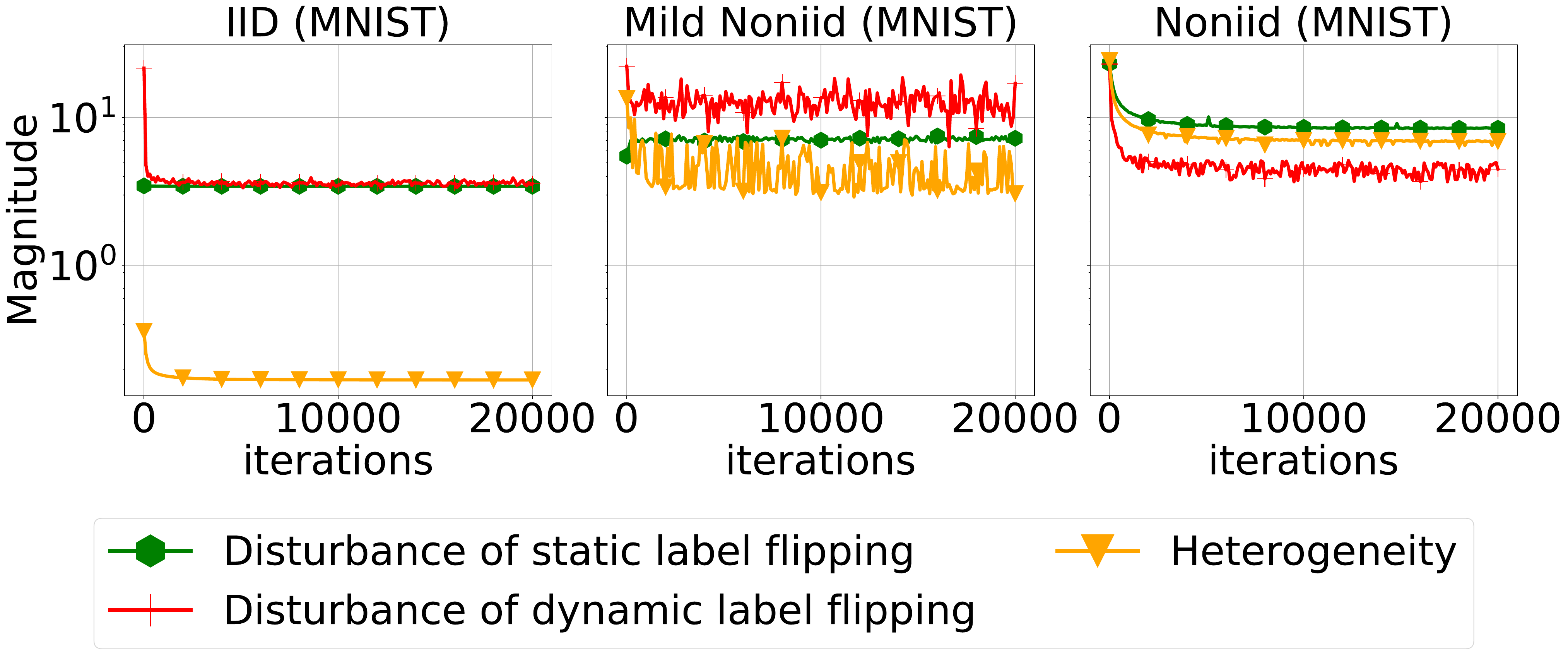}
						\caption{Heterogeneity of regular local gradients (the smallest $\xi$ satisfying Assumption \ref{Assump: bounded heterogeneity}) and disturbance of poisoned local gradients (the smallest $A$ satisfying Assumption \ref{Assump: bounded effect of poisoned local gradients}) in softmax regression on the MNIST dataset, under static label flipping and dynamic label flipping attacks.}
						\label{fig:SR_mnist_A_hetero}
					\end{figure}
					
					\subsection{Experimental Settings}
					\noindent \textbf{Datasets and partitions.} In the numerical experiments, we investigate a convex problem of softmax regression on the MNIST dataset. 
					We also consider two non-convex problems. The first one is to train two-layer perceptrons, in which each layer has 50 neurons and the activation function is ReLU, on the MNIST dataset. The second one is to train convolutional neural networks, whose architecture is given in Table \ref{table: CNN}, on the CIFAR10 dataset\footnote{Although the ReLU function is non-smooth, the training process rarely reaches the non-smooth point. Therefore, the results on the ReLU function are similar to those on a smooth function (which can be obtained by modifying the ReLU function and has Lipschitz continuous gradients).}.

					We setup $W = 10$ workers where $R=9$ workers are regular and the remaining one is poisoned. The impact of different fractions of poisoned workers is demonstrated in Appendix \ref{Impact of fraction of the poisoned workers}.
					We consider three data distributions: i.i.d., mild non-i.i.d. and non-i.i.d. cases. In the i.i.d. case, we uniformly randomly divide the training data among all workers. In the mild non-i.i.d. case, we divide the training data using the Dirichlet distribution with hyper-parameter $\beta=1$ by default \citep{hsu2019measuring}. In the non-i.i.d. case, we assign each class of the training data to one worker.
					
					\noindent \textbf{Label poisoning attacks.} We investigate two types of label poisoning attacks: static label flipping where the poisoned worker flips label $b$ to $9 - b$ with $b$ ranging from $0$ to $9$, and dynamic label flipping where the poisoned worker flips label $b$ to the least probable label with respect to the global model $x^t$ \citep{shejwalkar2022back}.

					\label{subsec: benchmark methods}
					\noindent \textbf{Aggregators to compare.} We are going to compare the mean aggregator with several representative $\rho$-robust aggregators, including TriMean, FABA, CC, and LFighter. The baseline is the mean aggregator without attacks. The step size is $\gamma= 0.01$ and the momentum coefficient is $\alpha = 0.1$.

					\subsection{Convex Case}
					\noindent \textbf{Classification accuracy.} We consider softmax regression on the MNIST dataset. The classification accuracies under static label flipping and dynamic label flipping attacks are shown in Figure \ref{fig:SR_mnist_label_flipping} and Figure \ref{fig:SR_mnist_furthest_label_flipping}, respectively. In the i.i.d. case, all methods perform  well and close to the baseline, but the mean aggregator has an apparently lower classification accuracy. In the mild non-i.i.d. case, FABA and LFighter are the best among all aggregators and the other aggregators have similar performance. In the non-i.i.d. case, since the heterogeneity is large, all aggregators are tremendously affected by the label poisoning attacks, and have gaps to the baseline in terms of classification accuracy. Notably, the mean aggregator performs the best among all aggregators in this case, which validates our theoretical results.

					\noindent \textbf{Heterogeneity of regular local gradients and disturbance of poisoned local gradients.} To further validate the reasonableness of Assumptions \ref{Assump: bounded heterogeneity} and \ref{Assump: bounded effect of poisoned local gradients}, as well as the correctness of our theoretical results in Section \ref{subsec: reasonableness of bounded gradient}, we compute the smallest $\xi$ and $A$ that satisfy Assumptions \ref{Assump: bounded heterogeneity} and \ref{Assump: bounded effect of poisoned local gradients} for the softmax regression problem. As shown in Figure \ref{fig:SR_mnist_A_hetero}, the disturbances of the poisoned local gradients, namely $A$, are bounded under both static label flipping and dynamic label flipping attacks, which corroborates the theoretical results in Lemma \ref{lemma: bound of softmax regression}. From i.i.d., mild non-i.i.d. to the non-i.i.d. case, the heterogeneity of the regular local gradients characterized by $\xi$ increases. Particularly, in the non-i.i.d. case, $\xi$ is close to $A$ under both static label flipping and dynamic  label flipping attacks, which aligns our discussions below Lemma \ref{lemma: upper bound of heterogeneity of softmax regression}. Recall Table \ref{Table:delta_max, rho} that shows when the heterogeneity is in the order of the disturbances caused by the label poisoning attacks, the learning error of the mean aggregator is order-optimal. This explains the results in Figures \ref{fig:SR_mnist_label_flipping} and \ref{fig:SR_mnist_furthest_label_flipping}.

					\subsection{Nonconvex Case}
					
					\noindent \textbf{Classification accuracy.} Next, we train two-layer perceptrons on the MNIST dataset and convolutional neural networks on the CIFAR10 dataset under static label flipping and dynamic label flipping attacks, as depicted in Figures \ref{fig:NeuralNetwork_label_flipping} and \ref{fig:NeuralNetwork_furthest_label_flipping}. In the i.i.d. case, all methods have good performance and are close to the baseline, except for CC that performs worse than the other aggregators on the CIFAR10 dataset under dynamic label flipping attacks. In the mild non-i.i.d. case and on the MNIST dataset, all methods perform well and are close to the baseline. On the CIFAR10 dataset, Mean, FABA and LFighter are the best and close to the baseline, CC and TriMean are worse, while TriMean is the worst and with an obvious gap under dynamic label flipping attacks. In the non-i.i.d. case, all methods are affected by the attacks and cannot reach the same classification accuracy of the baseline, but the mean aggregator is still the best. CC, FABA and LFighter are worse and TriMean fails.

					\noindent\textbf{Heterogeneity of regular local gradients and disturbance of poisoned local gradients.} We also calculate the smallest values of $\xi$ and $A$ satisfying Assumptions \ref{Assump: bounded heterogeneity} and \ref{Assump: bounded effect of poisoned local gradients}, respectively.
					As shown in Figure \ref{fig:NN_A_hetero}, the disturbance of poisoned local gradients measured by $A$ are bounded on the MNIST and CIFAR10 datasets under both static label flipping and dynamic label flipping attacks. From i.i.d., mild non-i.i.d. to the non-i.i.d. case, the heterogeneity of regular local gradients $\xi$ is increasing. In the non-i.i.d. case, $\xi$ is close to $A$.
										\begin{figure}
						\centering
						\includegraphics[scale=0.18]{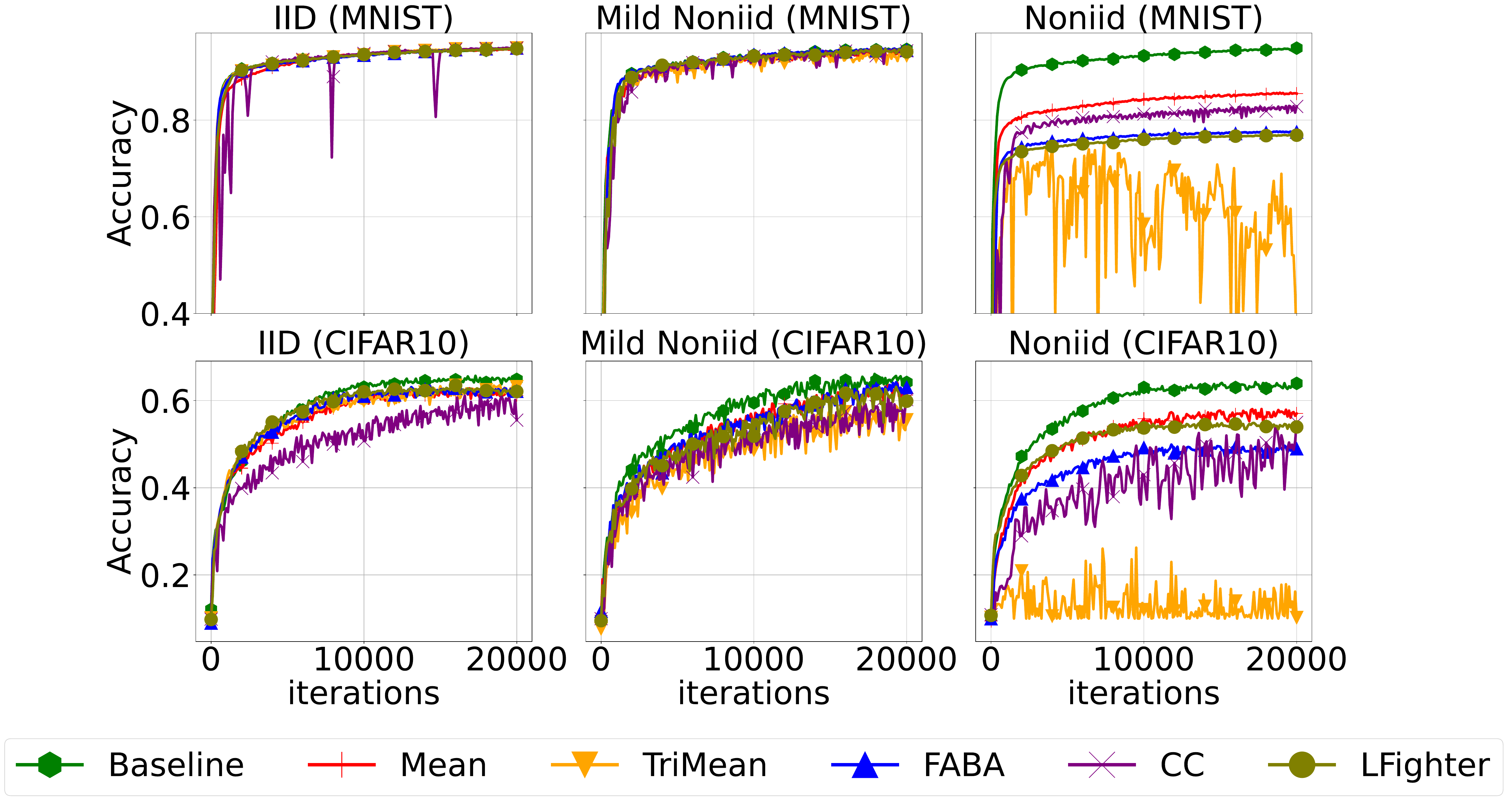}
						\caption{Accuracies of two-layer perceptrons on the MNIST dataset and convolutional neural networks on the CIFAR10 dataset under static label flipping attacks.}
						\label{fig:NeuralNetwork_label_flipping}
					\end{figure}
					\begin{figure}
						\centering
						\includegraphics[scale=0.18]{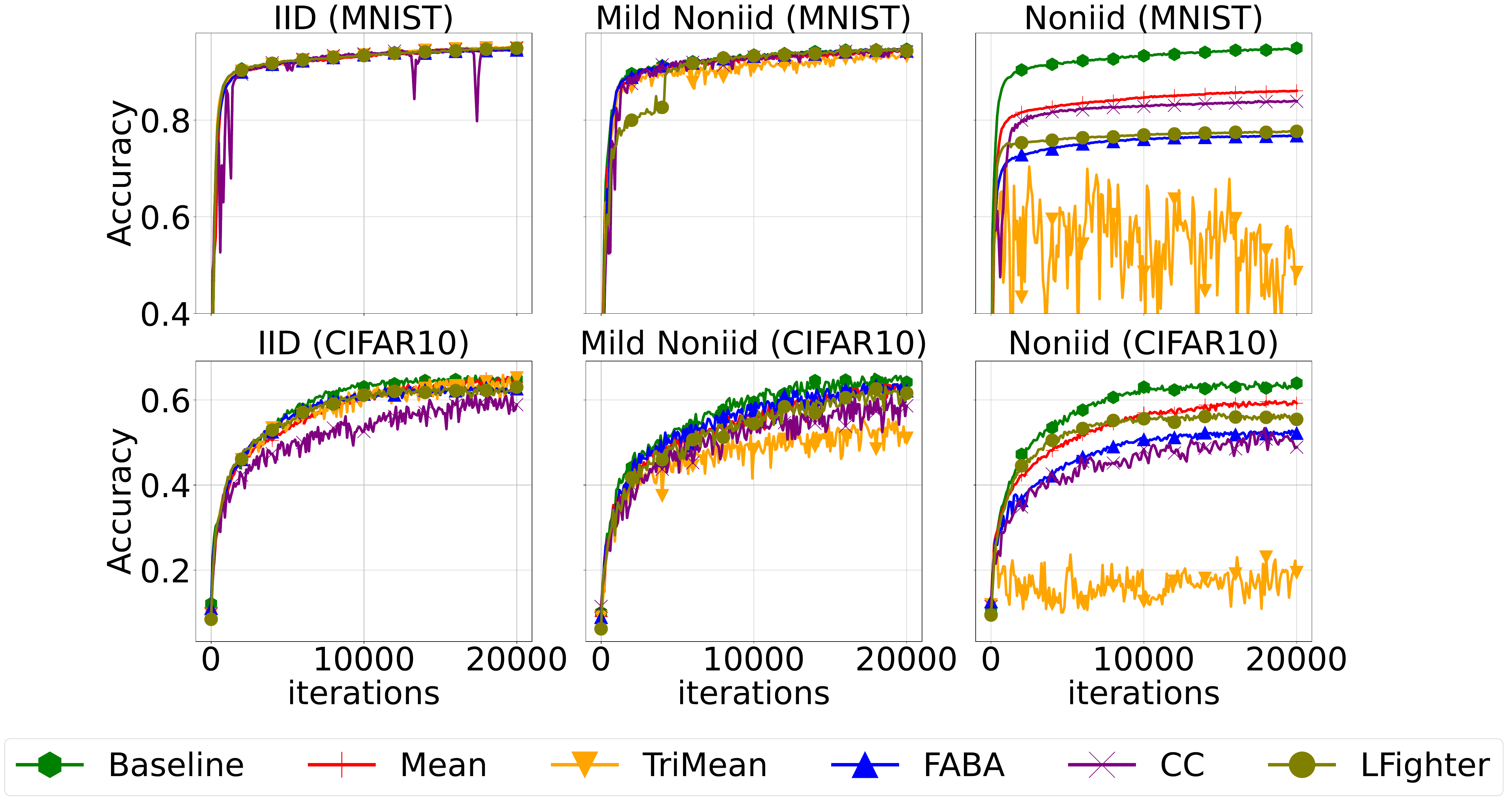}
						\caption{Accuracies of two-layer perceptrons on the MNIST dataset and convolutional neural networks on the CIFAR10 dataset under dynamic label flipping attacks.}
						\label{fig:NeuralNetwork_furthest_label_flipping}
					\end{figure}
					
					\begin{figure}
						\centering
						\hspace{-20pt}
						\includegraphics[scale=0.18]{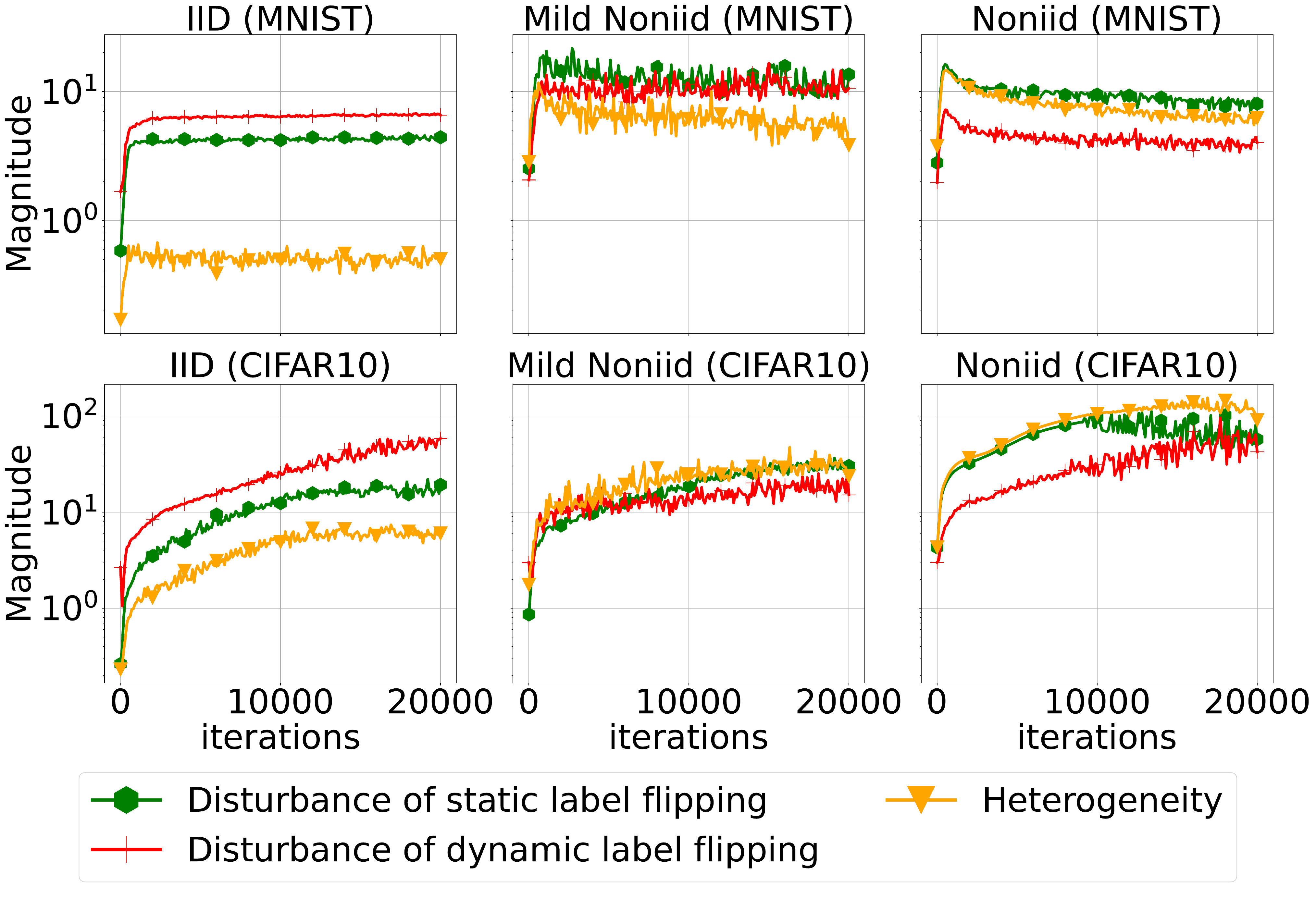}
						\caption{Heterogeneity of regular local gradients (the smallest $\xi$ satisfying Assumption \ref{Assump: bounded heterogeneity}) and disturbance of poisoned local gradients (the smallest $A$ satisfying Assumption \ref{Assump: bounded effect of poisoned local gradients}) in training two-layer perceptrons on the MNIST dataset and training convolutional neural networks on the CIFAR10 dataset, under static label flipping and dynamic label flipping attacks.}
						\label{fig:NN_A_hetero}
					\end{figure}
					
					\begin{figure}
						\centering
						\includegraphics[scale=0.32]{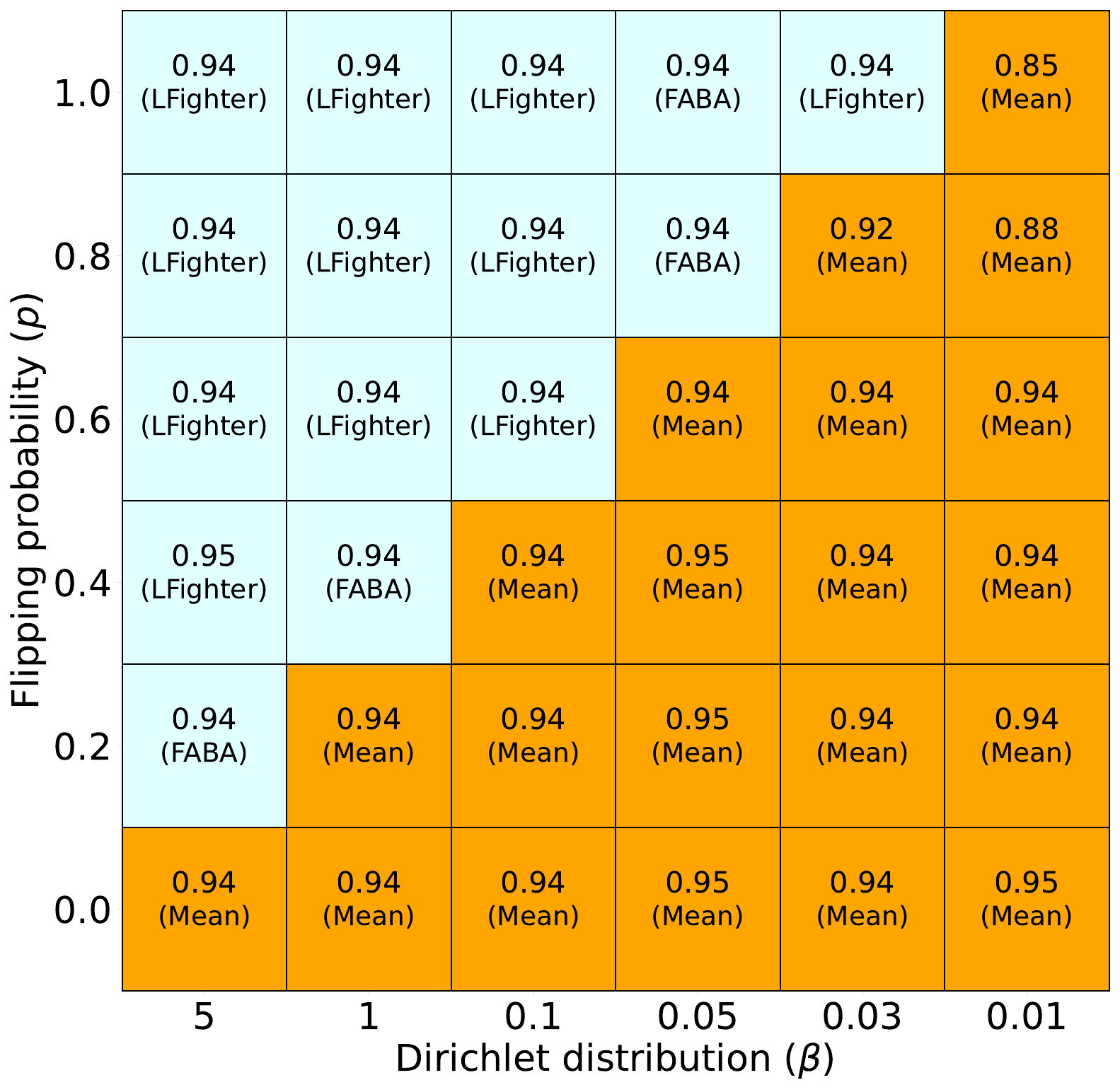}
						\caption{Best accuracies of trained two-layer perceptrons by all aggregators on the MNIST dataset under static label flipping attacks. Each block is associated with a hyper-parameter $\beta$ that characterizes the heterogeneity and the flipping probability $p$ that characterizes the attack strength. For each block, the best accuracy and the corresponding aggregator is marked. Orange means that the mean aggregator is the best.}
						\label{fig:NN_mnist_alpha_prob}
					\end{figure}	
					
					\subsection{Impacts of Heterogeneity and Attack Strengths}
					
					To further show the impacts of heterogeneity of data distributions and strengths of label poisoning attacks, we compute classification accuracies of the trained two-layer perceptrons on the MNIST dataset, varying the data distributions and the levels of label poisoning attacks. We employ the Dirichlet distribution by varying the hyper-parameter $\beta = \{5, 1, 0.1, 0.05, 0.03, 0.01\}$ to simulate various heterogeneity of data distributions, in which a smaller $\alpha$ corresponds to larger heterogeneity \citep{hsu2019measuring}. In addition, we let the poisoned worker apply static label flipping attacks by flipping labels with probability $p = \{0.0, 0.2, 0.4, 0.6, 0.8, 1.0\}$ to simulate different attack strengths. A larger flipping probability indicates stronger attacks.
					
					We present the best performance among all aggregators, and mark the corresponding best aggregator in Figure \ref{fig:NN_mnist_alpha_prob}. More details are in Table 3 of Appendix \ref{sec: Appendix F}. The mean aggregator outperforms the robust aggregators when the heterogeneity is large. For example, the mean aggregator exhibits superior performance when $\beta=0.01$ and the flipping probability $p = \{0.0, 0.2, 0.4, 0.6, 0.8, 1.0\}$, as well as when $\beta = 0.03$ and $p = \{0.0, 0.2, 0.4, 0.6, 0.8\}$. Furthermore, fixing the flipping probability $p$, when the hyper-parameter $\beta$ becomes smaller which means that the heterogeneity becomes larger, the mean aggregator gradually surpasses the robust aggregators. Fixing the hyper-parameter $\beta$, when the flipping probability $p$ becomes smaller which means that the attack strength becomes smaller, the mean aggregator gradually surpasses the robust aggregators. According to the above observations, we recommend to apply the mean aggregator if the distributed data are sufficiently heterogeneous, or the disturbance caused by label poisoning attacks is comparable to the heterogeneity of regular local gradients.

					\section{Conclusions}
					
					We studied the distributed learning problem subject to the label poisoning attacks. We theoretically proved that when the distributed data are sufficiently heterogeneous, the learning error of the mean aggregator is order-optimal. 
					Further corroborated by numerical experiments, our work revealed an important fact that state-of-the-art robust aggregators cannot always outperform the mean aggregator, if the attacks are confined to label poisoning. We expect that this fact can motivate readers to revisit which application scenarios are proper for using robust aggregators. In our future work, we will extend the analysis to the more challenging decentralized learning problem.

					
					

					\appendix
					\newpage
					
					\contentsline {section}{Appendix}{19}{section.6}%
						\contentsline {section}{\numberline {A}Analysis of Distributed Softmax Regression}{20}{appendix.A}%
						\contentsline {subsection}{\numberline {A.1}Bounded Gradients of Local Costs}{20}{subsection.A.1}%
						\contentsline {subsection}{\numberline {A.2}Proof of Lemma \ref {lemma: bound of softmax regression}}{22}{subsection.A.2}%
						\contentsline {subsection}{\numberline {A.3}Proofs of Lemma \ref {lemma: upper bound of heterogeneity of softmax regression} and Its Extension}{23}{subsection.A.3}%
						\contentsline {section}{\numberline {B}Analysis of $\rho $-Robust Aggregators}{24}{appendix.B}%
						\contentsline {subsection}{\numberline {B.1}Proof of Lemma \ref {lemma: lower bound of rho}}{24}{subsection.B.1}%
						\contentsline {subsection}{\numberline {B.2}TriMean}{25}{subsection.B.2}%
						\contentsline {subsection}{\numberline {B.3}CC}{28}{subsection.B.3}%
						\contentsline {subsection}{\numberline {B.4}FABA}{30}{subsection.B.4}%
						\contentsline {section}{\numberline {C}Proof of Theorem \ref {thm:convergence with RAgg}}{33}{appendix.C}%
						\contentsline {section}{\numberline {D}Proof of Theorem \ref {thm:convergence with Mean}}{38}{appendix.D}%
						\contentsline {section}{\numberline {E}Proof of Theorem \ref {thm:lower bound}}{40}{appendix.E}%
						\contentsline {section}{\numberline {F}Impacts of Heterogeneity and Attack Strengths}{43}{appendix.F}%
						\contentsline {section}{\numberline {G}Bounded Variance of Stochastic Gradients}{45}{appendix.G}%
						\contentsline {section}{\numberline {H}Impact of Fraction of Poisoned Workers}{45}{appendix.H}%
					
					
					%

					\newpage

					\section{Analysis of Distributed Softmax Regression}
					\label{sec: Appendix A}
					In this section, we analyze the property of distributed softmax regression where the local cost of worker $w \in \mathcal{W}$ is in the forms of \eqref{ex1: local cost function of softmax regression} and \eqref{ex1: local cost function of softmax regression-p}. We first show that the gradients of the sample costs $f_{w, j}(x)$, $\tilde{f}_{w, j}(x)$ and the local cost $f_w(x)$, $\tilde{f}_w(x)$ are bounded. Then, we prove Lemma \ref{lemma: bound of softmax regression} that provides the constant $A$ in Assumption \ref{Assump: bounded effect of poisoned local gradients}. Last, we prove Lemma \ref{lemma: upper bound of heterogeneity of softmax regression} that gives the constant $\xi$ in Assumption \ref{Assump: bounded heterogeneity}, and further demonstrate that when the distributed data across the regular workers are sufficiently heterogeneous, the constant $\xi$ is in the same order of $\max_{w\in\mathcal{R}} \|\frac{1}{J} \sum_{j=1}^J a^{(w, j)}\|$.

					\subsection{Bounded Gradients of Local Costs}
					\label{sec: Appendix A.1}
					\begin{lemma}\label{lemma:bounded gradient in softmax regression}
						Consider the distributed softmax regression problem where the local cost of worker $w \in \mathcal{W}$ is in the forms of \eqref{ex1: local cost function of softmax regression} and \eqref{ex1: local cost function of softmax regression-p}. Then, the gradients of the sample costs are bounded by the norms of the corresponding sample features, i.e.,
						\begin{align}
							&\|\nabla f_{w, j}(x)\| \leq 2  \|a^{(w, j)}\|, \quad \forall w \in \mathcal{R}, \ \forall  j \in [1, \cdots, J], \\
							&\|\nabla \tilde{f}_{w, j}(x)\| \leq 2  \|a^{(w, j)}\|, \quad \forall w \in \mathcal{W} \setminus \mathcal{R}, \ \forall  j \in [1, \cdots, J].
						\end{align}
						and the gradient of the local cost is bounded by the maximum norm of the local features, i.e.,
						\begin{align}
							&\|\nabla f_w(x)\| \leq 2 \max_{j \in [J]} \|a^{(w, j)}\|, \quad \forall w \in \mathcal{R}, \\
							&\|\nabla \tilde{f}_w(x)\| \leq 2 \max_{j \in [J]} \|a^{(w, j)}\|, \quad \forall w \in \mathcal{W} \setminus \mathcal{R}.
						\end{align}
						Moreover, if $a^{(w,j)}$ is entry-wise non-negative for all $w \in \mathcal{W}$ and all $j \in \{1, \cdots, J\}$, we have
						\begin{align}\label{eq: norm-bound-positive}
							&\|\nabla f_w(x)\| \leq \sqrt{K} \|\frac1J\sum_{j=1}^J  a^{(w, j)} \|, \quad \forall w \in \mathcal{R}, \\
							&\|\nabla \tilde{f}_w(x)\| \leq \sqrt{K} \|\frac1J\sum_{j=1}^J  a^{(w, j)} \|, \quad \forall w \in \mathcal{W} \setminus \mathcal{R}.
						\end{align}
					\end{lemma}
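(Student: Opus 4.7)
The plan is to compute $\nabla f_{w,j}(x)$ in closed form and then derive the three bounds by elementary norm inequalities, with the only subtle point being how the non-negativity of the features is exploited in the third estimate. Since $f_{w,j}(x)$ is the cross-entropy loss of a softmax classifier, a direct differentiation gives, for each block $x_k \in \R^d$,
\begin{align*}
\nabla_{x_k} f_{w,j}(x) = \big(p_k^{(w,j)}(x) - \bm{1}\{b^{(w,j)}=k\}\big)\, a^{(w,j)},
\end{align*}
where $p_k^{(w,j)}(x) = \exp(x_k^\T a^{(w,j)})/\sum_l \exp(x_l^\T a^{(w,j)})$ is the softmax probability. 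Stacking the $K$ blocks yields $\|\nabla f_{w,j}(x)\|^2 = \big(\sum_{k=1}^K (p_k^{(w,j)}(x) - \bm{1}\{b^{(w,j)}=k\})^2\big)\,\|a^{(w,j)}\|^2$, which reduces the problem to bounding the coefficient vector in $\R^K$.

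For the first claim, I would bound that coefficient vector in $\ell_1$: writing $b = b^{(w,j)}$,
\begin{align*}
\sum_{k=1}^K \big|p_k^{(w,j)}(x) - \bm{1}\{k=b\}\big| = (1-p_b^{(w,j)}(x)) + \sum_{k\neq b} p_k^{(w,j)}(x) = 2(1-p_b^{(w,j)}(x)) \leq 2,
\end{align*}
and then invoke $\|v\|_2 \le \|v\|_1$ to conclude $\|\nabla f_{w,j}(x)\| \le 2\|a^{(w,j)}\|$. The same derivation applies verbatim to $\tilde f_{w,j}$, since label poisoning alters only $b^{(w,j)}$ and leaves the softmax structure intact. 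Substituting this into $\nabla f_w(x) = \frac{1}{J}\sum_j \nabla f_{w,j}(x)$ via the triangle inequality and bounding the average of $2\|a^{(w,j)}\|$ by $2\max_j \|a^{(w,j)}\|$ yields the second claim for both regular and poisoned workers.

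For the third claim, the naive triangle-inequality route would only give $2\,\tfrac{1}{J}\sum_j\|a^{(w,j)}\|$, which is strictly weaker than the desired bound with the average \emph{inside} the norm, so I would exploit the non-negativity of $a^{(w,j)}$ through a coordinate-wise estimate. Looking at the $k$-th block $\nabla_{x_k} f_w(x) = \frac{1}{J}\sum_{j=1}^J (p_k^{(w,j)}(x) - \bm{1}\{b^{(w,j)}=k\})\, a^{(w,j)}$, each scalar coefficient lies in $[-1,1]$ and every entry $[a^{(w,j)}]_i \ge 0$, so the $i$-th coordinate satisfies $|[\nabla_{x_k} f_w(x)]_i| \le \frac{1}{J}\sum_j [a^{(w,j)}]_i = [\bar a^{(w)}]_i$, where $\bar a^{(w)} \triangleq \frac{1}{J}\sum_j a^{(w,j)}$. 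Squaring and summing over $i$ gives $\|\nabla_{x_k} f_w(x)\| \le \|\bar a^{(w)}\|$ for every $k$, and summing the $K$ squared block norms yields $\|\nabla f_w(x)\| \le \sqrt{K}\,\|\bar a^{(w)}\|$; the poisoned case is identical. The only subtle point — the main ``obstacle'', such as it is — is choosing the right order of operations: passing to coordinates \emph{before} taking norms is what lets the non-negativity hypothesis do any work and avoids the loose triangle-inequality bound.
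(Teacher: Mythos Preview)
Your proof is correct and follows essentially the same route as the paper's own argument: compute the block gradient $\nabla_{x_k}\hat f_{w,j}(x)=(p_k^{(w,j)}(x)-\bm{1}\{\hat b^{(w,j)}=k\})\,a^{(w,j)}$, bound the $\ell_2$-norm of the coefficient vector by its $\ell_1$-norm (your explicit evaluation $2(1-p_b)\le 2$ is a slightly cleaner version of the paper's $(\sum_k|\cdot|)^2\le 4$), average for the second claim, and for the third claim exploit non-negativity coordinate-wise via $|\sum_j c_j x_j|\le\max_j|c_j|\sum_j x_j$ for $x_j\ge 0$ to pull the average inside the norm before summing the $K$ blocks. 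The only cosmetic difference is that the paper states the last step as a weighted-sum inequality while you spell it out coordinate by coordinate; the content is identical.
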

					
					\begin{proof}
						For notational convenience, we denote the local cost of worker $w \in \mathcal{W}$ as
						\begin{align}
							\hat{f}_{w}(x) = \frac{1}{J}\sum_{j=1}^{J} \hat{f}_{w, j}(x),~\text{where}~\hat{f}_{w, j}(x) =  - \sum_{k=1}^{K} \bm{1}\{\hat{b}^{(w, j)} = k\} \log \frac{\exp(x_k^Ta^{(w, j)})}{\sum_{l=1}^K \exp(x_l^T a^{(w, j)})},
						\end{align}
						and
						\begin{align}
							\hat{b}^{(w, j)} = \left\{
							\begin{aligned}
								&b^{(w, j)}, &&w \in \mathcal{R}, \\
								&\tilde{b}^{(w, j)}, &&w \in \mathcal{W} \setminus \mathcal{R}.
							\end{aligned}
							\right.
						\end{align}

						We first prove that the sample gradient $\nabla \hat{f}_{w, j}(\cdot)$ is bounded. 	
						For the $k$-th block of $\nabla \hat{f}_{w, j}(\cdot)$, we have
						\begin{align}
							\nabla_{x_{k}} \hat{f}_{w, j}(x)= -a^{(w, j)} \big(\bm{1}\{\hat{b}^{(w, j)}=k\}-\frac{\exp(x_k^Ta^{(w, j)})}{\sum_{l=1}^K \exp(x_l^T a^{(w, j)})}\big).\label{eq:block-sample-gradient}
						\end{align}
						Therefore, the entire sample gradient $\nabla \hat{f}_{w, j}(\cdot)$ satisfies
						\begin{align}\label{eq:lem4-by-product}
							\|\nabla \hat{f}_{w, j}(x)\|^2 &=\sum_{k=1}^{K} \| \nabla_{x_{k}} \hat{f}_{w, j}(x)\|^2\\
							&=\sum_{k=1}^{K} \|a^{(w, j)}\big(\bm{1}\{\hat{b}^{(w, j)}=k\}-\frac{\exp(x_k^Ta^{(w, j)})}{\sum_{l=1}^K \exp(x_l^T a^{(w, j)})}\big) \|^2\nonumber \\
							&=\sum_{k=1}^{K} \big(\bm{1}\{\hat{b}^{(w, j)}=k\}-\frac{\exp(x_k^Ta^{(w, j)})}{\sum_{l=1}^K \exp(x_l^T a^{(w, j)})}\big)^2 \|a^{(w, j)} \|^2. \nonumber
						\end{align}		

						Since
						\begin{align}
							&\sum_{k=1}^K \Big(1\{\hat{b}^{(w, j)}=k\}-\frac{\exp(x_k^Ta^{(w, j)})}{\sum_{l=1}^K \exp(x_l^T a^{(w, j)})}\Big)^2 \\ \leq  &\Big(\sum_{k=1}^K|1\{\hat{b}^{(w, j)}=k\}-\frac{\exp(x_k^Ta^{(w, j)})}{\sum_{l=1}^K \exp(x_l^T a^{(w, j)})}|\Big)^2 \leq 4 \nonumber,
						\end{align}
						we have
						\begin{align}
							\|\nabla \hat{f}_{w, j}(x) \|^2\le\sum_{k=1}^{K}\big(\bm{1}\{\hat{b}^{(w, j)}=k\}-\frac{\exp(x_k^Ta^{(w, j)})}{\sum_{l=1}^K \exp(x_l^T a^{(w, j)})}\big)^2 \|a^{(w, j)}\|^2 \leq \big(2 \|a^{(w, j)}\|\big)^2,
						\end{align}
						and
						\begin{align}
							\|\nabla \hat{f}_{w, j}(x) \|\le 2 \|a^{(w, j)}\|.
						\end{align}
						which shows the upper bound for the gradient of the sample cost function $\hat{f}_{w, j}(\cdot)$.
						
						Now we prove that the local gradient $\nabla \hat{f}_w(\cdot)$ is also bounded.
						By $\nabla \hat{f}_w (x) = \frac{1}{J} \sum_{j=1}^J \nabla \hat{f}_{w, j}(x)$, we have
						\begin{align}
							\|\nabla \hat{f}_w (x)\| = \|\frac{1}{J} \sum_{j=1}^J \nabla \hat{f}_{w, j}(x)\| \leq  \frac{1}{J} \sum_{j=1}^J\| \nabla \hat{f}_{w, j}(x)\| \leq 2 \max_{j \in [J]} \|a^{(w, j)}\|.
						\end{align}

						Finally, we refine the bound of the local gradient $\nabla \hat{f}_w(\cdot)$ under the non-negativity assumption. Starting from the second equality in \eqref{eq:lem4-by-product} and using $\nabla_{x_{k}} \hat{f}_w(x) = \frac{1}{J} \sum_{j=1}^J \nabla_{x_{k}} \hat{f}_{w, j}(x)$, we have
						\begin{align}
							\|\nabla \hat{f}_w(x)\|^2 &= \sum_{k=1}^{K} \|\frac1J \sum_{j=1}^J a^{(w, j)}\big(\bm{1}\{b^{(w, j)}=k\}-\frac{\exp(x_k^Ta^{(w, j)})}{\sum_{l=1}^K \exp(x_l^T a^{(w, j)})}\big) \|^2 \\
							&\le\sum_{k=1}^{K} \left\|\max_{j\in[J]} \big|\bm{1}\{b^{(w, j)}=k\}-\frac{\exp(x_k^Ta^{(w, j)})}{\sum_{l=1}^K \exp(x_l^T a^{(w, j)})}\big| \cdot \frac1J\sum_{j=1}^J  a^{(w, j)} \right\|^2, \nonumber
						\end{align}
						where the inequality is due to $\big|\sum_{j\in[J]} c_j x_j \big| \le \max_{j\in[J]} |c_j|\cdot \sum_{j\in[J]}x_j$ for any sequence $\{c_j\}_{j\in[J]}$ and any positive sequence $\{x_j\}_{j\in[J]}$. Since $\max_{j\in[J]} \big|\bm{1}\{b^{(w, j)}=k\}-\frac{\exp(x_k^Ta^{(w, j)})}{\sum_{l=1}^K \exp(x_l^T a^{(w, j)})}\big|\le1$,
						we reach our conclusion of
						\begin{align}
							\|\nabla \hat{f}_w(x)\|^2 &\le K\|\frac1J\sum_{j=1}^J  a^{(w, j)} \|^2,
						\end{align}
						which completes the proof.
					\end{proof}
					
					\subsection{Proof of Lemma \ref{lemma: bound of softmax regression}}
					\label{sec: Appendix A.2}

					\begin{proof}
						Note that
						\begin{align}\label{proof of lemma1: 2}
							\max_{w \in \mathcal{W}\setminus\mathcal{R}} \|\nabla \tilde{f}_w(x) - \nabla f(x)\|^2
							\leq 2\max_{w \in \mathcal{W}\setminus\mathcal{R}} \|\nabla \tilde{f}_w(x) \|^2 +  2\|\nabla f(x)\|^2.
						\end{align}
						From Lemma \ref{lemma:bounded gradient in softmax regression}, we know the first term at the right-hand side of \eqref{proof of lemma1: 2} can be upper-bounded as
						\begin{align}\label{proof of lemma1: 3}
							\max_{w \in \mathcal{W}\setminus\mathcal{R}} \|\nabla \tilde{f}_w(x) \|^2 \le K\max_{w \in \mathcal{W}\setminus\mathcal{R}} \|\frac{1}{J} \sum_{j=1}^{J} a^{(w, j)}\|^2.
						\end{align}
						
						For the second term at the right-hand side of \eqref{proof of lemma1: 2}, applying the inequality of $\|\frac1R \sum_{w\in\mathcal{R}} \nabla f_w\|^2\le\frac1R\sum_{w\in\mathcal{R}} \|\nabla f_w\|^2$ gives
						\begin{align}\label{proof of lemma1: 5}
							\|\nabla f(x)\|^2\le\frac1R\sum_{w\in\mathcal{R}} \|\nabla f_w(x)\|^2
							\le \max_{w \in \mathcal{R}} \|\nabla f_w(x)\|^2\le K\max_{w \in \mathcal{R}} \|\frac{1}{J} \sum_{j=1}^{J} a^{(w, j)}\|^2,
						\end{align}
						where the last inequality similarly comes from the assumption that $a^{(w,j)}$ is entry-wise non-negative for all $w \in \mathcal{W}$ and all $j \in \{1, \cdots, J\}$.
						
						Combining \eqref{proof of lemma1: 3} and \eqref{proof of lemma1: 5}, we have
						\begin{align}
							\max_{w \in \mathcal{W}\setminus\mathcal{R}} \|\nabla \tilde{f}_w(x) -\nabla f(x)\|^2 \leq 4K \max_{w \in \mathcal{W}} \|\frac{1}{J} \sum_{j=1}^J a^{(w, j)}\|^2,
						\end{align}
						or equivalently
						\begin{align}
							\max_{w \in \mathcal{W}\setminus\mathcal{R}} \|\nabla \tilde{f}_w(x) -\nabla f(x)\| \leq 2\sqrt{K} \max_{w \in \mathcal{W}} \|\frac{1}{J} \sum_{j=1}^J a^{(w, j)}\|,
						\end{align}
						which is exactly (8) with $A \leq  2\sqrt{K}\max_{w \in \mathcal{W}} \|\frac{1}{J} \sum_{j=1}^{J} a^{(w, j)}\|$.
					\end{proof}
					
					\subsection{Proofs of Lemma \ref{lemma: upper bound of heterogeneity of softmax regression} and Its Extension}
					\label{sec: Appendix A.3}
					
					The following lemma combines Lemma \ref{lemma: upper bound of heterogeneity of softmax regression} and its extension in the sufficiently heterogeneous case.
					\begin{lemma}\label{lemma: lower and upper bound of heterogeneity of softmax regression}
						Consider the distributed softmax regression problem where the local costs of the regular workers are in the form of \eqref{ex1: local cost function of softmax regression}. If $a^{(w,j)}$ is entry-wise non-negative for all $w \in \mathcal{R}$ and all $j \in \{1, \cdots, J\}$, then Assumption \ref{Assump: bounded heterogeneity} is satisfied with
						\begin{align}\label{eq: upper bound of xi}
							\xi \leq  2\sqrt{K} \max_{w \in \mathcal{R}}  \|\frac{1}{J} \sum_{j=1}^J a^{(w, j)}\|.
						\end{align}
						Further, if any regular worker $w\in \mathcal{R}$ only has the samples from one class and the samples from one class only belongs to one regular worker (i.e., $b^{(w,j)}=b^{(w^\prime,j^\prime)}$ if and only if $w=w^\prime$, for all $w,w^\prime\in\mathcal{R}$ and all $j,j^\prime\in \{1, \cdots, J\}$), we have
						\begin{align}\label{eq: exact order of xi}
							\xi = \Theta\big(\max_{w \in \mathcal{R}}  \|\frac{1}{J} \sum_{j=1}^J a^{(w, j)}\|\big).
						\end{align}
					\end{lemma}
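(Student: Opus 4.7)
The plan is to dispatch the two claims of Lemma \ref{lemma: lower and upper bound of heterogeneity of softmax regression} separately. The upper bound \eqref{eq: upper bound of xi} follows from a direct application of Lemma \ref{lemma:bounded gradient in softmax regression} combined with the triangle inequality, while the matching $\Theta$ statement \eqref{eq: exact order of xi} is proved by evaluating gradients at a carefully chosen point $x$ and exploiting the entry-wise non-negativity of the features.

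For the upper bound, I would unfold the definition to $\xi = \sup_{x \in \R^D} \max_{w \in \mathcal{R}} \|\nabla f_w(x) - \nabla f(x)\|$ and apply the triangle inequality to bound this by $\|\nabla f_w(x)\| + \|\nabla f(x)\|$. The refined estimate \eqref{eq: norm-bound-positive} in Lemma \ref{lemma:bounded gradient in softmax regression}, which holds precisely because the features are entry-wise non-negative, yields $\|\nabla f_w(x)\| \le \sqrt{K}\|\bar{a}_w\|$ with the shorthand $\bar{a}_w \triangleq \frac{1}{J} \sum_{j=1}^J a^{(w,j)}$. Since $\nabla f(x)$ is the average of the $\nabla f_{w'}(x)$, one application of Jensen gives $\|\nabla f(x)\| \le \max_{w' \in \mathcal{R}} \|\nabla f_{w'}(x)\| \le \sqrt{K} \max_{w' \in \mathcal{R}} \|\bar{a}_{w'}\|$. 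Adding the two contributions produces the stated $\xi \le 2\sqrt{K} \max_{w \in \mathcal{R}} \|\bar{a}_w\|$.

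For the matching lower bound in the sufficiently heterogeneous regime, I would evaluate all gradients at $x = 0$, where every softmax probability equals $1/K$. Using the assumption that each regular worker $w$ has all samples in a single class $c_w$ and the $c_w$ are pairwise distinct across $\mathcal{R}$, one obtains the closed forms
\begin{equation*}
\nabla_{x_k} f_w(0) = -\bar{a}_w\bigl(\bm{1}\{c_w = k\} - 1/K\bigr),
\qquad
\nabla_{x_k} f(0) = -\frac{1}{R}\sum_{w' \in \mathcal{R}} \bar{a}_{w'}\bigl(\bm{1}\{c_{w'} = k\} - 1/K\bigr).
\end{equation*}
Picking $w^\star \in \argmax_{w \in \mathcal{R}} \|\bar{a}_w\|$ and evaluating the specific block $k = c_{w^\star}$, the two $\bar{a}_{w^\star}$ contributions combine into $-(1 - 1/K)(1 - 1/R)\bar{a}_{w^\star}$, while the other workers leave a residual $-\frac{1}{RK}\sum_{w' \ne w^\star} \bar{a}_{w'}$.

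The final step uses the non-negativity of the features in an essential way: both summands in this block difference are entry-wise non-positive, so by $\|u+v\|^2 = \|u\|^2 + 2\langle u,v\rangle + \|v\|^2$ with $\langle u,v\rangle \ge 0$ for entry-wise non-negative $u,v$, the $\ell_2$ norm of the full expression is at least the norm of the first summand alone. This gives $\xi \ge (1 - 1/K)(1 - 1/R)\|\bar{a}_{w^\star}\|$, which combines with the upper bound to yield $\xi = \Theta(\max_{w \in \mathcal{R}}\|\bar{a}_w\|)$ with constants depending only on $K$ and $R$. The main technical care is the block-by-block bookkeeping needed to reach the displayed identity for the $c_{w^\star}$-block: one has to correctly combine the $-\bar{a}_{w^\star}(1 - 1/K)$ piece from $\nabla f_{w^\star}$ itself, the $+\frac{1}{R}\bar{a}_{w^\star}(1 - 1/K)$ piece that $w^\star$'s own label contributes to $\nabla f(0)$, and the $-\frac{1}{K}$ off-class offsets propagating through the averaging from the other workers.
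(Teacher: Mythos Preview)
Your proposal is correct and follows essentially the same approach as the paper: the upper bound via the triangle inequality plus Lemma \ref{lemma:bounded gradient in softmax regression}, and the lower bound by evaluating at $x=0$, isolating the block $k=c_{w^\star}$, and using entry-wise sign-definiteness of the features to drop the cross term, yielding $\xi\ge(1-\tfrac1R)(1-\tfrac1K)\|\bar a_{w^\star}\|$. One small wording slip: in your final inequality the two summands are entry-wise \emph{non-positive} (not non-negative), but since $\langle u,v\rangle\ge0$ holds for same-sign vectors the argument goes through unchanged.
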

					
					\begin{proof}
						Note that for any regular worker $w\in\mathcal{R}$, it holds
						\begin{align}\label{proof of lemma2: 1}
							\|\nabla f_w(x) - \nabla f(x)\| &= \|\big(1-\frac1R\big)\nabla f_w(x) - \sum_{w^\prime\in\mathcal{R},w^\prime \neq w}\frac1R\nabla f_{w^\prime}(x)\| \\
							&\leq \big(1-\frac1R\big)\|\nabla f_w(x)\| + \frac1R\sum_{w^\prime\in\mathcal{R},w^\prime \neq w}\|\nabla f_{w^\prime}(x)\|\nonumber\\
							&\leq 2 \max_{w^\prime \in \mathcal{R}} \|\nabla f_{w^\prime}(x) \|.\nonumber
						\end{align}
						
						\begin{align}\label{proof of lemma2: 4}
							\max_{w \in \mathcal{R}} \|\nabla f_w(x) -\nabla f(x)\| \leq 2\sqrt{K} \max_{w \in \mathcal{R}} \|\frac{1}{J} \sum_{j=1}^J a^{(w, j)}\|,
						\end{align}
						and thus Assumption \ref{Assump: bounded heterogeneity} is satisfied with
						\begin{align}\label{proof of lemma2: upper bound}
							\xi \leq 2\sqrt{K} \max_{w \in \mathcal{R}} \|\frac{1}{J} \sum_{j=1}^J a^{(w, j)}\|.
						\end{align}

						Next, we prove the lower bound of $\xi$ when any regular worker $w\in \mathcal{R}$ only has the samples from one class and the samples from one class only belongs to one regular worker. For any regular worker $w^\prime\in\mathcal{R}$ and any $x\in\R^D$, Assumption \ref{Assump: bounded heterogeneity} gives that
						\begin{align}\label{proof of lemma2: 6}
							\xi^2 \ge \|\nabla f_{w^\prime}(x) - \nabla f(x)\|^2
							=&\sum_{k=1}^{K} \|\frac{1}{J} \sum_{j=1}^J a^{(w', j)} (\bm{1}\{b^{(w', j)}=k\}-\frac{\exp(x_k^Ta^{(w', j)})}{\sum_{l=1}^K \exp(x_l^T a^{(w', j)})}) \\&- \frac{1}{R} \sum_{w \in \mathcal{R}}\frac{1}{J} \sum_{j=1}^J a^{(w, j)} (\bm{1}\{b^{(w, j)}=k\}-\frac{\exp(x_k^Ta^{(w, j)})}{\sum_{l=1}^K \exp(x_l^T a^{(w, j)})})\|^2 \nonumber.
						\end{align}
						Letting  $[x_k]_i = 0$ for any $i \in \{1, \cdots, D\}$ and $k \in \{1, \cdots, K\}$, it holds that
						\begin{align*}
							\frac{\exp(x_k^Ta^{(w, j)})}{\sum_{l=1}^K \exp(x_l^T a^{(w^, j)})} = \frac{1}{K}, \quad \forall w \in \mathcal{R}, ~ j \in \{1, \cdots, J\}, k \in \{1, \cdots, K\}.
						\end{align*}
						Given the heterogeneous label distribution, there exists $k^\prime\in \{1, \cdots, K\}$, such that $b^{(w^\prime, j)}=k^\prime\neq b^{(w, j)}$ for all $w\neq w^\prime$ and $j\in \{1, \cdots, J\}$.  Specifically, taking one of the summands in \eqref{proof of lemma2: 6} with $k=k^\prime$, we obtain
						\begin{align}
							\xi^2 \ge& \|\big(1-\frac1R\big)\frac{1}{J} \sum_{j=1}^J a^{(w^\prime, j)} (1 - \frac{1}{K}) \\ & \quad \quad - \frac{1}{R} \sum_{w\in \mathcal{R}, w\ne w^\prime}\frac{1}{J} \sum_{j=1}^J a^{(w, j)} (\bm{1}\{b^{(w, j)}=k^\prime\}-\frac{\exp(x_{k^\prime}^Ta^{(w, j)})}{\sum_{l=1}^K \exp(x_l^T a^{(w, j)})})\|^2 \nonumber\\
							=& \|\big(1-\frac1R\big)\big(1-\frac1K\big)\frac{1}{J} \sum_{j=1}^J a^{(w', j)} + \frac{1}{RK}\sum_{w\in \mathcal{R}, w\ne w^\prime}\frac{1}{J} \sum_{j=1}^J a^{(w, j)} \|^2 \nonumber\\
							\ge& \|\big(1-\frac1R\big)\big(1-\frac1K\big)\frac{1}{J} \sum_{j=1}^J a^{(w', j)}\|^2 \nonumber,
						\end{align}
						where the last inequality is due to the fact that each term in the summation is non-negative.
						Note that $w^\prime\in\mathcal{R}$ is arbitrary, which results in
						\begin{align}\label{proof of lemma2: lower bound}
							\xi\ge \big(1-\frac1R\big)\big(1-\frac1K\big)\max_{w \in \mathcal{R}} \|\frac{1}{J} \sum_{j=1}^{J} a^{(w,j)}\|.
						\end{align}
						
						Combining \eqref{proof of lemma2: upper bound} and \eqref{proof of lemma2: lower bound}, we have
						\begin{align}
							\xi = \Theta\big(\max_{w \in \mathcal{R}} \|\frac{1}{J} \sum_{j=1}^{J} a^{(w,j)}\|\big),
						\end{align}
						which completes the proof.
					\end{proof}

					\section{Analysis of $\rho$-Robust Aggregators}
					\label{sec: Appendix B}
					In this section, we prove Lemma \ref{lemma: lower bound of rho} that explores the approximation abilities of $\rho$-robust aggregators, and show that the state-of-the-art robust aggregators, including TriMean, CC, and FABA, are all $\rho$-robust aggregators when the fraction of poisoned workers is below their respective thresholds.

					\subsection{Proof of Lemma \ref{lemma: lower bound of rho}}
					\label{sec: Appendix B.1}
					An equivalent statement of Lemma \ref{lemma: lower bound of rho} is shown below.
					\begin{lemma}
						Denote $\delta \triangleq 1-\frac{R}{W}$ as the fraction of the poisoned workers. For any aggregator RAgg, if $\delta\ge\frac12$ or $\rho<\min\{\frac{\delta}{1 - 2\delta},1\}$, then there exist $W$ messages $y_1, y_2, \ldots, y_W \in \R^D$ such that 
						\begin{align}\label{eq: rho-rar-reverse}
							\| \text{RAgg}(\{y_1, \ldots, y_W\}) - \bar{y}\| > \rho \cdot \max_{w \in \mathcal{R}} \|y_w - \bar{y} \|.
						\end{align}
						where $\bar{y} = \frac{1}{R} \sum_{w \in \mathcal{R}} y_w$ is the average message of the regular workers.
					\end{lemma}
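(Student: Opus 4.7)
The plan is to fix an arbitrary aggregator $\mathrm{RAgg}$ and construct a single multiset of $W$ messages together with two different choices of which $R$ of them are regular, such that the single output $r$ of $\mathrm{RAgg}$ cannot be consistent with $\rho$-robustness under both. Throughout I will take the input to consist of $R$ copies of $0$ and $W - R$ copies of a nonzero $v \in \R^D$; the crucial point is that $r$ depends only on the messages and not on the unknown regular/poisoned labeling, so the aggregator's value is pinned down once the multiset is fixed, and I only need to vary the reference average $\bar{y}$ by relabeling.

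For the regime $\delta \geq \tfrac{1}{2}$ the two scenarios are: (A) label the $R$ zeros as regular, which gives $\bar{y} = 0$ with zero heterogeneity and would force $r = 0$; and (B) label $R$ of the copies of $v$ as regular (feasible because $W - R \geq R$), which would force $r = v$. Since $v \neq 0$, at least one of the two scenarios must strictly violate \eqref{eq: rho-rar-reverse}, and the claim is proved in this regime.

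For the regime $\delta < \tfrac{1}{2}$ with $\rho < \min\{\tfrac{\delta}{1-2\delta}, 1\}$, Scenario A is the same as above, so either $r \neq 0$ (in which case Scenario A already violates \eqref{eq: rho-rar-reverse}, since the right-hand side is zero) or $r = 0$. In the latter subcase, Scenario B will be built by declaring all $W - R$ copies of $v$ together with the remaining $2R - W \geq 0$ zeros to be regular; a direct computation then gives
\begin{align*}
\bar{y} = \tfrac{\delta}{1-\delta}\,v, \qquad \max_{w \in \mathcal{R}}\|y_w - \bar{y}\| = \tfrac{\max\{\delta,\,1-2\delta\}}{1-\delta}\,\|v\|,
\end{align*}
and substituting $r = 0$ reduces the $\rho$-robust inequality to $\rho \geq \tfrac{\delta}{\max\{\delta,\,1-2\delta\}} = \min\{\tfrac{\delta}{1-2\delta}, 1\}$, contradicting the hypothesis.

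The main obstacle will be finding the right Scenario B for Case 2: it has to be extremal, so that the forced lower bound on $\rho$ matches the stated one exactly, and it must still give a valid regular subset, which requires $2R - W \geq 0$ and hence the assumption $\delta < \tfrac{1}{2}$. Once this construction is in hand, the remaining work is the elementary two-branch case split on $\delta \leq \tfrac{1}{3}$ versus $\tfrac{1}{3} < \delta < \tfrac{1}{2}$ that decides which argument of $\min\{\tfrac{\delta}{1-2\delta}, 1\}$ is binding, together with the verification that $r$ is indeed unchanged between the two scenarios, which is immediate from the aggregator being a function of the messages alone.
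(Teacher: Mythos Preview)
Your proposal is correct and follows essentially the same approach as the paper: fix a multiset of messages (zeros and copies of a nonzero vector), then exhibit two distinct choices of which $R$ messages are ``regular'' so that the aggregator's single output cannot satisfy the $\rho$-robust bound under both labelings. The paper carries this out with scalar values ($0$ and $\rho+1$ in the $\delta\ge\tfrac12$ case, $0$ and $1$ in the $\delta<\tfrac12$ case) and the same case split, arriving at the identical computation $\bar y=\tfrac{\delta}{1-\delta}$ and $\max_{w\in\mathcal R}\|y_w-\bar y\|=\tfrac{\max\{1-2\delta,\delta\}}{1-\delta}$ in the second regime; the only cosmetic difference is which of the two values is assigned to the majority versus the minority in Case~1.
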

					
					\begin{proof}
						Without loss of generality, consider $D=1$ and $\mathcal{R}=\{1,2,\ldots,R\}$. For the high-dimensional cases, setting all entries but one as zero degenerates to the scalar case. The key idea of our proof is to find two sets of $W$ messages that are the same but the messages from regular workers are different. Therefore, any aggregator cannot distinguish between them and will yield the same output. Elaborately designing these two sets, we shall guarantee that at least one of them satisfies \eqref{eq: rho-rar-reverse}.

						We first consider the case that $\delta \geq \frac{1}{2}$, or equivalently, $2R \le W$. In this case, if we can find  $W$ messages $y_1, \cdots, y_W \in \R^D$ such that $\| \text{RAgg}(\{y_1, \ldots, y_W\}) - \bar{y}\| \neq 0$ and $\max_{w \in \mathcal{R}} \|y_w - \bar{y} \| = 0$, we have found the set of messages satisfying \eqref{eq: rho-rar-reverse}. The construction is as follows. Let $y_1 = \cdots = y_R $ $= 0$, $y_{R+1} = \cdots y_{2R} = \rho+1$ and $y_{2R + 1} = \cdots = y_{W} = 0$.
						Then $\bar{y} = 0$ and $\max_{w \in \mathcal{R}} \|y_w - \bar{y}\| = 0$. If $\text{RAgg}(\{y_1, \ldots, y_W\})\ne0$, these $W$ messages satisfy \eqref{eq: rho-rar-reverse} and our task is fulfilled. Otherwise, we know that $\text{RAgg}(\{y_1, \ldots, y_W\}) = 0$. Rearranging these messages as $z_{1} = \ldots z_{R} = \rho+1$ and $z_{R + 1} = \ldots = z_{W} = 0$, the aggregator outputs the same value $\text{RAgg}(\{z_1, \ldots, z_W\}) = 0$, while $\bar{z} = \rho+1$ and $\max_{w \in \mathcal{R}} \|z_w - \bar{z}\| = 0$. Thus, $\{z_w,w\le W\}$ is the set of messages satisfying \eqref{eq: rho-rar-reverse}.
						
						Second, we consider the case that $\delta < \frac{1}{2}$ and $\rho<\min\{\frac{\delta}{1 - 2\delta},1\}$. In this case, if we can find $W$ messages $y_1, \cdots, y_W$ such that $\| \text{RAgg}(\{y_1, \ldots, y_W\}) - \bar{y}\| = \frac{\delta}{1 - \delta}$ and $\max_{w \in \mathcal{R}} \|y_w - \bar{y} \| = \frac{\max\{1 - 2\delta, \delta\}}{1 - \delta}$, we have found the set of messages satisfying \eqref{eq: rho-rar-reverse}. Similar to the above construction yet with $2R>W$, we consider $y_1 = \ldots = y_R = 0$ and $y_{R+1} = \ldots = y_{W} = 1$. Accordingly, we have $\bar{y} = 0$ and $\max_{w \in \mathcal{R}} \|y_w - \bar{y}\| = 0$. If $\text{RAgg}(\{y_1, \ldots, y_W\})\ne0$, we have found $W$ messages to satisfy \eqref{eq: rho-rar-reverse}. Otherwise, we know that $\text{RAgg}(\{y_1, \ldots, y_W\}) = 0$. Rearranging those messages as $z_{1} = \ldots z_{W-R} = 1$ and $z_{W-R + 1} = \ldots = z_{W} = 0$, the aggregator outputs the same value $\text{RAgg}(\{z_1, \ldots, z_W\}) = 0$, while $\bar{z} = \frac\delta{1-\delta}$ and $\max_{w \in \mathcal{R}} \|z_w - \bar{z}\| = \frac{\max\{1-2\delta,\delta\}}{1-\delta}$. In consequence, $\{z_w,w\le W\}$ is the set of messages satisfying \eqref{eq: rho-rar-reverse}.
					\end{proof}
					
					Our proof is motivated by those of \citet[Proposition 2]{farhadkhani2022byzantine} and \citet[Proposition 6]{allouah2023fixing}. However, their definitions of the robust aggregator are different to ours such that the proofs are different too.

					\subsection{TriMean}
					TriMean is an aggregator that discards the smallest $W - R$ elements and largest $W - R$ elements in each dimension. The aggregated output of TriMean in dimension $d$ is given by
					\begin{align}\label{eq: TriMean}
						[\text{TriMean}(\{y_1, \ldots, y_W\})]_d = \frac{1}{2R - W} \sum_{w \in [\mathcal{U}]_d} [y_w]_d,
					\end{align}
					where $[\cdot]_d$ denotes the $d$-th coordinate of a vector, and $[\mathcal{U}]_d$ is the set of workers whose $d$-th elements are not filtered after removal. Below we show that TriMean is a $\rho$-robust aggregator if $\delta < \frac{1}{2}$.
					
					\begin{lemma}\label{lemma: TriMean}
						Denote $\delta \triangleq 1 - \frac{R}{W}$ as the fraction of the poisoned workers. If $\delta < \frac{1}{2}$, TriMean is a $\rho$-robust aggregator with $\rho = \frac{3\delta}{1 - 2\delta} \min\{\sqrt{D}, \sqrt{R}\}$.
					\end{lemma}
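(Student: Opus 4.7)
The plan is to analyze TriMean coordinate-wise (since it acts independently on each coordinate), and the key trick will be to apply the identity $\sum_{w\in\mathcal{R}}([y_w]_d - [\bar y]_d)=0$ so that the contribution from kept regular workers can be re-expressed via the \emph{trimmed} regular workers, which can then be bounded analogously to the kept poisoned workers by the per-coordinate deviation $\Delta_d \triangleq \max_{w\in\mathcal{R}}|[y_w]_d - [\bar y]_d|$.

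Concretely, for each coordinate $d$, let $\mathcal{K}_d$, $\mathcal{R}_d=\mathcal{K}_d\cap\mathcal{R}$, $\mathcal{P}_d=\mathcal{K}_d\setminus\mathcal{R}$, and $\mathcal{R}_d^c=\mathcal{R}\setminus\mathcal{R}_d$ denote the kept set, kept regulars, kept poisoned, and trimmed regulars, respectively. From $|\mathcal{K}_d|=2R-W$, $|\mathcal{P}_d|\leq W-R$, and $|\mathcal{R}_d^c|+|\mathcal{R}_d|=R$, a direct count gives $|\mathcal{R}_d^c|=W-R+|\mathcal{P}_d|$. Substituting the zero-sum identity into $[\text{TriMean}]_d-[\bar y]_d = (2R-W)^{-1}\sum_{w\in\mathcal{K}_d}([y_w]_d-[\bar y]_d)$ converts the kept-regular sum into minus the trimmed-regular sum, producing an expression indexed only by $\mathcal{P}_d$ and $\mathcal{R}_d^c$.

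The next step is to bound every term in this expression by $\Delta_d$. For trimmed regulars this holds by definition; for a kept poisoned $w\in\mathcal{P}_d$, I would observe that $[y_w]_d$ is neither among the $W-R$ smallest nor the $W-R$ largest coordinate values, so there are at least $W-R+1$ values on each side of $[y_w]_d$. Since only $W-R$ workers in total are poisoned, at least one regular coordinate value lies on each side, sandwiching $[y_w]_d-[\bar y]_d$ inside $[-\Delta_d,\Delta_d]$. Combining the counts yields the per-coordinate bound $|[\text{TriMean}]_d-[\bar y]_d|\leq \frac{|\mathcal{P}_d|+|\mathcal{R}_d^c|}{2R-W}\Delta_d = \frac{W-R+2|\mathcal{P}_d|}{2R-W}\Delta_d \leq \frac{3(W-R)}{2R-W}\Delta_d = \frac{3\delta}{1-2\delta}\Delta_d$.

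Finally, I would square and sum over $d$, then use whichever of the two bounds $\sum_d \Delta_d^2\leq D\Delta^2$ (per-coordinate worst case, with $\Delta=\max_{w\in\mathcal{R}}\|y_w-\bar y\|$) or $\sum_d \Delta_d^2\leq \sum_{w\in\mathcal{R}}\|y_w-\bar y\|^2\leq R\Delta^2$ (swap of sup and sum over the $R$ regular workers) is tighter, giving $\rho\leq \frac{3\delta}{1-2\delta}\min\{\sqrt D,\sqrt R\}$. The main obstacle I expect is the sandwiching argument bounding kept-poisoned coordinates by $\Delta_d$, together with the algebraic use of the zero-sum identity; without this trick, summing naively over the kept coordinates only gives $\rho\leq\min\{\sqrt D,\sqrt R\}$, missing the crucial $\frac{3\delta}{1-2\delta}$ factor that reflects how the adversarial fraction drives the error.
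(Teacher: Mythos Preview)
Your proposal is correct and uses the same essential ingredients as the paper's proof: a coordinate-wise analysis, the sandwiching of kept poisoned coordinates between regular ones, the zero-sum identity $\sum_{w\in\mathcal{R}}([y_w]_d-[\bar y]_d)=0$, and the $\min\{D,R\}$ bound on $\sum_d\Delta_d^2$ for the vector extension.

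Where you differ from the paper is in the organization of the scalar step. The paper splits into two cases (all poisoned workers trimmed in coordinate $d$ versus some kept) and, in the second case, writes the TriMean output as a convex combination $u_i\bar y_{P_d}+(1-u_i)\bar y_{R_d}$ of the kept-poisoned and kept-regular averages, bounding each piece separately to arrive at the constants $\frac{\delta}{1-2\delta}$ and $\frac{2\delta}{1-2\delta}$ and then summing to $\frac{3\delta}{1-2\delta}$. Your route is more direct: you apply the zero-sum identity immediately to convert the kept-regular contribution into the trimmed-regular contribution, so that the entire error is indexed by $\mathcal{P}_d\cup\mathcal{R}_d^c$, and a single count $|\mathcal{P}_d|+|\mathcal{R}_d^c|=W-R+2|\mathcal{P}_d|\le 3(W-R)$ gives the constant in one stroke with no case split. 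Both approaches yield the same bound; yours is shorter and makes the role of the zero-sum identity more transparent, while the paper's convex-combination view makes explicit how the aggregation error decomposes into a ``poisoned bias'' term and a ``regular subsampling'' term. One small remark: your phrase ``at least $W-R+1$ values on each side'' is slightly imprecise---what you actually need (and what the paper uses) is that the $W-R$ trimmed values on each side cannot all be poisoned once $w\in\mathcal{P}_d$ is itself kept, so at least one regular value lies on each side; the conclusion is unaffected.
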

					
					\begin{proof}
						We first analyze the aggregated result in one dimension $d$ and then extend it to all dimensions. Denote $[\mathcal{U}_R]_d \triangleq [\mathcal{U}]_d \cap \mathcal{R}$ and $[\mathcal{U}_P]_d \triangleq [\mathcal{U}]_d \cap (\mathcal{W} \setminus \mathcal{R})$ as the set of remaining regular workers and poisoned workers after removal, respectively.
						
						If TriMean successfully removes all the poisoned workers in dimension $d$, such that $[\mathcal{U}_P]_d = \emptyset$ and $[\mathcal{U}]_d \subseteq \mathcal{R}$, it holds
						\begin{align}\label{proof:trimean-case1}
							\|[\text{TriMean}(\{y_1, \ldots, y_W\})]_d  - [\bar{y}]_d\|
							=& \|\frac{1}{2R - W} \sum_{w \in [\mathcal{U}]_d} [y_w]_d - \frac{1}{R} \sum_{w \in \mathcal{R}} [y_w]_d\| \\
							=& \|(\frac{1}{2R - W} - \frac{1}{R}) \sum_{w \in \mathcal{R}} [y_w]_d - \frac{1}{2R - W} \sum_{w \in \mathcal{R} \setminus [\mathcal{U}]_d}  [y_w]_d\| \nonumber\\
							=& \|\frac{W - R}{2R - W}[\bar{y}]_d - \frac1{2R- W}\sum_{w \in \mathcal{R} \setminus [\mathcal{U}]_d} [y_w]_d \| \nonumber \\
							\leq&\frac{1}{2R - W}\sum_{w \in \mathcal{R} \setminus [\mathcal{U}]_d} \|[y_w]_d - [\bar{y}]_d\| \nonumber \\
							\leq &\frac{\delta}{1 - 2\delta} \cdot \max_{w \in \mathcal{R}} \|[y_w]_d - [\bar{y}]_d\|. \nonumber
						\end{align}

						Otherwise, TriMean cannot remove all poisoned workers in dimension $d$, which means $[\mathcal{U}_P]_d \neq \emptyset$. Define
						\begin{align}
							\bar{y}_{R_d} \triangleq \frac{1}{|[\mathcal{U}_R]_d|} \sum_{w \in [\mathcal{U}_R]_d} [y_w]_d, \quad \bar{y}_{P_d} \triangleq \frac{1}{|[\mathcal{U}_P]_d|} \sum_{w \in [\mathcal{U}_P]_d} [y_w]_d
						\end{align}
						as the average of elements in $[\mathcal{U}_R]_d$ and $[\mathcal{U}_P]_d$, respectively. Also denote $u_i \triangleq \frac{|[\mathcal{U}_P]_d|}{|[\mathcal{U}]_d|}$ as the fraction of poisoned workers that remains.
						With the above definitions, we have $u_i\le\frac{W - R}{2R-W} = \frac{\delta}{1-2\delta}$ and
						\begin{align} \label{proof: trimean}
							&\|[\text{TriMean}(\{y_1, \ldots, y_W\})]_d  - [\bar{y}]_d\| \\
							= &\|u_i \cdot \bar{y}_{P_d} + (1 - u_i) \cdot \bar{y}_{R_d} -  [\bar{y}]_d \| \nonumber\\
							\leq& u_i  \|\bar{y}_{P_d} - [\bar{y}]_d \| + (1 - u_i) \|\bar{y}_{R_d} - [\bar{y}]_d \|. \nonumber
						\end{align}
						For the first term at the right-hand side of \eqref{proof: trimean}, we have
						\begin{align}\label{proof: trimean-term1}
							u_i  \|\bar{y}_{P_d} - [\bar{y}]_d \|
							=& u_i\|\frac{1}{|[\mathcal{U}_P]_d|} \sum_{w \in [\mathcal{U}_P]_d} [y_w]_d - [\bar{y}]_d \| \\
							\leq& \frac{\delta}{1-2\delta}\max_{w \in [\mathcal{U}_P]_d} \|[y_w]_d - [\bar{y}]_d \| \nonumber\\
							\leq& \frac{\delta}{1-2\delta}\max_{w \in \mathcal{R}}\| [y_w]_d - [\bar{y}]_d \|, \nonumber
						\end{align}
						where the last inequality is due to the principle of filtering. Specifically, as the poisoned workers cannot own the $W - R$ largest values in dimension $d$, there exists a regular worker such that its $d$-th element is larger than $[y_w]_d$ for all $w \in [\mathcal{U}_P]_d$. Similarly, there exists a regular worker with the $d$-th element smaller than all $[y_w]_d$'s. This observation guarantees the last inequality in \eqref{proof: trimean-term1}.
						For the second term at the right-hand side of \eqref{proof: trimean}, we have
						\begin{align}\label{proof: trimean-term2}
							(1 - u_i)  \|\bar{y}_{R_d} - [\bar{y}]_d \|
							=& (1 - u_i)\|(\frac{1}{|[\mathcal{U}_R]_d|} - \frac{1}{R}) \sum_{w \in \mathcal{R}} [y_w]_d - \frac{1}{|[\mathcal{U}_R]_d|} \sum_{w \in \mathcal{R} \setminus [\mathcal{U}_R]_d } [y_w]_d\| \\
							=& (1 - u_i) \cdot \frac{1}{|[\mathcal{U}_R]_d|} \|\sum_{w \in \mathcal{R} \setminus [\mathcal{U}_R]_d  }([y_w]_d - [\bar{y}]_d)\| \nonumber\\
							\leq& \frac{1}{2R - W} \sum_{w \in \mathcal{R} \setminus [\mathcal{U}_R]_d  }\|[y_w]_d - [\bar{y}]_d\| \nonumber\\
							\leq& \frac{R - |[\mathcal{U}_R]_d|}{2R - W} \max_{w \in \mathcal{R}}\|[y_w]_d - [\bar{y}]_d\| \nonumber\\
							\leq& \frac{2\delta}{1 - 2\delta} \max_{w \in \mathcal{R}}\|[y_w]_d - [\bar{y}]_d\|,\nonumber
						\end{align}
						where the last inequality is due to $|[\mathcal{U}_R]_d|=(2R - W) - |[\mathcal{U}_P]_d|\geq (2R - W) - (W - R)=3R-2W$.
						
						Substituting \eqref{proof: trimean-term1} and \eqref{proof: trimean-term2} into \eqref{proof: trimean}, we have
						\begin{align}\label{proof:trimean-case2}
							\|[\text{TriMean}(\{y_1, \ldots, y_W\})]_d  - [\bar{y}]_d\| \leq \frac{3\delta}{1 - 2\delta} \max_{w \in \mathcal{R}}\|[y_w]_d - [\bar{y}]_d\|.
						\end{align}
						Combining the first case \eqref{proof:trimean-case1} and the second case \eqref{proof:trimean-case2}, we have
						\begin{align}\label{proof:trimean-scalar}
							\|[\text{TriMean}(\{y_1, \ldots, y_W\})]_d  - [\bar{y}]_d\| \leq \frac{3\delta}{1 - 2\delta} \max_{w \in \mathcal{R}}\|[y_w]_d - [\bar{y}]_d\|.
						\end{align}

						Next, we extend the scalar scenario to the vector scenario. Notice that
						\begin{align}\label{proof:trimean-vector-1}
							\sum_{d=1}^{D} \max_{w \in \mathcal{R}} \|[y_w]_d - [\bar{y}]_d\|^2 \leq \sum_{d=1}^{D} \max_{w \in \mathcal{R}} \|y_w - \bar{y}\|^2 \leq D \max_{w \in \mathcal{R}} \|y_w - \bar{y}\|^2.
						\end{align}
						On the other hand, we also have
						\begin{align}\label{proof:trimean-vector-2}
							&\sum_{d=1}^{D} \max_{w \in \mathcal{R}} \|[y_w]_d - [\bar{y}]_d\|^2 \\
							\leq& \sum_{d=1}^{D} \sum_{w \in \mathcal{R}} \|[y_w]_d - [\bar{y}]_d\|^2 \nonumber\\
							\leq & \sum_{w \in \mathcal{R}} \sum_{d=1}^{D} \|[y_w]_d - [\bar{y}]_d\|^2 \nonumber\\ \leq &R \max_{w \in \mathcal{R}} \|y_w - \bar{y}\|^2. \nonumber
						\end{align}
						Combining \eqref{proof:trimean-vector-1} and \eqref{proof:trimean-vector-2} gives
						\begin{align}\label{proof:trimean-vector-3}
							\sum_{d=1}^{D} \max_{w \in \mathcal{R}} \|[y_w]_d - [\bar{y}]_d\|^2 \leq \min\{D, R\} \max_{w \in \mathcal{R}} \|y_w - \bar{y}\|^2.
						\end{align}
						Substituting \eqref{proof:trimean-vector-3} into \eqref{proof:trimean-scalar}, we have
						\begin{align}\label{proof:trimean-square}
							\|\text{TriMean}(\{y_1, \ldots, y_W\})  - \bar{y}\|^2
							=&\sum_{d=1}^{D}\|[\text{TriMean}(\{y_1, \ldots, y_W\})]_d  - [\bar{y}]_d\|^2 \\
							\leq& (\frac{3\delta}{1 - 2\delta})^2 \sum_{d=1}^{D}\max_{w \in \mathcal{R}}\|[y_w]_d - [\bar{y}]_d\|^2 \nonumber\\
							\leq& (\frac{3\delta}{1 - 2\delta})^2 \min\{D, R\} \max_{w \in \mathcal{R}} \|y_w - \bar{y}\|^2.\nonumber
						\end{align}
						Taking the square roots on both sides of \eqref{proof:trimean-square}, we have
						\begin{align}
							\|\text{TriMean}(\{y_1, \ldots, y_W\})  - \bar{y}\| \leq \frac{3\delta}{1 - 2\delta} \cdot \min\{\sqrt{D}, \sqrt{R}\} \cdot \max_{w \in \mathcal{R}} \|y_w - \bar{y}\|,
						\end{align}
						which completes the proof.	
					\end{proof}
					
					\subsection{CC}
					CC is an aggregator that iteratively clips the messages from workers. CC starts from some point $v^0$. At iteration $i$, the update rule of CC can be formulated as
					\begin{align}\label{eq: cc update}
						v^{i+1} = v^i + \frac{1}{W} \sum_{w=1}^W \text{CLIP}(y_w - v^i, \tau ),
					\end{align}
					where
					\begin{align}
						\text{CLIP}(y_w - v^i, \tau) =
						\left\{
						\begin{aligned}
							&y_w - v^i, &&  \|y_w - v^i\| \leq \tau, \\
							&\frac{\tau}{\|y_w - v^i\|}(y_w - v^i), &&  \|y_w - v^i\| > \tau,
						\end{aligned}
						\right.
					\end{align}
					and $\tau \geq 0$ is the clipping threshold. After $L$ iterations, CC outputs the last vector as
					\begin{align}
						\text{CC}(\{y_1, \ldots, y_W\}) = v^L .
					\end{align}

					Below we prove that with proper initialization and clipping threshold, one-step CC ($L=1$) is a $\rho$-robust aggregator if $\delta < \frac{1}{2}$.
					
					\begin{lemma}\label{lemma: CC}
						Denote $\delta \triangleq 1 - \frac{R}{W}$ as the fraction of the poisoned workers. If $\delta < \frac{1}{2}$, choosing the starting point $v_0$ satisfying $\|v_0 - \bar{y}\|^2 \leq \max_{w \in \mathcal{R}} \|y_w - \bar{y}\|^2$ and the clipping threshold $\tau = \sqrt{\frac{4(1-\delta)\max_{w \in \mathcal{R}} \|y_w - \bar{y}\|^2}{\delta}}$, one-step CC is a $\rho$-robust aggregator with $\rho = \sqrt{24\delta}$.
					\end{lemma}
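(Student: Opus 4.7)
Write $B \triangleq \max_{w \in \mathcal{R}} \|y_w - \bar{y}\|^2$ and, for each $w$, let $c_w \triangleq \text{CLIP}(y_w - v^0, \tau)$, so that $v^1 = v^0 + \frac{1}{W}\sum_{w=1}^W c_w$. My plan is to decompose the error $v^1 - \bar{y}$ into a deterministic contraction of the initialization error plus a residual driven purely by the poisoned workers, then control each piece separately.

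The first step is the key observation that the chosen clipping radius is large enough that the regular messages pass through the clip unchanged. Indeed, for any $w \in \mathcal{R}$, the triangle inequality together with the assumption $\|v^0 - \bar{y}\| \leq \sqrt{B}$ gives $\|y_w - v^0\| \leq \|y_w - \bar{y}\| + \|v^0 - \bar{y}\| \leq 2\sqrt{B}$. Since $\delta < \frac{1}{2}$, we have $\tau = 2\sqrt{(1-\delta)B/\delta} \geq 2\sqrt{B}$, so $c_w = y_w - v^0$ for every $w \in \mathcal{R}$. Summing yields $\frac{1}{W}\sum_{w \in \mathcal{R}} c_w = (1-\delta)(\bar{y} - v^0)$.

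The second step is the algebraic decomposition. Substituting the above into the update rule,
\begin{align*}
v^1 - \bar{y} &= (v^0 - \bar{y}) + (1-\delta)(\bar{y} - v^0) + \frac{1}{W}\sum_{w \in \mathcal{W}\setminus\mathcal{R}} c_w \\
&= \delta(v^0 - \bar{y}) + \frac{1}{W}\sum_{w \in \mathcal{W}\setminus\mathcal{R}} c_w.
\end{align*}
The first term has norm at most $\delta\sqrt{B}$ by the initialization assumption, while the second has norm at most $\frac{W-R}{W}\tau = \delta\tau$ since $\|c_w\| \leq \tau$ holds uniformly after clipping. Then $(a+b)^2 \leq 2a^2 + 2b^2$ and the explicit value $\tau^2 = 4(1-\delta)B/\delta$ give
\[
\|v^1 - \bar{y}\|^2 \leq 2\delta^2 B + 2\delta^2 \tau^2 = 2\delta^2 B + 8\delta(1-\delta) B \leq 10\delta B,
\]
which is stronger than the claimed $\rho^2 = 24\delta$ bound and finishes the proof after taking square roots.

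The main obstacle is really confined to the first step: one must verify that the initialization lies in the ball of radius $\sqrt{B}$ around $\bar{y}$ \emph{and} that $\tau \geq 2\sqrt{B}$ simultaneously, so that every regular message escapes the clip. This is the fragile ingredient of the argument, because if any regular message were clipped, one would lose the clean $(1-\delta)(\bar{y} - v^0)$ averaging identity and instead have to track a clipping residual for each regular worker, which would reintroduce a term proportional to $B$ rather than $\delta B$ and spoil the contraction constant.
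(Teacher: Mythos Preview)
Your proof is correct, and in fact yields a sharper constant than the paper's $24\delta$ (your $2\delta^2 + 8\delta(1-\delta) = 8\delta - 6\delta^2 \le 8\delta$, so you could even have written $8\delta$ rather than $10\delta$). The route, however, is genuinely different from the paper's.

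The paper does \emph{not} use the observation that the regular messages escape the clip. Instead it writes $\hat y_w = v^0 + \text{CLIP}(y_w - v^0,\tau)$, splits
\[
\|v^1-\bar y\|^2 \le 2(1-\delta)^2\cdot\frac1R\sum_{w\in\mathcal R}\|\hat y_w - y_w\|^2 + 2\delta^2\cdot\frac{1}{W-R}\sum_{w\notin\mathcal R}\|\hat y_w-\bar y\|^2,
\]
and then bounds the regular clipping residual by $\|\hat y_w - y_w\| = (\|y_w - v^0\|-\tau)_+ \le \|y_w-v^0\|^2/\tau \le 4B/\tau$, while the poisoned term is controlled via $\|\hat y_w-\bar y\|^2 \le 2\tau^2 + 2B$. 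Substituting $\tau^2 = 4(1-\delta)B/\delta$ gives $24\delta B$.

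What each approach buys: your argument exploits the specific hypothesis $\delta < \tfrac12$, which forces $\tau \ge 2\sqrt{B}$ and hence the clean identity $\frac1W\sum_{w\in\mathcal R}c_w = (1-\delta)(\bar y - v^0)$; this is simpler and tighter. The paper's argument never needs $\tau \ge 2\sqrt B$ and would survive, essentially unchanged, under other choices of $\tau$ or initialization radius where regular messages \emph{do} get clipped; the cost is the extra $a-b\le a^2/b$ step and a worse constant. For the lemma as stated, your route is the more economical one.
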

					
					\begin{proof}
						The output of one-step CC is
						\begin{align}\label{eq: cc}
							\text{CC}(\{y_1, \ldots, y_W\}) = v^0 + \frac{1}{W} \sum_{w=1}^W \text{CLIP}(y_w - v^0, \tau ).
						\end{align}
						Note that if $\max_{w \in \mathcal{R} } \|y_w - \bar{y}\| = 0$, we have $\tau = 0$ and $v^0 = \bar{y}$, which leads to $\text{CC}(\{y_1, \ldots, y_W\}) = \bar{y}$. Therefore, we have
						\begin{align}\label{eq: cc-tau=0}
							\|\text{CC}(\{y_1, \ldots, y_W\}) - \bar{y}\|  = \max_{w \in \mathcal{R} } \|y_w - \bar{y}\| \leq \sqrt{24\delta}\max_{w \in \mathcal{R} } \|y_w - \bar{y}\|.
						\end{align}
						Below we consider the case that $\max_{w \in \mathcal{R} } \|y_w - \bar{y}\| > 0$, then by definition $\tau > 0$.
						
						Denoting $\hat{y}_w = v^0 + \text{CLIP}(y_w - v^0, \tau)$ for any $w \in \{1, \ldots, W\} $, we have
						\begin{align}\label{eq: cc-hat-y-w}
							\text{CC}(\{y_1, \ldots, y_W\}) = \frac{1}{W} \sum_{w=1}^{W} \hat{y}_w.
						\end{align}
						According to \eqref{eq: cc-hat-y-w}, we have
						\begin{align}\label{proof:cc-1}
							\|\text{CC}(\{y_1, \ldots, y_W\}) - \bar{y}\|^2
							= & \| \frac{1}{W} \sum_{w=1}^{W} \hat{y}_w - \bar{y}\|^2 \\
							=& \|(1-\delta) \cdot (\frac{1}{R} \sum_{w \in \mathcal{R}} \hat{y}_w - \bar{y}) + \delta \cdot \frac{1}{W - R} \sum_{w \in \mathcal{W} \setminus \mathcal{R}} (\hat{y}_w - \bar{y})\|^2 \nonumber \\
							\leq& 2(1-\delta)^2 \|\frac{1}{R} \sum_{w \in \mathcal{R}} (\hat{y}_w - y_w) \|^2 + 2\delta^2 \| \frac{1}{W - R} \sum_{w \in \mathcal{W} \setminus \mathcal{R}} (\hat{y}_w - \bar{y})\|^2 \nonumber \\
							\leq& 2(1-\delta)^2 \frac{1}{R} \sum_{w \in \mathcal{R}} \|\hat{y}_w - y_w\|^2 + 2\delta^2  \frac{1}{W - R} \sum_{w \in \mathcal{W} \setminus \mathcal{R}} \|\hat{y}_w - \bar{y}\|^2. \nonumber
						\end{align}
						where the last two inequalities are due to the Cauchy-Schwarz inequality.
						
						For any $w \in \mathcal{R}$, the term $\|\hat{y}_w - y_w\|$ holds
						\begin{align}
							\|\hat{y}_w - y_w\| = v^0 - y_w + \text{CLIP}(y_w - v^0, \tau).
						\end{align}
						If the regular message $y_w$ is not clipped, meaning that $\text{CLIP}(y_w - v^0, \tau) = y_w - v^0$, we have
						\begin{align}\label{proof:cc-not-clipped}
							\|\hat{y}_w - y_w\| = 0.
						\end{align}
						Otherwise, we have
						\begin{align}
							\|\hat{y}_w - y_w\| = \|v^0 - y_w - \frac{\tau}{\|y_w - v^l\|}(v^0 - y_w)\| = \|v^0 - y_w\| - \tau.
						\end{align}
						Since
						\begin{align}
							\|v^0 - y_w\| - \tau \leq \frac{\|v^0 - y_w\|^2}{\tau} \leq  \frac{2\|v^0 - \bar{y}\|^2 + 2\|y_w - \bar{y}\|^2}{\tau} \leq \frac{4\max_{w\in\mathcal{R}}\|y_w - \bar{y}\|^2}{\tau},
						\end{align}
						where the first inequality is due to $a - b \leq \frac{a^2}{b}$ that holds for any $a \geq 0, b > 0$, and the last inequality is due to $\|v_0 - \bar{y}\|^2 \leq \max_{w \in \mathcal{R}} \|y_w - \bar{y}\|^2$, we have
						\begin{align}\label{proof:cc-clipped}
							\|\hat{y}_w - y_w\| \leq \frac{4\max_{w\in\mathcal{R}}\|y_w - \bar{y}\|^2}{\tau}.
						\end{align}
						Combining \eqref{proof:cc-not-clipped} and \eqref{proof:cc-clipped}, we have
						\begin{align}\label{proof:cc-regular}
							\|\hat{y}_w - y_w\| \leq \frac{4\max_{w\in\mathcal{R}}\|y_w - \bar{y}\|^2}{\tau}.
						\end{align}

						For any $w \in \mathcal{W} \setminus \mathcal{R}$, the term $\|\hat{y}_w -\bar{y}\|^2$ holds
						\begin{align}\label{proof:cc-poisoned}
							\|\hat{y}_w - \bar{y}\|^2 \leq 2\|\hat{y}_w - v^0\|^2 + 2\|v^0 - \bar{y}\|^2 \leq 2\tau^2 + 2\max_{w \in \mathcal{R}} \|y_w -\bar{y}\|^2,
						\end{align}
						where the last inequality is due to $\|\hat{y}_w - v^0\|^2 = \|\text{CLIP}(y_w - v^0, \tau)\|^2 \leq \tau^2$ and $\|v_0 - \bar{y}\|^2 \leq \max_{w \in \mathcal{R}} \|y_w - \bar{y}\|^2$.
						
						Substituting \eqref{proof:cc-regular} and \eqref{proof:cc-poisoned} into \eqref{proof:cc-1}, we have
						\begin{align}
							&\|\text{CC}(\{y_1, \ldots, y_W\}) - \bar{y}\|^2 \\
							\leq &2(1-\delta)^2 (\frac{4\max_{w \in \mathcal{R}} \|y_w - \bar{y}\|^2}{\tau})^2 + 4\delta^2\tau^2 + 4\delta^2 \max_{w \in \mathcal{R}} \|y_w - \bar{y}\|^2 \nonumber\\
							\leq &24\delta(1 - \delta) \max_{w \in \mathcal{R}} \|y_w - \bar{y}\|^2 + 4\delta^2 \max_{w \in \mathcal{R}} \|y_w - \bar{y}\|^2 \nonumber\\
							\leq &24\delta\max_{w \in \mathcal{R}} \|y_w - \bar{y}\|^2 \nonumber,
						\end{align}
						where the second inequality is due to $\tau = \sqrt{\frac{4(1-\delta)\max_{w \in \mathcal{R}} \|y_w - \bar{y}\|^2}{\delta}}$.
						
						Therefore, we have
						\begin{align}\label{eq: cc-tau>0}
							\|\text{CC}(\{y_1, \ldots, y_W\}) - \bar{y}\| \leq \sqrt{24\delta} \max_{w \in \mathcal{R}} \|y_w - \bar{y}\|.
						\end{align}
						Combining \eqref{eq: cc-tau=0} and \eqref{eq: cc-tau>0}, we have that one-step CC is a $\rho$-robust aggregator with $\rho = \sqrt{24\delta}$. This completes the proof.
					\end{proof}
					
					\subsection{FABA}
					
					FABA is an aggregator that iteratively discards a possible outlier and averages the messages that remain after $W - R$ iterations. To be more concrete,
					denote $\mathcal{U}^{(i)}$ as the set of workers that are not discarded at the $i$-th iteration. Initialized with $\mathcal{U}^{(0)} = \{1, \ldots, W\}$, at iteration $i$, FABA computes the average of the messages from $\mathcal{U}^{(i)}$ and discards the worker whose message is farthest from that average to form $\mathcal{U}^{(i+1)}$. After $W - R$ iterations, FABA obtains $\mathcal{U}^{(W - R)}$ with $R$ workers, and then outputs
					\begin{align}\label{eq: FABA}
						\text{FABA}(\{y_1, \ldots, y_W\}) = \frac{1}{R} \sum_{w \in \mathcal{U}^{(W - R)}} y_w.
					\end{align}
					Below we prove that FABA is a $\rho$-robust aggregator if $\delta < \frac{1}{3}$.
					\begin{lemma}\label{lemma: FABA}
						Denote $\delta \triangleq 1 - \frac{R}{W}$ as the fraction of poisoned workers. If $\delta < \frac{1}{3}$, FABA is a $\rho$-robust aggregator with $\rho = \frac{2\delta}{1 - 3\delta}$.
					\end{lemma}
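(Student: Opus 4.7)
The plan is to track the running mean $\bar{y}^{(i)} = \frac{1}{|\mathcal{U}^{(i)}|}\sum_{w \in \mathcal{U}^{(i)}} y_w$ against $\bar{y}$ and exploit the key structural property of FABA: whenever a regular worker is discarded at step $i$, every remaining worker (in particular every surviving poisoned one) lies within the ball around $\bar{y}^{(i)}$ of radius $\|y_{w^*_i} - \bar{y}^{(i)}\|$. Let $\|\Delta\| \triangleq \max_{w \in \mathcal{R}}\|y_w - \bar{y}\|$, $\eta_i \triangleq \|\bar{y}^{(i)} - \bar{y}\|$, $\mathcal{U}_R^{(i)} = \mathcal{U}^{(i)} \cap \mathcal{R}$, $\mathcal{U}_P^{(i)} = \mathcal{U}^{(i)} \cap (\mathcal{W}\setminus\mathcal{R})$, and $P' = |\mathcal{U}_P^{(W-R)}|$. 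Since $W-R$ workers are discarded in total, $|\mathcal{R}\setminus\mathcal{U}_R^{(W-R)}| = P'$ as well.

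The first step is to use $\sum_{w \in \mathcal{R}}(y_w - \bar{y}) = 0$ to decompose
\begin{align}
\bar{y}^{(W-R)} - \bar{y} = \frac{1}{R}\sum_{w \in \mathcal{U}_P^{(W-R)}}(y_w - \bar{y}) - \frac{1}{R}\sum_{w \in \mathcal{R}\setminus\mathcal{U}_R^{(W-R)}}(y_w - \bar{y}), \nonumber
\end{align}
giving $\|\bar{y}^{(W-R)} - \bar{y}\| \leq \frac{P'}{R}\bigl(\max_{w \in \mathcal{U}_P^{(W-R)}}\|y_w - \bar{y}\| + \|\Delta\|\bigr)$. If $P' = 0$ the claim is immediate; otherwise let $i^*$ be the last step at which a regular worker is discarded. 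At step $i^*$, the discarded $w^*_{i^*} \in \mathcal{R}$ realises the maximum distance to $\bar{y}^{(i^*)}$, so for every surviving poisoned $w \in \mathcal{U}_P^{(W-R)} \subseteq \mathcal{U}_P^{(i^*)}$ two applications of the triangle inequality give $\|y_w - \bar{y}\| \leq \|y_{w^*_{i^*}} - \bar{y}^{(i^*)}\| + \eta_{i^*} \leq \|\Delta\| + 2\eta_{i^*}$.

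The crux is bounding $\eta_{i^*}$ without circularity. Applying the same decomposition to $\bar{y}^{(i^*)} - \bar{y}$ and again using the max-distance property to control poisoned-side summands by $\|\Delta\| + 2\eta_{i^*}$ and regular-side summands by $\|\Delta\|$, the algebra collapses to the self-referential inequality $\eta_{i^*}(2R - W - s) \leq (W - R + s)\|\Delta\|$, where $s \triangleq n_{i^*}^R - n_{i^*}^P$ with $n_{i^*}^R, n_{i^*}^P$ the numbers of regular and poisoned workers discarded before step $i^*$. Since $\delta < \tfrac{1}{3}$ and $s \leq n_{i^*}^R \leq W-R-1$ together force $2R - W - s > 0$, one solves to get $\|\Delta\| + \eta_{i^*} \leq \frac{R}{2R - W - s}\|\Delta\|$. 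Substituting back and using $P' = n_{i^*}^R + 1 = \tfrac{i^*+s}{2}+1$, the overall bound reduces to $\frac{i^*+s+2}{2R-W-s}\|\Delta\|$, which is maximised at $s = i^* = W-R-1$ and yields $\frac{2(W-R)}{3R-2W+1}\|\Delta\| \leq \frac{2\delta}{1-3\delta}\|\Delta\|$.

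The main obstacle is exactly this self-referential control of $\eta_{i^*}$: a naive induction from $i = 0$ fails because poisoned removals can shift $\bar{y}^{(i)}$ by amounts not a priori bounded by $\|\Delta\|$, and $\bar{y}^{(0)}$ already depends linearly on arbitrary poisoned messages. The trick is to avoid propagation entirely and invoke the max-distance condition only at the last regular-removal step $i^*$; a single triangle inequality then transports the local bound at $\bar{y}^{(i^*)}$ back to $\bar{y}$ for every surviving poisoned worker and closes the loop in one shot.
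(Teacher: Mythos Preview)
Your proof is correct and reaches the same constant $\rho=\frac{2\delta}{1-3\delta}$, but the route differs from the paper's. The paper introduces the regular and poisoned submeans $\bar{y}_{\mathcal{R}^{(i)}},\bar{y}_{\mathcal{P}^{(i)}}$ and the running poisoned fraction $u_i$, proves a contrapositive claim that a regular removal at step $i$ forces $\|\bar{y}_{\mathcal{R}^{(i)}}-\bar{y}_{\mathcal{P}^{(i)}}\|\le\frac{1}{1-2u_i}\max_{w\in\mathcal{R}}\|y_w-\bar{y}_{\mathcal{R}^{(i)}}\|$, and then splits $\|\bar{y}_{\mathcal{U}^{(W-R)}}-\bar{y}\|$ through the intermediate point $\bar{y}_{\mathcal{U}^{(i^*)}}$, bounding each half with that claim. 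You instead bypass the submeans entirely: you write $\bar{y}^{(W-R)}-\bar{y}$ directly as a signed sum over surviving poisoned and discarded regular workers, bound each surviving poisoned message individually by $\|\Delta\|+2\eta_{i^*}$ via the max-distance property at step $i^*$, and close the loop with a single self-referential inequality for $\eta_{i^*}$. Your approach is arguably leaner (no auxiliary submean claim, fewer moving parts) and even yields the slightly sharper intermediate constant $\frac{2(W-R)}{3R-2W+1}$ before relaxing to $\frac{2\delta}{1-3\delta}$; the paper's route has the advantage of making the structural reason (poisoned submean trapped near regular submean whenever a regular is removed) more explicit.
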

					
					\begin{proof}
						For notational convenience, denote $\mathcal{R}^{(i)} \triangleq \mathcal{U}^{(i)} \cap \mathcal{R}$ and $\mathcal{P}^{(i)} \triangleq \mathcal{U}^{(i)} \cap (\mathcal{W} \setminus \mathcal{R})$ as the sets of the regular workers and the poisoned workers in $\mathcal{U}^{(i)}$, respectively. Further denote three different averages
						\begin{align}
							\bar{y}_{\mathcal{U}^{(i)}} \triangleq \frac{1}{|\mathcal{U}^{(i)}|} \sum_{w \in \mathcal{U}^{(i)}} y_w, \quad \bar{y}_{\mathcal{R}^{(i)}} \triangleq \frac{1}{|\mathcal{R}^{(i)}|} \sum_{w \in \mathcal{R}^{(i)}} y_w, \quad \bar{y}_{\mathcal{P}^{(i)}} \triangleq \frac{1}{|\mathcal{P}^{(i)}|} \sum_{w \in \mathcal{P}^{(i)}} y_w
						\end{align}
						over $\mathcal{U}^{(i)}$, $\mathcal{R}^{(i)}$ and $\mathcal{P}^{(i)}$, respectively. Then
						\begin{align}\label{eq: u-r-p-relation}
							\bar{y}_{\mathcal{U}^{(i)}} = (1 - u_i) \cdot \bar{y}_{\mathcal{R}^{(i)}} + u_i \cdot \bar{y}_{\mathcal{P}^{(i)}},
						\end{align}
						and our goal is to bound $\|\bar{y}_{\mathcal{U}^{(P)}} - \bar{y}\|$ by $\max_{w \in \mathcal{R}} \|y_w - \bar{y}\|$.
						
						Denote $u_i \triangleq \frac{|\mathcal{P}^{(i)}|}{|\mathcal{U}^{(i)}|}$ as the fraction of the poisoned workers in $\mathcal{U}^{(i)}$. From $\delta < \frac{1}{3}$, $u_i\le\frac{W - R}{R} < \frac{1}{2}$ for any $i \in \{0, \ldots, W - R\}$.
						We claim that a regular worker is filtered out at iteration $i$ only if
						\begin{align}\label{eq:claim-for-faba}
							\|\bar{y}_{\mathcal{R}^{(i)}} - \bar{y}_{\mathcal{P}^{(i)}}\| \leq \frac{1}{1 - 2u_i} \max_{w \in \mathcal{R}} \|y_w - \bar{y}_{\mathcal{R}^{(i)}}\|.
						\end{align}
						This is because if $\|\bar{y}_{\mathcal{R}^{(i)}} - \bar{y}_{\mathcal{P}^{(i)}}\| > \frac{1}{1 - 2u_i} \max_{w \in \mathcal{R}} \|y_w - \bar{y}_{\mathcal{R}^{(i)}}\|$, then for any $w\in\mathcal{R}$, we have
						\begin{align} \label{proof: faba-case1-3}
							\|y_w - \bar{y}_{\mathcal{U}^{(i)}}\|
							\leq& \|y_w - \bar{y}_{\mathcal{R}^{(i)}}\| + \|\bar{y}_{\mathcal{R}^{(i)}}  -  \bar{y}_{\mathcal{U}^{(i)}} \| \\
							=&  \|y_w - \bar{y}_{\mathcal{R}^{(i)}}\|  + \frac{u_i}{1-u_i} \|\bar{y}_{\mathcal{P}^{(i)}} - \bar{y}_{\mathcal{U}^{(i)}} \| \nonumber\\
							\leq&  \max_{w \in \mathcal{R}} \|y_w - \bar{y}_{\mathcal{R}^{(i)}}\|  + \frac{u_i}{1-u_i} \|\bar{y}_{\mathcal{P}^{(i)}} - \bar{y}_{\mathcal{U}^{(i)}} \| \nonumber\\
							<& \frac{1-2u_i}{1-u_i}\|\bar{y}_{\mathcal{P}^{(i)}} - \bar{y}_{\mathcal{U}^{(i)}} \| + \frac{u_i}{1-u_i}\|\bar{y}_{\mathcal{P}^{(i)}} - \bar{y}_{\mathcal{U}^{(i)}} \|\nonumber\\
							\le& \max_{w\in \mathcal{P}^{(i)}} \|y_{w} - \bar{y}_{\mathcal{U}^{(i)}} \|,\nonumber
						\end{align}
						where \eqref{eq: u-r-p-relation} is applied to both the second and fourth lines. Therefore, there exists $w^\prime \in \mathcal{P}^{(i)}$ with farther distance to $\bar{y}_{\mathcal{U}^{(i)}}$ than all the regular workers, which guarantees that all the remaining regular workers will not be removed in this iteration.

						If at every iteration FABA discards a poisoned worker, then $\mathcal{U}^{(W - R)}=\mathcal{R}$ and
						\begin{align}\label{proof:faba-first-case}
							\|\bar{y}_{\mathcal{U}^{(W - R)}} - \bar{y}\| = 0 \leq \frac{2\delta}{1 - 3\delta}  \cdot \max_{w \in \mathcal{R}} \|y_w - \bar{y}\|.
						\end{align}
						Otherwise, there are iterations with regular workers removed. Denote $i^*$ as the last one among the $W - R$ iterations that removes the regular worker. Denote $w^{(i^*)}$ as the discarded worker at iteration $i^*$, we have $w^{(i^*)} \in \mathcal{R}$ and from the algorithmic principle of removal
						\begin{align}\label{eq:faba-principle}
							\|y_{w^{(i^*)}} - \bar{y}_{\mathcal{U}^{(i^*)}}\| = \max_{w \in \mathcal{U}^{(i^*)}} \|y_w - \bar{y}_{\mathcal{U}^{(i^*)}}\| .
						\end{align}
						
						Note that
						\begin{align}\label{proof:faba-case2-1}
							\|\bar{y}_{\mathcal{U}^{(W - R)}} - \bar{y}\| \leq \|\bar{y}_{\mathcal{U}^{(W - R)}} - \bar{y}_{\mathcal{U}^{(i^*)}}\| + \|\bar{y}_{\mathcal{U}^{(i^*)}} - \bar{y}\|,
						\end{align}
						thus it suffices to bound the two terms at the right separately. First we notice that
						\begin{align}\label{proof:faba-case2-2}
							\|\bar{y}_{\mathcal{U}^{(W - R)}} - \bar{y}_{\mathcal{U}^{(i^*)}}\|
							=& \|\frac{1}{|\mathcal{U}^{(W - R)}|} \sum_{w \in \mathcal{U}^{(W - R)}} y_w - \frac{1}{|\mathcal{U}^{(i^*)}|} \sum_{w \in \mathcal{U}^{(i^*)}} y_w\| \\
							=& \|(\frac{1}{|\mathcal{U}^{(W - R)}|} - \frac{1}{|\mathcal{U}^{(i^*)}|})\sum_{w \in \mathcal{U}^{(i^*)}} y_w - \frac{1}{|\mathcal{U}^{(W - R)}|} \sum_{w \in \mathcal{U}^{(i^*)} \setminus \mathcal{U}^{(W - R )}}  y_w\| \nonumber \\
							=& \frac{1}{|\mathcal{U}^{(W - R)}|} \|\sum_{w \in \mathcal{U}^{(i^*)} \setminus \mathcal{U}^{(W - R)}} (y_w - \bar{y}_{\mathcal{U}^{(i^*)}})\| \nonumber\\
							\leq&\frac{|\mathcal{U}^{(i^*)}| - |\mathcal{U}^{(W - R)}|}{|\mathcal{U}^{(W - R)}|} \max_{w \in \mathcal{U}^{(i^*)} \setminus \mathcal{U}^{(W - R)}} \|y_w - \bar{y}_{\mathcal{U}^{(i^*)}}\|\nonumber\\
							\leq& \frac{W - R -i^*}{R} \|y_{w^{(i^*)}} - \bar{y}_{\mathcal{U}^{(i^*)}}\| \nonumber\\
							\leq & \frac{W - R -i^*}{R} (\|y_{w^{(i^*)}} - \bar{y}_{\mathcal{R}^{(i^*)}}\| +\| \bar{y}_{\mathcal{R}^{(i^*)}} -  \bar{y}_{\mathcal{U}^{(i^*)}}\|  ) \nonumber\\
							= & \frac{W - R -i^*}{R} (\|y_{w^{(i^*)}} - \bar{y}_{\mathcal{R}^{(i^*)}}\| + u_{i^*}\| \bar{y}_{\mathcal{R}^{(i^*)}} -  \bar{y}_{\mathcal{P}^{(i^*)}}\|  ) \nonumber \\
							\leq &\frac{W - R -i^*}{R} \left(\max_{w \in \mathcal{R}}\|y_{w} -  \bar{y}_{\mathcal{R}^{(i^*)}}\|  +  \frac{u_{i^*}}{1 - 2u_{i^*}} \cdot\max_{w \in \mathcal{R}}\|y_{w} -  \bar{y}_{\mathcal{R}^{(i^*)}}\|\right) \nonumber \\
							=& \frac{W - R -i^*}{R} \cdot \frac{1 - u_{i^*}}{1 - 2u_{i^*}} \max_{w \in \mathcal{R}}\|y_{w} - \bar{y}_{\mathcal{R}^{(i^*)}}\|, \nonumber
						\end{align}
						where the second inequality is from \eqref{eq:faba-principle} and the last inequality is from \eqref{eq:claim-for-faba} with $i=i^*$.
						Additionally, the second term at the right-hand side of \eqref{proof:faba-case2-1} has upper bound
						\begin{align}\label{proof:faba-case2-3}
							\|\bar{y}_{\mathcal{U}^{(i^*)}} - \bar{y}\| = \|(1 - u_{i^*}) \cdot \bar{y}_{\mathcal{R}^{(i^*)}} + u_{i^*} \cdot \bar{y}_{\mathcal{P}^{(i^*)}} - \bar{y}\| \leq \| \bar{y}_{\mathcal{R}^{(i^*)}}   - \bar{y}\| + u_{i^*} \| \bar{y}_{\mathcal{R}^{(i^*)}} - \bar{y}_{\mathcal{P}^{(i^*)}}\|.
						\end{align}
						For $\| \bar{y}_{\mathcal{R}^{(i^*)}} - \bar{y}\| $, we have
						\begin{align}\label{proof:faba-case2-4}
							\| \bar{y}_{\mathcal{R}^{(i^*)}}   - \bar{y}\|
							=& \|\frac{1}{|\mathcal{R}^{(i^*)}|} \sum_{w \in \mathcal{R}^{(i^*)}} y_w - \frac{1}{R} \sum_{w \in \mathcal{R}} y_w\| \\
							=& \frac{1}{|\mathcal{R}^{(i^*)}|} \| \sum_{w \in \mathcal{R} \setminus \mathcal{R}^{i^*}} (\bar{y} - y_w) \| \nonumber \\
							\leq& \frac{R-|\mathcal{R}^{(i^*)}|}{|\mathcal{R}^{(i^*)}|}  \max_{w\in  \mathcal{R}} \| y_w - \bar{y}\|. \nonumber
						\end{align}
						Substituting \eqref{proof:faba-case2-4} and \eqref{eq:claim-for-faba} into \eqref{proof:faba-case2-3}, we have
						\begin{align}\label{proof:faba-case2-5}
							\|\bar{y}_{\mathcal{U}^{(i^*)}} - \bar{y}\| \leq \frac{R-|\mathcal{R}^{(i^*)}|}{|\mathcal{R}^{(i^*)}|}  \max_{w\in  \mathcal{R}} \| y_w - \bar{y}\| + \frac{u_{i^*}}{1 - 2u_{i^*}} \max_{w \in \mathcal{R}} \|y_w - \bar{y}_{\mathcal{R}^{(i^*)}}\|.
						\end{align}
						Substituting \eqref{proof:faba-case2-2} and \eqref{proof:faba-case2-5} into \eqref{proof:faba-case2-1}, we have
						\begin{align}
							&\|\bar{y}_{\mathcal{U}^{(W - R)}} - \bar{y}\| \\ \leq  &(\frac{W - R -i^*}{R} \cdot \frac{1 - u_{i^*}}{1 - 2u_{i^*}} + \frac{u_{i^*}}{1 - 2u_{i^*}}) \max_{w \in \mathcal{R}} \|y_w - \bar{y}_{\mathcal{R}^{(i^*)}}\| + \frac{R-|\mathcal{R}^{(i^*)}|}{|\mathcal{R}^{(i^*)}|}  \max_{w\in  \mathcal{R}} \| y_w - \bar{y}\|. \nonumber
						\end{align}
						Since
						\begin{align}
							\max_{w \in \mathcal{R}} \|y_w - \bar{y}_{\mathcal{R}^{(i^*)}}\| \leq \max_{w \in \mathcal{R}} \|y_w - \bar{y} \| + \| \bar{y}_{\mathcal{R}^{(i^*)}} - \bar{y}\| \leq \frac{R}{|\mathcal{R}^{(i^*)}|}  \max_{w\in  \mathcal{R}} \| y_w - \bar{y}\|,
						\end{align}
						where the last inequality comes from \eqref{proof:faba-case2-4}, we have
						\begin{align}
							&\|\bar{y}_{\mathcal{U}^{(W - R)}} - \bar{y}\| \\
							\leq& \Big((\frac{W - R -i^*}{R} \cdot \frac{1 - u_{i^*}}{1 - 2u_{i^*}} + \frac{u_{i^*}}{1 - 2u_{i^*}} ) \frac{R}{|\mathcal{R}^{(i^*)}|}  + \frac{R-|\mathcal{R}^{(i^*)}|}{|\mathcal{R}^{(i^*)}|}\Big) \max_{w \in \mathcal{R}} \|y_w - \bar{y}\| \nonumber \\
							=& \frac{2u_{i^*}}{1 - 2u_{i^*}}  \max_{w \in \mathcal{R}} \|y_w - \bar{y}\|.
						\end{align}
						Since $u_{i^*} = \frac{|\mathcal{P}^{(i^*)}|}{|\mathcal{U}^{(i^*)}|} \leq \frac{W - R}{R}=\frac{\delta}{1-\delta}$, we have
						\begin{align}\label{proof:faba-second-case}
							\|\bar{y}_{\mathcal{U}^{(W - R)}} - \bar{y}\| \leq \frac{2\delta}{1 - 3\delta}  \cdot \max_{w \in \mathcal{R}} \|y_w - \bar{y}\|.
						\end{align}

						Combining the first case \eqref{proof:faba-first-case} and the second case \eqref{proof:faba-second-case}, we have
						\begin{align}
							\|\text{FABA}(\{y_1, \ldots, y_W\} - \bar{y})\|	=\|\bar{y}_{\mathcal{U}^{(P)}} - \bar{y}\| \leq \frac{2\delta}{1 - 3\delta}  \cdot \max_{w \in \mathcal{R}} \|y_w - \bar{y}\|,
						\end{align}
						which completes the proof.		
					\end{proof}
					
					\section{Proof of Theorem \ref{thm:convergence with RAgg}}
					\label{sec: Appendix C}

					We give a complete version of Theorem \ref{thm:convergence with RAgg} as follows.
					\begin{theorem}\label{complete version of convergence with RAgg}
						Consider Algorithm \ref{algorithm: 1} with a $\rho$-robust aggregator $\text{RAgg}(\cdot)$ to solve \eqref{problem} and suppose that Assumptions \ref{Assump: lower boundedness}, \ref{Assump: L-smooth}, \ref{Assump: bounded heterogeneity}, and \ref{Assump: bounded variance} hold.
						Under label poisoning attacks where the fraction of poisoned workers is $\delta \in [0, \frac{1}{2})$, if the step size is
						\begin{align}
							\gamma = \min\Big\{\sqrt{\frac{4(f(x^0) - f^*) + \frac{15\rho^2(R+\frac{1}{R})\sigma^2}{8L}}{T(40L\sigma^2)(3\rho^2(R+\frac{1}{R}) + \frac{2}{R})}}, \frac{1}{8L}\Big\},
						\end{align}
						the momentum coefficient $\alpha = 8L\gamma$, then we have
						\begin{align}\label{eq: thm1}
							&\frac{1}{T} \sum_{t=1}^{T} \E\|\nabla f(x^t)\|^2 \\ \leq&  15 \rho^2  \xi^2 +  \sqrt{\frac{20L\sigma^2(\frac{2}{R} + 3\rho^2 (R+\frac{1}{R}))}{T}} \cdot \sqrt{32(f(x^0) - f^*) + \frac{15}{L}\rho^2(R+\frac{1}{R})\sigma^2} \nonumber\\ + &\frac{32L(f(x^0) - f^*)}{T} + \frac{15\rho^2 (R+\frac{1}{R})\sigma^2}{T} + \frac{\frac{10\sigma^2}{R} + 12\rho^2((R+\frac{1}{R})\sigma^2 + \xi^2) - \|\nabla f(x^0)\|}{T}. \nonumber
						\end{align}
						where the expectation is taken over the algorithm's randomness.
					\end{theorem}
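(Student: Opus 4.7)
The plan is to combine a standard $L$-smoothness descent inequality with a Lyapunov analysis that tracks two momentum-related error quantities. Let $g^t \triangleq \text{RAgg}(\{\hat{m}_w^t : w\in\mathcal{W}\})$ denote the aggregated update direction and $\bar{m}^t \triangleq \frac{1}{R}\sum_{w\in\mathcal{R}} m_w^t$ the virtual average of regular momenta. Define the bias $E_B^t \triangleq \E\|\bar{m}^t - \nabla f(x^t)\|^2$ and the heterogeneity $E_H^t \triangleq \E[\max_{w\in\mathcal{R}}\|m_w^t - \bar{m}^t\|^2]$. By Definition \ref{Def: (delta_max, rho)-RAR}, $\|g^t - \bar{m}^t\|^2 \leq \rho^2 \max_{w\in\mathcal{R}} \|m_w^t - \bar{m}^t\|^2$, so $E_H^t$ is exactly what controls the aggregation error, while $E_B^t$ captures the gap between the ideal momentum and the true gradient.

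First I would invoke Assumption \ref{Assump: L-smooth} to obtain
\begin{align*}
f(x^{t+1}) \leq f(x^t) - \tfrac{\gamma}{2}\|\nabla f(x^t)\|^2 + \tfrac{\gamma}{2}\|g^t - \nabla f(x^t)\|^2 + \tfrac{L\gamma^2}{2}\|g^t\|^2,
\end{align*}
then split $g^t - \nabla f(x^t) = (g^t - \bar{m}^t) + (\bar{m}^t - \nabla f(x^t))$ and bound $\|g^t\|^2$ by a constant multiple of $\|\nabla f(x^t)\|^2$, $\|g^t-\bar{m}^t\|^2$, and $\|\bar{m}^t-\nabla f(x^t)\|^2$. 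Imposing $\gamma L \leq 1/8$ absorbs the positive $\|\nabla f(x^t)\|^2$ contribution coming from $\|g^t\|^2$ into the negative descent term, so that after taking expectations every iteration decreases $f$ by at least $\tfrac{\gamma}{4}\E\|\nabla f(x^t)\|^2$ modulo contributions proportional to $E_B^t$ and $\rho^2 E_H^t$.

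Second, I would derive two parallel one-step recursions. Averaging the momentum update over $\mathcal{R}$, applying Assumption \ref{Assump: bounded variance}, and using $\|\nabla f(x^t) - \nabla f(x^{t-1})\| \leq L\gamma\|g^{t-1}\|$ combined with Young's inequality with weight $\alpha$ gives
\begin{align*}
E_B^t \leq (1-\alpha)E_B^{t-1} + \tfrac{2\alpha^2\sigma^2}{R} + \tfrac{L^2\gamma^2}{\alpha}\E\|g^{t-1}\|^2.
\end{align*}
For the heterogeneity, the identity $m_w^t - \bar{m}^t = (1-\alpha)(m_w^{t-1}-\bar{m}^{t-1}) + \alpha\bigl(\nabla f_{w,i_w^t}(x^t) - \frac{1}{R}\sum_{w'\in\mathcal{R}}\nabla f_{w',i_{w'}^t}(x^t)\bigr)$ combined with Assumptions \ref{Assump: bounded heterogeneity} and \ref{Assump: bounded variance} and $\max_{w\in\mathcal{R}}\|\cdot\|^2\leq\sum_{w\in\mathcal{R}}\|\cdot\|^2$ yields a recursion of the form $E_H^t \leq (1-\alpha)E_H^{t-1} + O(\alpha^2(R\sigma^2 + \xi^2))$, which accounts for the $R$ factor appearing in the final bound.

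Finally I would form the Lyapunov potential $\Phi^t \triangleq (f(x^t)-f^*) + c_1 E_B^t + c_2 \rho^2 E_H^t$ with $c_1, c_2$ tuned so that the positive feedback term $\tfrac{L^2\gamma^2}{\alpha}\E\|g^{t-1}\|^2$ (which itself decomposes into $\|\nabla f(x^{t-1})\|^2$ plus the two error quantities) is dominated by the $-\tfrac{\gamma}{4}\E\|\nabla f(x^t)\|^2$ budget. The choice $\alpha = 8L\gamma$ is exactly what makes the contraction $(1-\alpha)$ in each recursion strong enough to absorb this feedback. Telescoping $\Phi^t$ from $t=0$ to $T-1$, using the initialization $m_w^{-1}=\nabla f_{w,i_w^0}(x^0)$ to control $\Phi^0$, dividing by $\gamma T/4$, and plugging in the stated $\gamma$ yields the $O(1/\sqrt{T})$ and $O(1/T)$ rates alongside the non-vanishing floor $15\rho^2\xi^2$. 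The main obstacle will be the bookkeeping on the heterogeneity recursion: replacing $\max_{w\in\mathcal{R}}\|\cdot\|^2$ by $\sum_{w\in\mathcal{R}}\|\cdot\|^2$ naively inflates both the $\sigma^2$ and $\xi^2$ contributions by $R$, yet the stated floor depends only on $\rho^2\xi^2$ and not on $R\xi^2$. Separating the heterogeneity component (which averages out across $\mathcal{R}$) from the variance component (which does not) cleanly enough to land exactly on the constants in the theorem is where the delicate counting lives.
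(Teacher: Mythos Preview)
Your overall architecture—descent inequality, $\rho$-robustness to reduce to $\max_{w\in\mathcal{R}}\|m_w^t-\bar m^t\|^2$, a recursion for the bias $E_B^t=\E\|\bar m^t-\nabla f(x^t)\|^2$, and a Lyapunov telescoping with $\alpha=8L\gamma$—matches the paper's proof. The $E_B^t$ recursion you wrote is essentially the paper's (it cites \citet[Lemma~10]{karimireddy2021byzantine} for the same inequality).

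The gap is in your treatment of $E_H^t$. First, the stated recursion $E_H^t\le(1-\alpha)E_H^{t-1}+O(\alpha^2(R\sigma^2+\xi^2))$ is not attainable: writing $m_w^t-\bar m^t=(1-\alpha)(m_w^{t-1}-\bar m^{t-1})+\alpha d_w^t$ and expanding the square, the cross term has conditional expectation $2\alpha(1-\alpha)\langle m_w^{t-1}-\bar m^{t-1},\nabla f_w(x^t)-\nabla f(x^t)\rangle$, which is \emph{not} zero. You therefore only get $\alpha\xi^2$ forcing, not $\alpha^2\xi^2$, and the steady state of a single $E_H^t$ recursion is $O(R\sigma^2+\xi^2)$, producing a non-vanishing $\rho^2 R\sigma^2$ floor that the theorem does not have.

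The paper avoids this by \emph{not} tracking $E_H^t$ through a single recursion. Instead it inserts $\E[m_w^t]$ and splits
\[
\max_{w\in\mathcal R}\|m_w^t-\bar m^t\|^2\le 3\Big(\max_{w\in\mathcal R}\|m_w^t-\E[m_w^t]\|^2+\|\bar m^t-\E[\bar m^t]\|^2+\max_{w\in\mathcal R}\|\E[m_w^t]-\E[\bar m^t]\|^2\Big).
\]
The first two pieces are pure noise, so their recursions have $(1-\alpha)^2$ contraction with $\alpha^2\sigma^2$ forcing, yielding bounds $R\sigma^2(\alpha+(1-\alpha)^{2t})$ and $\frac{\sigma^2}{R}(\alpha+(1-\alpha)^{2t})$; the factor $(\alpha+(1-\alpha)^{2t})$ is what makes $\rho^2(R+\tfrac1R)\sigma^2$ appear only in the $O(1/\sqrt T)$ and $O(1/T)$ terms after averaging over $t$. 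The third piece is deterministic, satisfies $C^t\le(1-\alpha)C^{t-1}+\alpha\xi^2$, hence $C^t\le\xi^2$, and this is the sole source of the $15\rho^2\xi^2$ floor. Your closing remark correctly senses that variance and heterogeneity must be separated, but the separation has to happen \emph{before} writing the recursion, via the centering at $\E[m_w^t]$, not by post-processing a joint $E_H^t$ bound.
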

					
					\begin{proof}
						For notational convenience, denote $m^t = \text{RAgg}(\{\hat{m}_w^t: w \in \mathcal{W}\})$ and $\bar{m}^t = \frac{1}{R} \sum_{w \in \mathcal{R}} m_w^t$. Denote the conditional expectation $\E[\cdot | i_w^\tau: \tau < t, w \in \mathcal{W}]$ as $\E_t[\cdot]$. Because $f(x)$ has $L$-Lipschitz continuous gradients from Assumption \ref{Assump: L-smooth}, it holds that
						\begin{align}
							f(x^{t+1}) &\leq f(x^t) + \langle \nabla f(x^t), x^{t+1} - x^t \rangle + \frac{L}{2} \|x^{t+1} - x^t\|^2 \\
							&\leq f(x^t) - \gamma \langle \nabla f(x^t), m^t \rangle + \frac{L}{2} \gamma^2\|m^t\|^2. \nonumber
						\end{align}
						Since
						\begin{align}
							-\langle  \nabla f(x^t), m^t\rangle = \frac{1}{2} \|m^t -\nabla f(x^t)\|^2 - \frac{1}{2} \|\nabla f(x^t)\|^2 - \frac{1}{2} \|m^t\|^2,
						\end{align}
						we have
						\begin{align}\label{proof:thm1-1}
							f(x^{t+1}) \leq& f(x^t) + \frac{\gamma}{2} \|m^t - \nabla f(x^t)\|^2 - \frac{\gamma}{2} \|\nabla f(x^t)\|^2 - \frac{\gamma}{2} (1 - L \gamma) \|m^t\|^2 \\
							\leq &f(x^t) + \frac{\gamma}{2} \|m^t - \nabla f(x^t)\|^2 - \frac{\gamma}{2} \|\nabla f(x^t)\|^2\nonumber \\
							\leq &f(x^t) + \gamma \|m^t - \bar{m}^t\|^2 + \gamma \|\bar{m}^t - \nabla f(x^t)\|^2 - \frac{\gamma}{2} \|\nabla f(x^t)\|^2\nonumber \\
							= &f(x^t) + \gamma \|m^t - \bar{m}^t\|^2 + \gamma \|e^t\|^2 - \frac{\gamma}{2} \|\nabla f(x^t)\|^2,\nonumber
						\end{align}
						where the second inequality is from $\gamma \leq \frac{1}{8L} < \frac{1}{L}$ and $e^t \triangleq \bar{m}^t - \nabla f(x^t)$. Taking expectations on both sides of \eqref{proof:thm1-1} reaches
						\begin{align}
							\E[f(x^{t+1})] \leq \E[f(x^t)] + \gamma \E\|m^t - \bar{m}^t\|^2 + \gamma \E\|e^t\|^2 - \frac{\gamma}{2} \E \|\nabla f(x^t)\|^2.
						\end{align}

						For the term $\|m^t - \bar{m}^t\|^2$, since $\text{RAgg}(\cdot)$ is a $\rho$-robust aggregator with Definition \ref{Def: (delta_max, rho)-RAR}, it holds that
						\begin{align}\label{proof:thm1-rho}
							\|m^t - \bar{m}^t\|^2 \leq \rho^2 \cdot \max_{w \in \mathcal{R} } \|m_w^t - \bar{m}^t\|^2.
						\end{align}
						For the term $\max_{w \in \mathcal{R} } \|m_w^t - \bar{m}^t\|^2$, we have
						\begin{align}\label{proof:thm1-2}
							&\max_{w \in \mathcal{R} } \|m_w^t - \bar{m}^t\|^2 \\ \leq & \max_{w \in \mathcal{R}} \Big\{ 3 \|m_w^t - \E[m_w^t]\|^2 + 3 \|\bar{m}^t - \E[\bar{m}^t]\|^2 + 3\|\E[m_w^t] - \E[\bar{m}^t]\|^2\Big\} \nonumber \\
							\leq &3 \max_{w \in \mathcal{R} } \|m_w^t - \E[m_w^t]\|^2 + 3 \|\bar{m}^t - \E[\bar{m}^t]\|^2 + 3\max_{w \in \mathcal{R} } \|\E[m_w^t] - \E[\bar{m}^t]\|^2. \nonumber
						\end{align}
						We will bound the expectation of each term at the right-hand side of \eqref{proof:thm1-2} as follows.
						
						For the first term at the right-hand side of \eqref{proof:thm1-2}, we have
						\begin{align}
							\E[\max_{w \in \mathcal{R} } \|m_w^t - \E[m_w^t]\|^2] \leq \E[\sum_{w \in \mathcal{R} } \|m_w^t - \E[m_w^t]\|^2]  =  \sum_{w \in \mathcal{R}} \E \|m_w^t - \E[m_w^t]\|^2.
						\end{align}
						For any $w \in \mathcal{R}$, denoting
						\begin{align}
							A_w^t \triangleq  \E \|m_w^t - \E[m_w^t]\|^2,
						\end{align}
						we obtain that
						\begin{align}\label{proof: thm1-At-1}
							A_w^t &= \E\|(1 - \alpha) (m_w^{t-1} - \E[m_w^{t-1}]) + \alpha (\nabla f_{w, i_w^t}(x^t) - \nabla f_w(x^t))\|^2 \\
							&= (1 - \alpha)^2 \E \|m_w^{t-1} - \E[m_w^{t-1}]\|^2 + \alpha^2 \E\|\nabla f_{w, i_w^t}(x^t) - \nabla f_w(x^t)\|^2 \nonumber \\
							&= (1 - \alpha)^2 \E \|m_w^{t-1} - \E[m_w^{t-1}]\|^2 + \alpha^2 \E[\E_t\|\nabla f_{w, i_w^t}(x^t) - \nabla f_w(x^t)\|^2] \nonumber \\
							& \leq (1 - \alpha)^2 A_w^{t-1} + \alpha^2 \sigma^2  \nonumber \\
							& \leq (1 - \alpha)^{2t} A_w^{0} + (\sum_{l=1}^{t} (1 - \alpha)^{2(t-l)})\alpha^2 \sigma^2,  \nonumber
						\end{align}
						where the first inequality comes from Assumption \ref{Assump: bounded variance} and the second inequality is due to the use of telescopic cancellation. Since
						\begin{align}\label{proof: thm1-At-2}
							A_w^0 = \E \|m_w^0 - \E[m_w^0]\|^2 = \E\|\nabla f_{w, i_w^0}(x^0) - \nabla f_w(x^0)\|^2 \leq \sigma^2,
						\end{align}
						where the inequality is due to Assumption \ref{Assump: bounded variance}, we have
						\begin{align}\label{proof: thm1-At-3}
							A_w^t \leq \sigma^2 (\alpha + (1 - \alpha)^{2t}),
						\end{align}
						and
						\begin{align}\label{proof:thm1-At}
							\E[\max_{w \in \mathcal{R} } \|m_w^t - \E[m_w^t]\|^2] \leq R\sigma^2 (\alpha + (1 - \alpha)^{t+1}).
						\end{align}

						For the second term at the right-hand side of \eqref{proof:thm1-2}, denoting
						\begin{align}
							B^t \triangleq  \E \|\bar{m}^t - \E[\bar{m}^t]\|^2,
						\end{align}
						we have that
						\begin{align}\label{proof: thm1-Bt-1}
							B^t &= \E \|\frac{1}{R} \sum_{w \in \mathcal{R}} (m_w^t - \E[m_w^t])\|^2 \\
							& = \frac{1}{R^2} \E\|\sum_{w \in \mathcal{R}} (m_w^t - \E[m_w^t]) \|^2 \nonumber \\
							& = \frac{1}{R^2} \Big(\sum_{w \in \mathcal{R}} \E\| m_w^t - \E[m_w^t] \|^2 + \sum_{w \in \mathcal{R}} \sum_{v \in \mathcal{R}, v \neq w} \E \langle m_w^t - \E[m_w^t], m_v^t - \E[m_v^t] \rangle\Big) \nonumber \\
							& = \frac{1}{R^2} \Big(\sum_{w \in \mathcal{R}} \E\| m_w^t - \E[m_w^t] \|^2 + \sum_{w \in \mathcal{R}} \sum_{v \in \mathcal{R}, v \neq w} \underbrace{ \langle \E[m_w^t] - \E[m_w^t], \E[m_v^t] - \E[m_v^t] \rangle}_{=0}\Big) \nonumber \\
							& = \frac{1}{R^2}  \sum_{w \in \mathcal{R}} \E\| m_w^t - \E[m_w^t] \|^2 \nonumber\\
							& = \frac{1}{R^2} \sum_{w \in \mathcal{R}} A_w^t. \nonumber
						\end{align}
						With \eqref{proof: thm1-At-3},  we obtain that
						\begin{align}\label{proof:thm1-Bt}
							B^t \leq \frac{\sigma^2}{R} (\alpha + (1 - \alpha)^{2t}).
						\end{align}
						
						For the third term at the right-hand side of \eqref{proof:thm1-2}, denoting
						\begin{align}
							C^t \triangleq \max_{w \in \mathcal{R} } \|\E[m_w^t] - \E[\bar{m}^t]\|^2,
						\end{align}
						we have
						\begin{align}
							C^t =& \max_{w \in \mathcal{R} } \|(1 - \alpha) (\E[m_w^{t-1}] - \E[\bar{m}^{t-1}]) + \alpha (\nabla f_w(x^t) - \nabla f(x^t))\|^2 \\
							\leq & \max_{w \in \mathcal{R} } \Big\{ (1 - \alpha) \|\E[m_w^{t-1}] - \E[\bar{m}^{t-1}]\|^2 + \alpha \|\nabla f_w(x^t) - \nabla f(x^t)\|^2\Big\} \nonumber \\
							\leq &  (1 - \alpha)  \max_{w \in \mathcal{R} }  \|\E[m_w^{t-1}] - \E[\bar{m}^{t-1}]\|^2 + \alpha\max_{w \in \mathcal{R} } \|\nabla f_w(x^t) - \nabla f(x^t)\|^2 \nonumber \\
							\leq & (1 - \alpha) C^{t-1} + \alpha \xi^2 \nonumber \\
							\leq & (1 - \alpha)^t C^0 + (\sum_{l=0}^{t-1} (1 - \alpha)^l)\alpha  \xi^2, \nonumber
						\end{align}
						where the third inequality is from Assumption \ref{Assump: bounded heterogeneity}. Since
						\begin{align}
							C^0 = \max_{w \in \mathcal{R} } \|\nabla f_w(x^0) - \nabla f(x^0)\|^2 \leq \xi^2,
						\end{align}
						which come from Assumption \ref{Assump: bounded heterogeneity}, it holds that
						\begin{align}\label{proof:thm1-Ct}
							C^t \leq   \xi^2  \Big((1 - \alpha)^t + (\sum_{l=0}^{t-1} (1 - \alpha)^l)\alpha \Big) = \xi^2.
						\end{align}
						
						Substituting \eqref{proof:thm1-At}, \eqref{proof:thm1-Bt} and \eqref{proof:thm1-Ct} into \eqref{proof:thm1-2} and taking expectations on both sides of \eqref{proof:thm1-2}, we have
						\begin{align}
							\E [\max_{w \in \mathcal{R} } \|m_w^t - \bar{m}^t\|^2] \leq 3((R+\frac{1}{R}) \sigma^2 (\alpha + (1 - \alpha)^{2t}) + \xi^2).
						\end{align}
						With \eqref{proof:thm1-rho}, we have
						\begin{align}\label{proof:thm1-aggregation-error}
							\E \|m^t - \bar{m}^t\|^2 \leq 3\rho^2((R+\frac{1}{R}) \sigma^2 (\alpha + (1 - \alpha)^{2t}) + \xi^2).
						\end{align}

						For the term $\E\|e^t\|^2$ in \eqref{proof:thm1-1}, according to \citep[Lemma 10]{karimireddy2021byzantine}, $\E\|e^0\|^2 \leq \frac{\sigma^2}{R}$ and for $t \geq 1$, it holds that
						\begin{align}\label{proof:thm1-et}
							\E\|e^t\|^2 \leq (1-\frac{2\alpha}5) \E\|e^{t-1}\|^2 + \frac\alpha{10} \E\|\nabla f(x^{t-1}) \|^2 + \frac\alpha{10} \E\|m^{t-1}-\bar{m}^{t-1}\|^2 + \frac{\alpha^2 \sigma^2}R.
						\end{align}

						Combining \eqref{proof:thm1-1} and \eqref{proof:thm1-et}, we have
						\begin{align}\label{proof:thm1-3}
							&\E[f(x^{t+1})] + \frac{5\gamma}{2\alpha} \E\|e^t\|^2 \\  \leq & \E[f(x^t)] +\gamma \E\|m^t - \bar{m}^t\|^2 + \gamma \E\|e^t\|^2 - \frac{\gamma}{2}\E \|\nabla f(x^t)\|^2  + (\frac{5\gamma}{2 \alpha} - \gamma) \E\|e^{t-1}\|^2  \nonumber\\ &+ \frac{\gamma}{4} \E\|\nabla f(x^{t-1})\|^2 + \frac{\gamma}{4} \E\|m^{t-1} - \bar{m}^{t-1}\|^2 + \frac{5\alpha\gamma\sigma^2}{2R}. \nonumber
						\end{align}
						Using \eqref{proof:thm1-aggregation-error} to bound $\E\|m^t - \bar{m}^t\|^2$ and $\E \|m^{t-1} - \bar{m}^{t-1}\|^2$, it is obtained that
						\begin{align}
							&\E[f(x^{t+1})] + \frac{5\gamma}{2\alpha} \E\|e^t\|^2 \\ \leq &\E[f(x^t)] +\frac{15}{4} \gamma \rho^2 (R+\frac{1}{R}) \sigma^2 (\alpha + (1 - \alpha)^{2t}) + \frac{15}{4} \gamma \rho^2\xi^2 + \gamma \E\|e^t\|^2 \nonumber \\ & - \frac{\gamma}{2}\E \|\nabla f(x^t)\|^2 + (\frac{5\gamma}{2 \alpha} - \gamma) \E\|e^{t-1}\|^2  + \frac{\gamma}{4}\E \|\nabla f(x^{t-1})\|^2 +  \frac{5\alpha\gamma\sigma^2}{2R}. \nonumber
						\end{align}
						Therefore, we have
						\begin{align}
							&\E[f(x^{t+1})] + (\frac{5\gamma}{2\alpha} - \gamma) \E\|e^t\|^2 + \frac{\gamma}{4} \E \|\nabla f(x^t)\|^2 \\ \leq& \E[f(x^{t})] + (\frac{5\gamma}{2\alpha} - \gamma) \E\|e^{t-1}\|^2 + \frac{\gamma}{4} \E \|\nabla f(x^{t-1})\|^2 - \frac{\gamma}{4}\E \|\nabla f(x^t)\|^2 \nonumber\\ &+ \frac{15}{4} \gamma \rho^2 (R+\frac{1}{R}) \sigma^2 (\alpha + (1 - \alpha)^{2t}) + \frac{15}{4} \gamma \rho^2\xi^2 +  \frac{5\alpha\gamma\sigma^2}{2R}. \nonumber
						\end{align}
						Denoting $\mathcal{E}_t = \E[f(x^{t+1})] + (\frac{5\gamma}{2\alpha} - \gamma) \E\|e^t\|^2 + \frac{\gamma}{4} \E \|\nabla f(x^t)\|^2 $, we have
						\begin{align}
							\frac{\gamma}{4} \E \|\nabla f(x^t)\|^2 \leq \mathcal{E}_{t-1} - \mathcal{E}_t +  \frac{15}{4} \gamma \rho^2 (R+\frac{1}{R}) \sigma^2 (\alpha + (1 - \alpha)^{2t}) + \frac{15}{4} \gamma \rho^2\xi^2 +  \frac{5\alpha\gamma\sigma^2}{2R},
						\end{align}
						and
						\begin{align}\label{proof:thm1-avergage-over-t}
							&\frac{1}{T} \sum_{t=1}^{T} \E \|\nabla f(x^t)\|^2 \\\leq& \frac{4 (\mathcal{E}_0 - \mathcal{E}_T )}{\gamma T} + \frac{1}{T}\sum_{t=1}^{T}15\rho^2 (R+\frac{1}{R}) \sigma^2 (\alpha + (1 - \alpha)^{2t}) + 15 \rho^2 \xi^2 + \frac{10 \alpha \sigma^2}{R} \nonumber\\
							\leq& \frac{4 (\mathcal{E}_0 - \mathcal{E}_T )}{\gamma T} + \frac{15\rho^2 (R+\frac{1}{R}) \sigma^2}{\alpha T} + 15\rho^2 (R+\frac{1}{R}) \sigma^2 \alpha + 15 \rho^2 \xi^2 + \frac{10 \alpha \sigma^2}{R}.  \nonumber \\
							\leq& \frac{4 (\mathcal{E}_0 - f^* )}{\gamma T} + \frac{15\rho^2 (R+\frac{1}{R}) \sigma^2}{\alpha T} + 15\rho^2 (R+\frac{1}{R}) \sigma^2 \alpha + 15 \rho^2 \xi^2 + \frac{10 \alpha \sigma^2}{R}.  \nonumber
						\end{align}
						where the second inequality is from $\sum_{t=1}^{T} (1 - \alpha)^{2t} \leq \frac{1}{\alpha}$ when $0 \leq \alpha \leq 1$, and the third inequality is from $\mathcal{E}_T \geq f^*$ due to Assumption \ref{Assump: lower boundedness}. Using the following equalities and inequalities
						\begin{align}
							\mathcal{E}_0 &= \E[f(x^1)] + \frac{3\gamma}{2} \E \|e^0\|^2 + \frac{\gamma}{4} \|\nabla f(x^0)\|^2, \\
							\E[f(x^1)] &\leq f(x^0) + \gamma \E \|m^0 - \bar{m}^0\|^2 + \gamma \E\|e^0\|^2 - \frac{\gamma}{2} \|\nabla f(x^0)\|^2, \\
							\E\|e^0\|^2 &= \E \|\bar{m}^0 - \nabla f(x^0)\|^2 = \E \|\frac{1}{R} \sum_{w \in \mathcal{R}} (\nabla f_{w, i_w^0}(x^0) - \nabla f_w(x^0))\|^2 \leq \frac{\sigma^2}{R}, \\
							\E\|m^0 - \bar{m}^0\|^2 &\leq \rho^2 \E[\max_{w \in \mathcal{R}} \|m_w^0 - \bar{m}^0\|^2] \leq 3\rho^2 ((R+\frac{1}{R}) \sigma^2 + \xi^2),
						\end{align}
						we have
						\begin{align}\label{proof:thm1-e0}
							\mathcal{E}_0 \leq f(x^0) + \frac{5\gamma \sigma^2}{2 R} + 3\gamma \rho^2 ((R+\frac{1}{R}) \sigma^2 + \xi^2) - \frac{\gamma}{4} \|\nabla f(x^0)\|^2.
						\end{align}

						Substituting \eqref{proof:thm1-e0} into \eqref{proof:thm1-avergage-over-t}, we have
						\begin{align}
							&\frac{1}{T} \sum_{t=1}^{T} \E \|\nabla f(x^t)\|^2 \\
							\leq& \frac{4 (f(x^0) - f^*)}{\gamma T} + \frac{15\rho^2 (R+\frac{1}{R}) \sigma^2}{8\gamma L T} + 120\rho^2 (R+\frac{1}{R}) \sigma^2\gamma L + \frac{80L\gamma \sigma^2}{R} \nonumber\\&+\frac{\frac{10\sigma^2}{R} + 12\rho^2((R+\frac{1}{R})\sigma^2 + \xi^2) - \|\nabla f(x^0)\|}{T}  + 15 \rho^2 \xi^2, \nonumber \\ = & \frac{1}{\gamma} \cdot \Big(\frac{4 (f(x^0) - f^*)}{T} + \frac{15\rho^2 (R+\frac{1}{R}) \sigma^2}{8LT}\Big) + \gamma \cdot\Big(120\rho^2 (R+\frac{1}{R}) \sigma^2 L + \frac{80L\sigma^2}{R}\Big) \nonumber\\
							&+\frac{\frac{10\sigma^2}{R} + 12\rho^2((R+\frac{1}{R})\sigma^2 + \xi^2) - \|\nabla f(x^0)\|}{T}  + 15 \rho^2 \xi^2, \nonumber
						\end{align}
						where the inequality uses the fact that $\alpha = 8L \gamma$. Substituting the step size
						\begin{align}
							\gamma = \min\Big\{\sqrt{\frac{4(f(x^0) - f^*) + \frac{15\rho^2(R+\frac{1}{R})\sigma^2}{8L}}{T(40L\sigma^2)(3\rho^2(R+\frac{1}{R}) + \frac{2}{R})}}, \frac{1}{8L}\Big\},
						\end{align}
						we have
						\begin{align}
							\frac{1}{\gamma} = & \max\Big\{\sqrt{\frac{T(40L\sigma^2)(3\rho^2(R+\frac{1}{R})+ \frac{2}{R})}{4(f(x^0) - f^*) + \frac{15\rho^2(R+\frac{1}{R})\sigma^2}{8L}}}, 8L\Big\} \\
							\leq & \sqrt{\frac{T(40L\sigma^2)(3\rho^2(R+\frac{1}{R})+ \frac{2}{R})}{4(f(x^0) - f^*) + \frac{15\rho^2(R+\frac{1}{R})\sigma^2}{8L}}} + 8L. \notag
						\end{align}
						Thus, we have
						\begin{align}
							&\frac{1}{T} \sum_{t=1}^{T} \E \|\nabla f(x^t)\|^2 \\ \leq & 15 \rho^2 \xi^2 +  \sqrt{\frac{20L\sigma^2(\frac{2}{R} + 3\rho^2 (R+\frac{1}{R}))}{T}} \cdot \sqrt{32(f(x^0) - f^*) + \frac{15}{L}\rho^2(R+\frac{1}{R})\sigma^2} \nonumber\\ &+ \frac{32L(f(x^0) - f^*)}{T} + \frac{15\rho^2 (R+\frac{1}{R})\sigma^2}{T} + \frac{\frac{10\sigma^2}{R} + 12\rho^2((R+\frac{1}{R})\sigma^2 + \xi^2) - \|\nabla f(x^0)\|}{T}. \nonumber
						\end{align}
						which completes the proof.
					\end{proof}

					\section{Proof of Theorem \ref{thm:convergence with Mean}}
					\label{sec: Appendix D}

					We give a complete version of Theorem \ref{thm:convergence with Mean} as follows.
					\begin{theorem}\label{complete version of convergence with Mean}
						Consider Algorithm \ref{algorithm: 1} with the mean aggregator $\text{Mean}(\cdot)$ to solve \eqref{problem} and suppose that Assumptions \ref{Assump: lower boundedness}, \ref{Assump: L-smooth}, \ref{Assump: bounded variance}, and \ref{Assump: bounded effect of poisoned local gradients} hold.
						Under label poisoning attacks where the fraction of poisoned workers is $\delta \in [0, 1)$, if the step size is
						\begin{align}
							\gamma = \min\Big\{\sqrt{\frac{4(f(x^0) - f^*) + \frac{30\delta^2\sigma^2}{8L}}{T(40L\sigma^2)(6\delta^2+ \frac{2}{R})}}, \frac{1}{8L}\Big\},
						\end{align}
						the momentum coefficient $\alpha = 8L\gamma$, then we have
						\begin{align}
							\frac{1}{T} \sum_{t=1}^{T} \E\|\nabla f(x^t)\|^2\leq & 15 \delta^2A^2  + \sqrt{\frac{20L\sigma^2(\frac{2}{R} + 6\delta^2)}{T}} \cdot \sqrt{32(f(x^0) - f^*) + \frac{30}{L}\delta^2\sigma^2} \\ &+ \frac{32L(f(x^0) - f^*)}{T} + \frac{30\delta^2\sigma^2}{T}+ \frac{\frac{10\sigma^2}{R} + 24\delta^2(\sigma^2 + \xi^2) - \|\nabla f(x^0)\|}{T} . \nonumber
						\end{align}
						where the expectation is taken over the algorithm's randomness.
					\end{theorem}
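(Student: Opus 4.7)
The plan is to mirror the proof of Theorem \ref{complete version of convergence with RAgg} in Appendix \ref{sec: Appendix C} step by step, replacing the $\rho$-robustness bound by an explicit algebraic decomposition specific to the mean aggregator. Starting from Assumption \ref{Assump: L-smooth} and the update $x^{t+1} = x^t - \gamma m^t$ with $m^t = \frac{1}{W}\sum_{w\in\mathcal{W}} \hat{m}_w^t$, the same manipulations that gave \eqref{proof:thm1-1} yield $\E[f(x^{t+1})] \leq \E[f(x^t)] + \gamma \E\|m^t - \bar{m}^t\|^2 + \gamma \E\|e^t\|^2 - \frac{\gamma}{2}\E\|\nabla f(x^t)\|^2$, where $\bar{m}^t = \frac{1}{R}\sum_{w\in\mathcal{R}} m_w^t$ and $e^t = \bar{m}^t - \nabla f(x^t)$. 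So the only task specific to the mean aggregator is to bound $\E\|m^t - \bar{m}^t\|^2$.

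The first simplification is the elementary identity $m^t - \bar{m}^t = \delta\,(\bar{\tilde{m}}^t - \bar{m}^t)$, where $\bar{\tilde{m}}^t \triangleq \frac{1}{W-R}\sum_{w\in\mathcal{W}\setminus\mathcal{R}} \tilde{m}_w^t$; this instantly substitutes the contraction factor $\rho$ by $\delta$. Next I apply the bias--variance split $\|\bar{\tilde{m}}^t - \bar{m}^t\|^2 \leq 3\|\bar{\tilde{m}}^t - \E[\bar{\tilde{m}}^t]\|^2 + 3\|\bar{m}^t - \E[\bar{m}^t]\|^2 + 3\|\E[\bar{\tilde{m}}^t] - \E[\bar{m}^t]\|^2$. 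Under Assumption \ref{Assump: bounded variance}, which covers both regular and poisoned workers, the two stochastic terms are controlled by the same momentum-variance telescoping that produced $B^t$ in Appendix \ref{sec: Appendix C}, giving $\frac{\sigma^2}{W-R}(\alpha+(1-\alpha)^{2t})$ and $\frac{\sigma^2}{R}(\alpha+(1-\alpha)^{2t})$, respectively. Their sum is at most $2\sigma^2(\alpha + (1-\alpha)^{2t})$ since $\frac{1}{W-R} + \frac{1}{R} \leq 2$ whenever $R, W-R \geq 1$. The absence of any maximum over workers is precisely what removes the $R + \tfrac{1}{R}$ inflation present in the RAgg bound and produces the clean factor $6\delta^2$ in the final theorem.

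The main new ingredient, and the step I expect to require the most care, is the deterministic bias bound $\|\E[\bar{\tilde{m}}^t] - \E[\bar{m}^t]\|^2 \leq A^2$. Here I imitate the recursion that established $C^t \leq \xi^2$ in Appendix \ref{sec: Appendix C}, but substitute Assumption \ref{Assump: bounded effect of poisoned local gradients} for Assumption \ref{Assump: bounded heterogeneity}. The momentum updates give $\E[\bar{\tilde{m}}^t] - \E[\bar{m}^t] = (1-\alpha)\big(\E[\bar{\tilde{m}}^{t-1}] - \E[\bar{m}^{t-1}]\big) + \alpha\big(\tfrac{1}{W-R}\sum_{w\in\mathcal{W}\setminus\mathcal{R}}\nabla\tilde{f}_w(x^t) - \nabla f(x^t)\big)$, and averaging preserves the per-worker bound so the residual has norm at most $A$. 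Induction, with the analogous base case at $t=0$, yields the uniform $A^2$ bound. Combining the three pieces produces the analogue of \eqref{proof:thm1-aggregation-error}, namely $\E\|m^t - \bar{m}^t\|^2 \leq 6\delta^2 \sigma^2(\alpha + (1-\alpha)^{2t}) + 3\delta^2 A^2$.

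From this point the argument is mechanically the same as Appendix \ref{sec: Appendix C}: Lemma 10 of \citet{karimireddy2021byzantine} supplies the recursion \eqref{proof:thm1-et} for $\E\|e^t\|^2$, and the Lyapunov function $\mathcal{E}_t = \E[f(x^{t+1})] + (\tfrac{5\gamma}{2\alpha}-\gamma)\E\|e^t\|^2 + \tfrac{\gamma}{4}\E\|\nabla f(x^t)\|^2$ telescopes to bound $\frac{1}{T}\sum_{t=1}^T \E\|\nabla f(x^t)\|^2$. Bounding $\mathcal{E}_0$ exactly as in \eqref{proof:thm1-e0}---with $3\rho^2((R+\tfrac{1}{R})\sigma^2 + \xi^2)$ replaced by $6\delta^2(\sigma^2 + A^2)$---and plugging in the stated $\gamma$ together with $\alpha = 8L\gamma$ reproduces the target inequality term by term, yielding the non-vanishing learning error $O(\delta^2 A^2)$ as claimed.
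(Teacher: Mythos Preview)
Your proposal is correct and essentially matches the paper's proof: both reduce to the RAgg argument of Appendix~\ref{sec: Appendix C} after establishing $\E\|m^t-\bar m^t\|^2\le 6\delta^2\sigma^2(\alpha+(1-\alpha)^{2t})+3\delta^2A^2$, and then reuse the Lyapunov telescoping verbatim. The only cosmetic difference is that you pass to the poisoned average $\bar{\tilde m}^t$ and exploit independence to get a $\tfrac{1}{W-R}$ variance factor, whereas the paper applies Cauchy--Schwarz to $\frac{1}{W}\sum_{w\in\mathcal W\setminus\mathcal R}(\tilde m_w^t-\bar m^t)$ and bounds each $\E\|\tilde m_w^t-\bar m^t\|^2$ individually; after the crude bound by $2$ the two routes coincide.
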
	
					\begin{proof}
						The proof of Theorem \ref{thm:convergence with Mean} is similar to that of Theorem \ref{thm:convergence with RAgg}, except for the analysis of the distance between the aggregated message and the true average of the regular messages. For notational simplicity, denote the conditional expectation $\E[\cdot | i_w^\tau: \tau < t, w \in \mathcal{W}]$ as $\E_t[\cdot]$. Denoting $m^t =$ $\text{Mean}(\{\hat{m}_w^t: w \in \mathcal{W}\})$ and $\bar{m}^t = \frac{1}{R} \sum_{w \in \mathcal{R}} m_w^t$, we have
						\begin{align}
							\E\|m^t - \bar{m}^t\|^2 &= \E\|\frac{1}{W} \sum_{w \in \mathcal{W}} \hat{m}_w^t - \frac{1}{R} \sum_{w \in \mathcal{R}} m_w^t\|^2 \\ &= \E \|\frac{1}{W} \sum_{w \in \mathcal{W}\setminus\mathcal{R}} \tilde{m}_w^t - (\frac{1}{R} - \frac{1}{W}) \sum_{w \in \mathcal{R}} m_w^t\|^2 \nonumber \\ &= (\frac{1}{W})^2 \E\|\sum_{w \in \mathcal{W}\setminus\mathcal{R}}(\tilde{m}_w^t - \bar{m}^t)\|^2 \nonumber \\ &\leq \frac{W - R}{W^2} \sum_{w \in \mathcal{W}\setminus\mathcal{R}} \E \|\tilde{m}_w^t - \bar{m}^t\|^2. \nonumber
						\end{align}

						For the term $\E \|\tilde{m}_w^t - \bar{m}^t\|^2$, for any $w \in \mathcal{W} \setminus \mathcal{R}$, we have
						\begin{align}\label{proof:thm2-1}
							\E \|\tilde{m}_w^t - \bar{m}^t\|^2 \leq 3\Big(\E\|\tilde{m}_w^t - \E[\tilde{m}_w^t]\|^2 + \E\|\bar{m}^t - \E[\bar{m}^t]\|^2 + \|\E[\tilde{m}_w^t] - \E[\bar{m}^t]\|^2\Big).
						\end{align}
						For the first term at the right-hand side of \eqref{proof:thm2-1}, similar to the proof from \eqref{proof: thm1-At-1} to \eqref{proof: thm1-At-3}, for any $w \in \mathcal{W} \setminus \mathcal{R}$, we have
						\begin{align}\label{proof:thm2-2}
							\E\|\tilde{m}_w^t - \E[\tilde{m}_w^t]\|^2	&= \E\|(1 - \alpha) (\tilde{m}_w^{t-1} - \E[\tilde{m}_w^{t-1}]) + \alpha (\nabla \tilde{f}_{w, i_w^t}(x^t) - \nabla \tilde{f}_w(x^t))\|^2 \\
							& =(1 - \alpha)^2 \E\|\tilde{m}_w^{t-1} - \E[\tilde{m}_w^{t-1}] \|^2 + \alpha^2  \E \|\nabla \tilde{f}_{w, i_w^t}(x^t) - \nabla \tilde{f}_w(x^t)\|^2 \nonumber \\
							& =(1 - \alpha)^2 \E\|\tilde{m}_w^{t-1} - \E[\tilde{m}_w^{t-1}] \|^2 + \alpha^2  \E [\E_t\|\nabla \tilde{f}_{w, i_w^t}(x^t) - \nabla \tilde{f}_w(x^t)\|^2] \nonumber \\
							& =(1 - \alpha)^2 \E\|\tilde{m}_w^{t-1} - \E[\tilde{m}_w^{t-1}] \|^2 + \alpha^2 \sigma^2\nonumber \\
							&\leq \sigma^2(\alpha + (1 - \alpha)^{2t}). \nonumber
						\end{align}
						For the second term at the right-hand side of \eqref{proof:thm2-1}, similar to the proof of \eqref{proof: thm1-Bt-1}, we have
						\begin{align}\label{proof:thm2-3}
							\E\|\bar{m}^t - \E[\bar{m}^t]\|^2	&= \E\|\frac{1}{R} \sum_{w \in \mathcal{R}} m_w^t - \E[m_w^t]\|^2 \\
							&= \frac{1}{R^2} \sum_{w \in \mathcal{R}} \E \|m_w^t - \E[m_w^t]\|^2 \nonumber \\
							&\leq \frac{\sigma^2}{R}(\alpha + (1 - \alpha)^{2t}). \nonumber
						\end{align}
						For the third term at the right-hand side of \eqref{proof:thm2-1}, we have
						\begin{align}\label{proof:thm2-4}
							\|\E[\tilde{m}_w^t] - \E[\bar{m}^t]\|^2 &= \|(1 - \alpha)(\E[\tilde{m}_w^{t-1}] - \E[\bar{m}^{t-1}]) + \alpha (\nabla \tilde{f}_w(x^t) - \nabla f(x^t))\|^2 \\
							&\leq   (1 - \alpha) \|\E[\tilde{m}_w^{t-1}] - \E[\bar{m}^{t-1}] \|^2 + \alpha \|\nabla \tilde{f}_w(x^t) - \nabla f(x^t)\|^2 \nonumber \\
							&\leq   (1 - \alpha) \|\E[\tilde{m}_w^{t-1}] - \E[\bar{m}^{t-1}] \|^2 + \alpha A^2 \nonumber \\
							&\leq   (1 - \alpha)^t \|\E[\tilde{m}_w^{0}] - \E[\bar{m}^{0}] \|^2 + (\sum_{l=0}^{t-1} (1 - \alpha)^{l})\alpha A^2 \nonumber \\
							&=   (1 - \alpha)^t \|\nabla \tilde{f}_w(x^0)- \nabla f(x^0) \|^2 + (\sum_{l=0}^{t-1} (1 - \alpha)^{l})\alpha A^2 \nonumber \\
							&\leq A^2 ((1 - \alpha)^t + 1 - (1 - \alpha)^t) \nonumber \\
							&=A^2, \nonumber
						\end{align}
						where the second inequality and the fourth inequality are due to Assumption \ref{Assump: bounded effect of poisoned local gradients}.
						
						Substituting \eqref{proof:thm2-1}, \eqref{proof:thm2-2}, \eqref{proof:thm2-3} into \eqref{proof:thm2-4}, we have
						\begin{align}
							\E\|m^t - \bar{m}^t\|^2 &\leq 3 \cdot \frac{(W - R)^2}{W^2} \cdot (\sigma^2( \alpha + (1 - \alpha)^{2t})(1 + \frac{1}{R}) + A^2) \\ &\leq 3 \cdot \frac{(W - R)^2}{W^2} \cdot (2\sigma^2( \alpha + (1 - \alpha)^{2t}) + A^2) \nonumber \\ &= 6\delta^2\sigma^2 ( \alpha + (1 - \alpha)^{2t}) + 3\delta^2 A^2, \nonumber
						\end{align}
						where the second inequality is due to $R \geq 1$.
						
						The rest of the proof is the same as that of Theorem \ref{thm:convergence with RAgg} and therefore omitted. This completes the proof.
					\end{proof}

					\section{Proof of Theorem \ref{thm:lower bound}}
					\label{sec: Appendix E}
					
					\begin{proof}
						The key idea of the proof is to construct two instances that have the same set of $W$ local costs after the poisoned workers carry out label poisoning attacks, while the objectives based on the $R$ regular local costs are different in these two instances. Since the algorithm whose output is invariant with respect to the identities of the workers cannot distinguish these two instances, it yields the same output. However, the two different objectives imply that the algorithmic output must fail on at least one among the two. Therefore, the instance that has a larger learning error gives a lower bound of the learning error for the algorithm.
						
						Without loss of generality, we let $\mathcal{W} = \{1, \cdots, W\}$ be the set of workers, within which $\mathcal{R} = \{1,\cdots,R\}$ is the set of regular workers. The samples of all workers have two possible labels, $1$ or $2$, which correspond to two different functions $f(x;1)$ and $f(x;2)$ with

							\begin{align}\label{proof: th9-case1-first-equation}
								f(x;k) = \frac{(1-\delta)c}{\sqrt{2}} [x]_k + \frac{L}{2} \|x\|^2,\quad k\in\{1,2\},
							\end{align}
							within which $c \triangleq \min\{\xi, A\}$.

						Each worker $w\in \{1, \cdots, W\}$ has the same $J$ samples costs, in the form of
						\begin{align}
							\hat{f}_{w, j}(x) = \hat{f}_w(x) = f(x;\hat{b}^{(w)}), \quad \forall j \in \{1, \cdots, J\},
						\end{align}
						or equivalently
						\begin{align}
							f_{w, j}(x) = f_w(x) &= f(x; b^{(w)}),\quad\forall w\le R, \ \forall j \in \{1, \cdots, J\},\\
							\tilde{f}_{w, j}(x) = \tilde{f}_w(x) &= f(x;\tilde{b}^{(w)}),\quad\forall w> R, \ \forall j \in \{1, \cdots, J\}.
						\end{align}

						Now we construct two instances with different sets of labels. The first set of labels, denoted as $\{\hat{b}^{(w,1)},w\in \{1, \cdots, W\}\}$, is given by
						\begin{align}
							\hat{b}^{(w,1)} = \begin{cases}
								1, &w\le R,\\
								2, &w > R.
							\end{cases}
						\end{align}
						The second set of labels, denoted as $\{\hat{b}^{(w,2)},w\in \{1, \cdots, W\}\}$, is given by
						\begin{align}
							\hat{b}^{(w,2)} = \begin{cases}
								1, &w> W-R,\\
								2, &w \le W-R.
							\end{cases}
						\end{align}

						Denote $f^{(1)}(x)=\frac1R\sum_{w=1}^Rf(x;\hat{b}^{(w,1)})$ and $f^{(2)}(x)=\frac1R\sum_{w=1}^Rf(x;\hat{b}^{(w,2)})$ as the two objectives. We can check that all the assumptions are satisfied in these two instances. Since
							\begin{align}
								\nabla f(x;k)&=\frac{(1-\delta)c}{\sqrt{2}}e_k+Lx,
							\end{align}
						where $e_k$ is the unit vector with the $k$-th element being $1$, the gradients of $f^{(1)}$ and $f^{(2)}$ are
							\begin{align}\label{gradient of instance 1}
								\nabla f^{(1)}(x) &= \frac{(1-\delta)c}{\sqrt{2}}e_1+Lx,
							\end{align}
							\begin{align}\label{gradient of instance 2}
								\nabla f^{(2)}(x) &= \frac{(1-2\delta)c}{\sqrt{2}}e_1+\frac{\delta c}{\sqrt{2}}e_2+Lx.
							\end{align}
						As a result, we know their minimums are achieved at $x^{*,(1)}=-\frac{(1-\delta)c}{\sqrt{2}L}e_1$ and $x^{*,(2)}=-\frac{(1-2\delta)c}{\sqrt{2}L}e_1-\frac{\delta c}{\sqrt{2}L}e_2$, respectively, and there exists a uniform lower bound
						\begin{align}
								f^{(k)}(x)\ge f^*\triangleq - \frac{c^2}{2L}.
						\end{align}
						satisfying Assumption \ref{Assump: lower boundedness} for $k \in \{1, 2\}$.
						
						As the gradients are linear, Assumption \ref{Assump: L-smooth} is satisfied and the constant is exactly $L$.
						
						Further, Assumption \ref{Assump: bounded heterogeneity} is satisfied with constant $\xi$, as

							\begin{align}
								\max_{w \le R } \|\nabla f(x;\hat{b}^{(w,1)}) - \nabla f^{(1)}(x)\| &=0,\\
								\max_{w \le R } \|\nabla f(x;\hat{b}^{(w,2)}) - \nabla f^{(2)}(x)\| &=\max\{|1-2\delta|, \delta\}c \leq c \leq \xi.
								\label{proof: th9-as3}
							\end{align}
							The last inequality of \eqref{proof: th9-as3} comes from $c \triangleq \min\{A,\xi\} \leq \xi$.

						Note that $ f_{w, j}(x) = \hat{f}_{w, j}(x) $ when $w \leq R$ and $\tilde{f}_{w, j}(x) = \hat{f}_{w, j}(x)$ otherwise. Therefore, due to $\hat{f}_{w, j}(x) = \hat{f}_w(x)$, Assumption \ref{Assump: bounded variance} is satisfied with constant $\sigma = 0$.

						Assumption \ref{Assump: bounded effect of poisoned local gradients} is satisfied with constant $A$, since
						\begin{align}
								\label{proof: th9-as5-1}
								&\max_{w > R} \|\nabla f(x;\hat{b}^{(w,1)}) - \nabla f^{(1)}(x)\|=(1-\delta)c \leq c \leq A,\\
								&\max_{w > R} \|\nabla f(x;\hat{b}^{(w,2)}) - \nabla f^{(2)}(x)\|=\delta c \leq c \leq A.
								\label{proof: th9-as5-2}
							\end{align}
							The last inequalities of \eqref{proof: th9-as5-1} and \eqref{proof: th9-as5-2} come from $c \triangleq \min\{A,\xi\} \leq A$.	
						
						The two constructed instances result in the same set of local costs ($R$ of them are $f(x;1)$ and the others are $f(x;2)$) yet different orders of labels.  Since the algorithm whose output is invariant with respect to the identities of the workers cannot distinguish these two instances, it yields the same output. Nevertheless, according to \eqref{gradient of instance 1} and \eqref{gradient of instance 2}, the gradients of the two objectives at the algorithmic output are different. Therefore, the algorithmic output must fail on at least one among the two. Below, we investigate the learning errors in the two instances, and the larger one exactly gives the lower bound of the learning error for the algorithm.
						
						For any $x$, note that
						\begin{align}\label{eq: lower-bound}
								&\max \{\| \nabla f^{(1)}(x)\| , \|\nabla f^{(2)}(x)\| \} \\
								\geq &\frac12 \left(\|\nabla f^{(1)}(x)\| + \|\nabla f^{(2)}(x)\|\right) \nonumber\\
								\geq &\frac12\|\nabla f^{(1)}(x) - \nabla f^{(2)}(x)\| \nonumber\\
								= &\frac{\delta c}{2}. \nonumber
						\end{align}
						Specifically,  letting $x=x^t$ be the $t$-th iterate of the algorithm running on either of the two instances, \eqref{eq: lower-bound} gives that
						\begin{align}
								\max_{k \in \{1, 2\}} \E\|\nabla f^{(k)}(x^t)\|^2 =  \max_{k \in \{1, 2\}}\|\nabla f^{(k)}(x^t)\|^2 \geq \frac{\delta^2 c^2}{4}.
						\end{align}
						where the first equality is because there is no randomness in these two instances. Further, since
						\begin{align}
								\E\|\nabla f^{(1)}(x^t)\|^2 + \E\|\nabla f^{(2)}(x^t)\|^2 \geq \max_{k \in \{1, 2\}} \E\|\nabla f^{(k)}(x^t)\|^2 \geq \frac{\delta^2 c^2}{4},
						\end{align}
						we have
						\begin{align}
								&\frac{1}{T} \sum_{t=1}^{T} \E\|\nabla f^{(1)}(x^t)\|^2 + \frac{1}{T} \sum_{t=1}^{T} \E\|\nabla f^{(2)}(x^t)\|^2 \\
								=& \frac{1}{T} \sum_{t=1}^{T} \Big(\E\|\nabla f^{(1)}(x^t)\|^2 + \E\|\nabla f^{(2)}(x^t)\|^2\Big) \nonumber \\ \geq& \frac{\delta^2 c^2}{4}. \nonumber
						\end{align}
						With the fact that
						\begin{align}
								\hspace{-1em}			2\max_{k \in \{{1, 2}\}} \frac{1}{T} \sum_{t=1}^{T} \E\|\nabla f^{(k)}(x^t)\|^2
								\geq\frac{1}{T} \sum_{t=1}^{T} \E\|\nabla f^{(1)}(x^t)\|^2 + \frac{1}{T} \sum_{t=1}^{T} \E\|\nabla f^{(2)}(x^t)\|^2 \geq \frac{\delta^2c^2}{4},
						\end{align}
						we have
						\begin{align}\label{eq: lower bound over T}
								\max_{k \in \{{1, 2}\}} \frac{1}{T} \sum_{t=1}^{T} \E\|\nabla f^{(k)}(x^t)\|^2 \geq  \frac{\delta^2 c^2}{8}.
						\end{align}
						Choosing $R$ regular local costs $\{f_w(x) = f(x; \hat{b}^{(w, k)}): w \leq R\}$ and $W-R$ poisoned local costs $\{\tilde{f}_w(x) = f(x; \hat{b}^{(w, k)}): w > R\}$ where $k = \arg\max_{k \in \{1, 2\}} \frac{1}{T} \sum_{t=1}^{T}\E\|\nabla f^{(k)}(x^t)\|^2$, \eqref{eq: lower bound over T} gives that

							\begin{align}\label{proof: th9-case1-last-equation}
								\frac{1}{T} \sum_{t=1}^{T} \E\|\nabla f(x^t)\|^2 \geq \frac{\delta^2 c^2}{8} = \frac{\delta^2 \min\{A^2,\xi^2\}}{8},
							\end{align}	
						which concludes the proof.
					\end{proof}
					
					The proof is motivated by those of \citet[Theorem 3]{karimireddy2021byzantine} and \citet[Proposition 1]{allouah2023fixing}. The major difference is that we consider label poisoning attacks in which the poisoned workers can only poison their local labels, while \citet{karimireddy2021byzantine} and \citet{allouah2023fixing} consider Byzantine attacks in which the Byzantine workers can behave arbitrarily. Different types of attacks lead to different constructions of the instances in the proofs.

					\section{Impacts of Heterogeneity and Attack Strengths}
					\label{sec: Appendix F}
					To investigate the impacts of heterogeneity of data distributions and strengths of label poisoning attacks, we have conducted numerical experiments by varying the data distributions and the levels of label poisoning attacks, and presented the best classification accuracies in Figure \ref{fig:NN_mnist_alpha_prob}. Here, we provide all classification accuracies in Table \ref{table: impacts of heterogeneity and attack strengths} to complement Figure \ref{fig:NN_mnist_alpha_prob}.
					\begin{table}
						\centering
						\resizebox{0.7\textwidth}{!}{
							\begin{tabular}{ccccccc}
								\toprule
								\toprule
								$\beta$ & $p$  &  Mean & CC  &  FABA &  LFighter & TriMean  \\
								\midrule
								\multirow{6}{*}{5} & 0.0 & \textbf{0.9441} & 0.9385 &  0.9410  & 0.9420 & 0.9429  \\
								& 0.2 & 0.9426 & 0.9405 & \textbf{0.9439} & 0.9437 & 0.9425  \\
								& 0.4 & 0.9443  &0.9421  &  0.9437 & \textbf{0.9456} & 0.9430   \\
								& 0.6  & 0.9427  & 0.9402   & 0.9431 & \textbf{0.9439} & 0.9397 \\
								& 0.8  & 0.9429  & 0.9408   & 0.9424 &\textbf{ 0.9443} & 0.9386 \\
								& 1.0  & 0.9415  & 0.9382   & 0.9423 & \textbf{0.9437} & 0.9371 \\
								\midrule
								\multirow{6}{*}{1} & 0.0 & \textbf{0.9437 }& 0.9362 &  0.9390  & 0.9361 & 0.9402  \\
								& 0.2 & \textbf{0.9448} & 0.9371 & 0.9417 & 0.9341 & 0.9373  \\
								& 0.4 & 0.9417  &0.9404  &  \textbf{0.9447 }& 0.9404 & 0.9396   \\
								& 0.6  & 0.9386  & 0.9355   & 0.9409 & \textbf{0.9422} & 0.9365 \\
								& 0.8  & 0.9402  & 0.9323   & 0.9386 & \textbf{0.9431} & 0.9318 \\
								& 1.0  & 0.9424  & 0.9401   & 0.9414 & \textbf{0.9433} & 0.9273 \\
								\midrule
								\multirow{6}{*}{0.1} & 0.0 & \textbf{0.9407} & 0.9251 &  0.9404  & 0.9170 & 0.9134  \\
								& 0.2 & \textbf{0.9423} & 0.9292 & 0.9271 & 0.9313 & 0.9076  \\
								& 0.4 & \textbf{0.9420}  &0.9270  &  0.9229 & 0.9201 & 0.8942   \\
								& 0.6  & 0.9372  & 0.9226   & 0.8996 & \textbf{0.9377} & 0.8891 \\
								& 0.8  & 0.9278  & 0.9135   & 0.9431 & \textbf{0.9433} & 0.8498 \\
								& 1.0  & 0.8327  & 0.8305   & 0.9426 & \textbf{0.9449} & 0.8054 \\
								\midrule
								\multirow{6}{*}{0.05} & 0.0 &\textbf{ 0.9468} & 0.9317 &  0.8976  & 0.8617 & 0.8763  \\
								& 0.2 & \textbf{0.9467 }& 0.9311 & 0.8603 & 0.8845 & 0.8529  \\
								& 0.4 & \textbf{0.9474}  &0.9279  &  0.8601 & 0.8860 & 0.8526   \\
								& 0.6  & \textbf{0.9418}  & 0.9248   & 0.8571 & 0.9343 & 0.8492 \\
								& 0.8  & 0.9201  & 0.9155   & \textbf{0.9394} & 0.9385 & 0.8576 \\
								& 1.0  & 0.8573  & 0.8950   & \textbf{0.9384} & 0.9374 & 0.8106 \\
								\midrule
								\multirow{6}{*}{0.03} & 0.0 & \textbf{0.9426} & 0.9299 &  0.8634  & 0.8517 & 0.8523  \\
								& 0.2 & \textbf{0.9413} & 0.9298 & 0.8507 & 0.8493 & 0.8427  \\
								& 0.4 & \textbf{0.9393}  &0.9268  &  0.8506 & 0.8502 & 0.8370   \\
								& 0.6  & \textbf{0.9374}  & 0.9206   & 0.8481 & 0.8449 & 0.8246 \\
								& 0.8  & \textbf{0.9159}  & 0.8679   & 0.8019 & 0.8313 & 0.7869 \\
								& 1.0  & 0.8312  & 0.8152   & 0.7437 & \textbf{0.9396} & 0.7514 \\
								\midrule
								\multirow{6}{*}{0.01} & 0.0 & \textbf{0.9456} & 0.9164 &  0.8678  & 0.8681 & 0.8008  \\
								& 0.2 & \textbf{0.9441} & 0.9165 & 0.8657 & 0.8660 & 0.7788  \\
								& 0.4 & \textbf{0.9415}  &0.9135  &  0.8631 & 0.8632 & 0.7858   \\
								& 0.6  & \textbf{0.9356}  & 0.9039  & 0.8601 & 0.8399 & 0.7708 \\
								& 0.8  & \textbf{0.8827}  & 0.8750   & 0.8155 & 0.7864 & 0.7457 \\
								& 1.0  & \textbf{0.8505}  & 0.8278   & 0.7731 & 0.8162 & 0.7258 \\
								\bottomrule
								\bottomrule
								
						\end{tabular}}
						\caption{Accuracies of trained two-layer perceptrons by all aggregators on the MNIST dataset under static label flipping attacks. The hyper-parameter $\beta$ that characterizes the heterogeneity is in $[5, 1, 0.1, 0.05, 0.03,$ $0.01]$ and the flipping probability $p$ that characterizes the attack strength is in $[0.0, 0.2, 0.4, 0.6, 0.8, 1.0]$.}
						\label{table: impacts of heterogeneity and attack strengths}
					\end{table}
					
					\section{Bounded Variance of Stochastic Gradients}
					\label{sec: Appendix G}
					
					Now we verify the reasonableness of Assumption \ref{Assump: bounded variance}. We consider softmax regression on the MNIST dataset and setup $W = 10$ workers where $R = 9$ workers are regular and the remaining one is poisoned. We compute the variances of regular stochastic gradients and poisoned stochastic gradients under static label flipping attacks in the i.i.d., mild non-i.i.d. and non-i.i.d. cases, and present the maximum variance of regular stochastic gradients and poisoned stochastic gradient in Figure \ref{fig:SR_mnist_variance}. As depicted in Figure \ref{fig:SR_mnist_variance}, the variances of regular stochastic gradients and poisoned stochastic gradients are both bounded under static label flipping attacks, which validates Assumption \ref{Assump: bounded variance}.

					\begin{figure}
						\centering
						\includegraphics[scale=0.145]{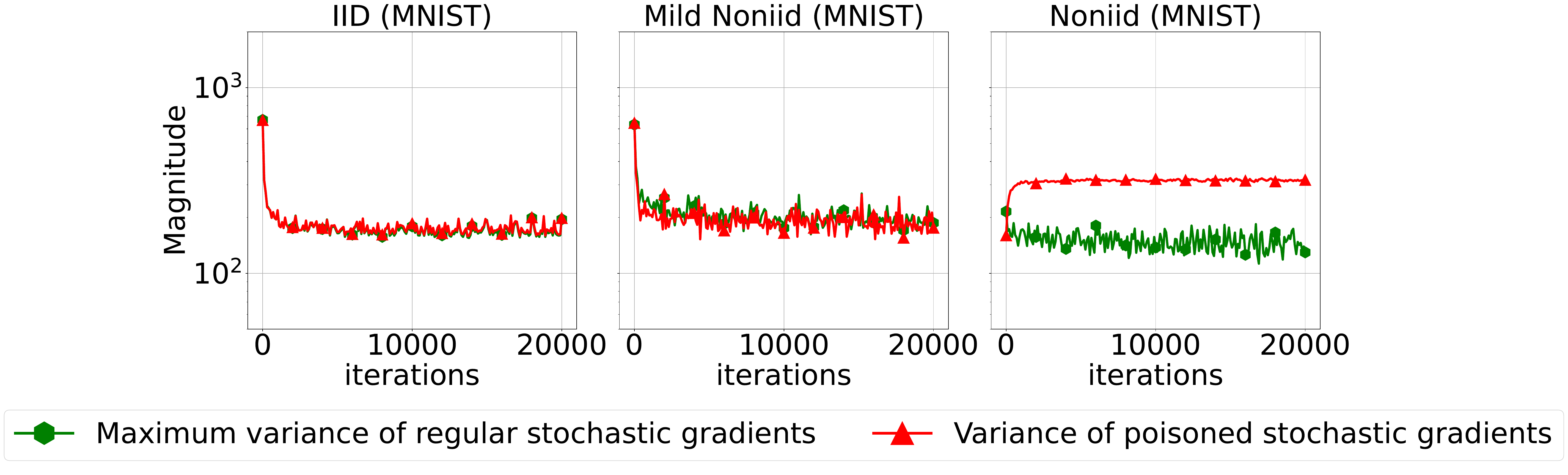}
						\caption{Maximum variance of regular stochastic gradients and variance of poisoned stochastic gradients in softmax regression on the MNIST dataset under static label flipping attacks.}
						\label{fig:SR_mnist_variance}
					\end{figure}

						\section{Impact of Fraction of Poisoned Workers}
						\label{Impact of fraction of the poisoned workers}
						To further investigate the impact of different fractions of poisoned workers, here we vary the number of poisoned workers. We setup $W=10$ workers, among which $R=8$ ($R=7$) are regular, while the remaining 2 (3) are poisoned. The experimental settings are the same as those in the nonconvex case. We use a step size of $\gamma = 0.01$ and a momentum coefficient of $\alpha=0.1$. As shown in Figures \ref{fig:NeuralNetwork_label_flipping_R=8}, \ref{fig:NeuralNetwork_furthest_label_flipping_R=8}, \ref{fig:NeuralNetwork_label_flipping_R=7}, and \ref{fig:NeuralNetwork_furthest_label_flipping_R=7}, the mean aggregator generally outperforms the robust aggregators in the non-i.i.d. case, which is consistent with the experimental results of $R=9$.
						Additionally, from Figures \ref{fig:NeuralNetwork_label_flipping} and \ref{fig:NeuralNetwork_furthest_label_flipping} to Figures \ref{fig:NeuralNetwork_label_flipping_R=8}, \ref{fig:NeuralNetwork_furthest_label_flipping_R=8}, \ref{fig:NeuralNetwork_label_flipping_R=7} and \ref{fig:NeuralNetwork_furthest_label_flipping_R=7}, we observe that as the fraction of poisoned workers increases, the performance of both the mean aggregator and robust aggregators decreases. This aligns with the theoretical results in Theorems \ref{thm:convergence with RAgg} and \ref{thm:convergence with Mean}.
						
						\begin{figure}
							\centering
							\includegraphics[scale=0.18]{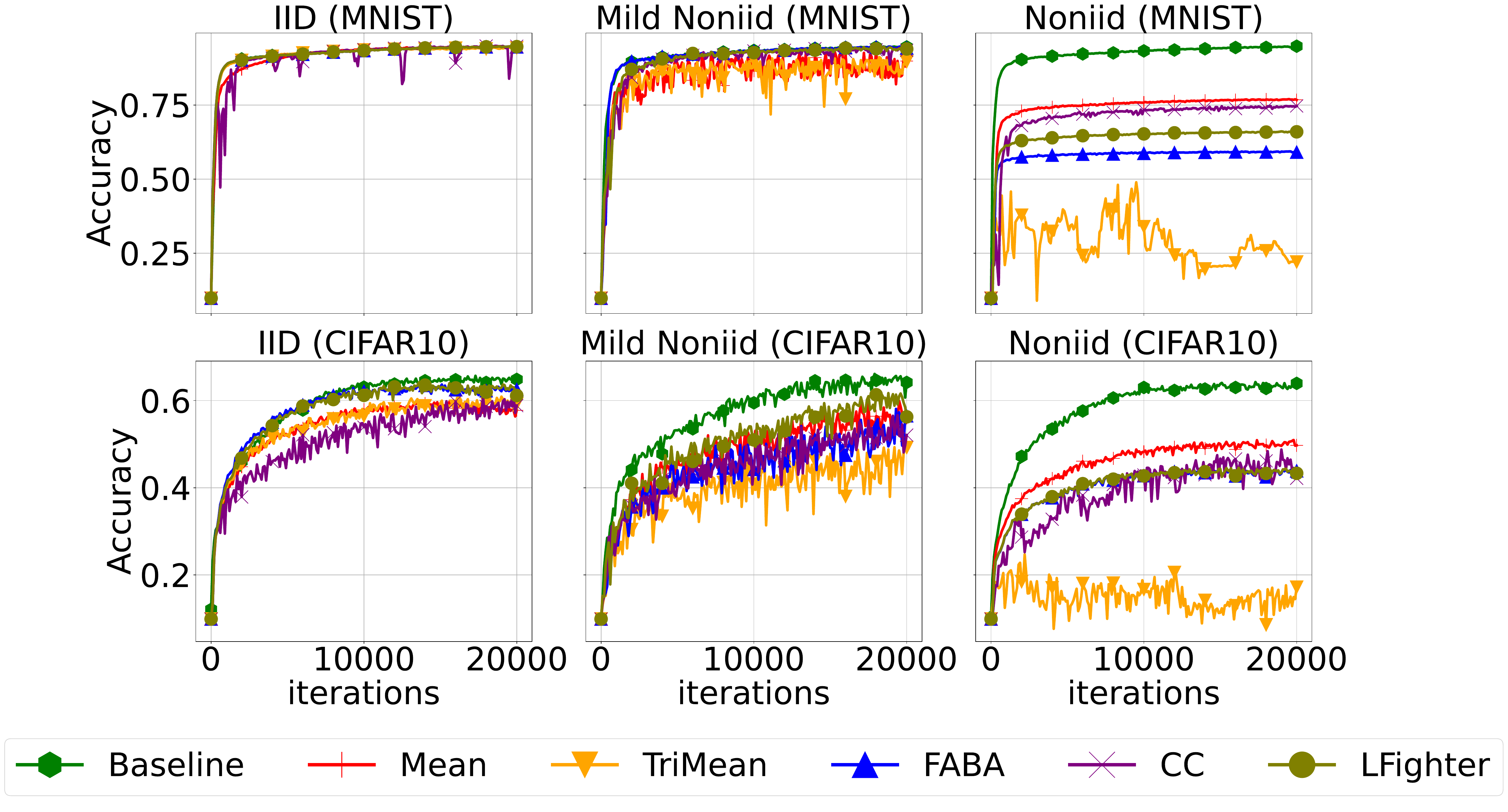}
							\caption{Accuracies of two-layer perceptrons on the MNIST dataset and convolutional neural networks on the CIFAR10 dataset under static label flipping attacks when $R=8$.}
							\label{fig:NeuralNetwork_label_flipping_R=8}
						\end{figure}
						\begin{figure}
							\centering
							\includegraphics[scale=0.18]{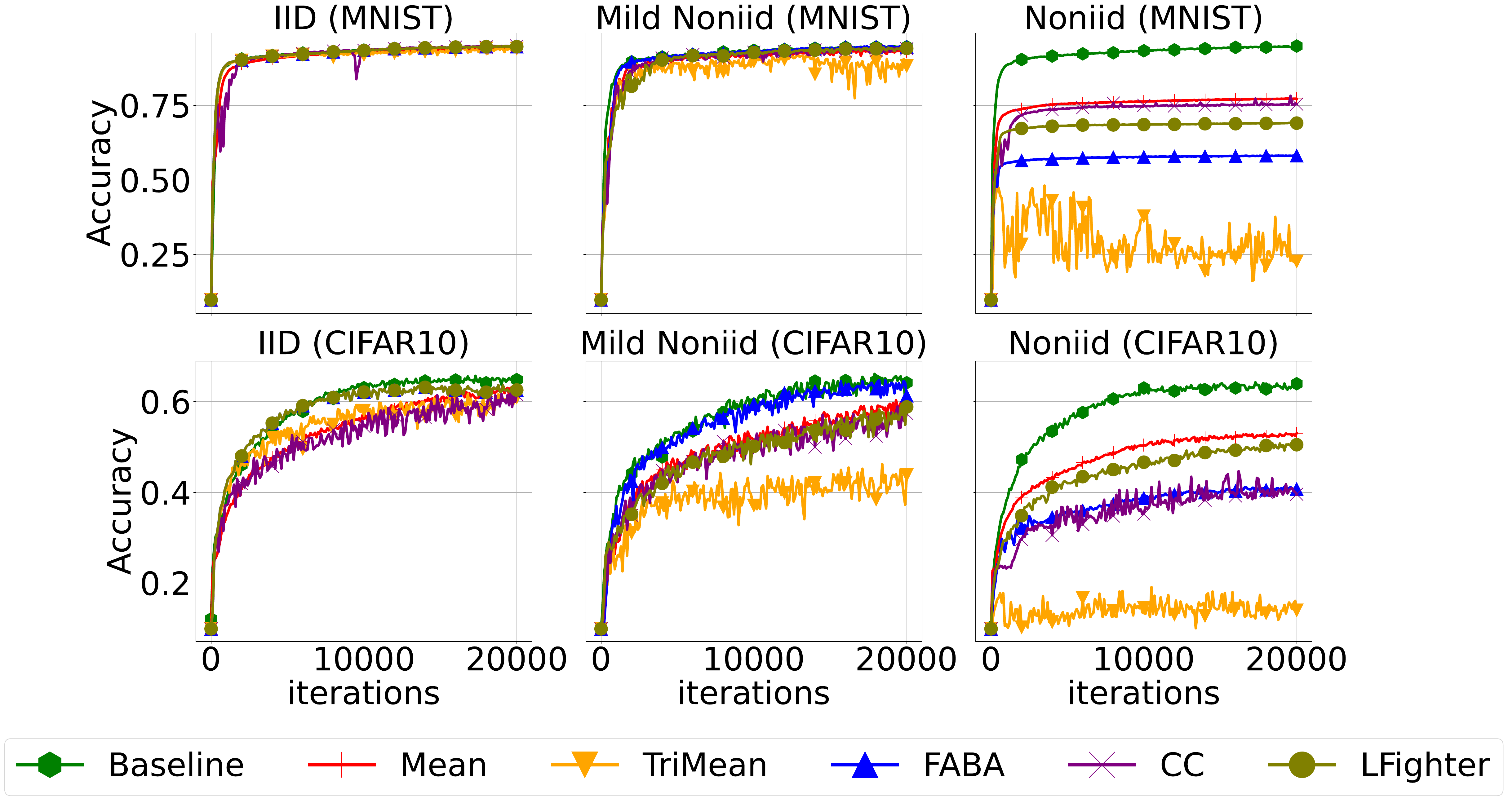}
							\caption{Accuracies of two-layer perceptrons on the MNIST dataset and convolutional neural networks on the CIFAR10 dataset under dynamic label flipping attacks when $R=8$.}
							\label{fig:NeuralNetwork_furthest_label_flipping_R=8}
						\end{figure}
						
						\begin{figure}
							\centering
							\includegraphics[scale=0.18]{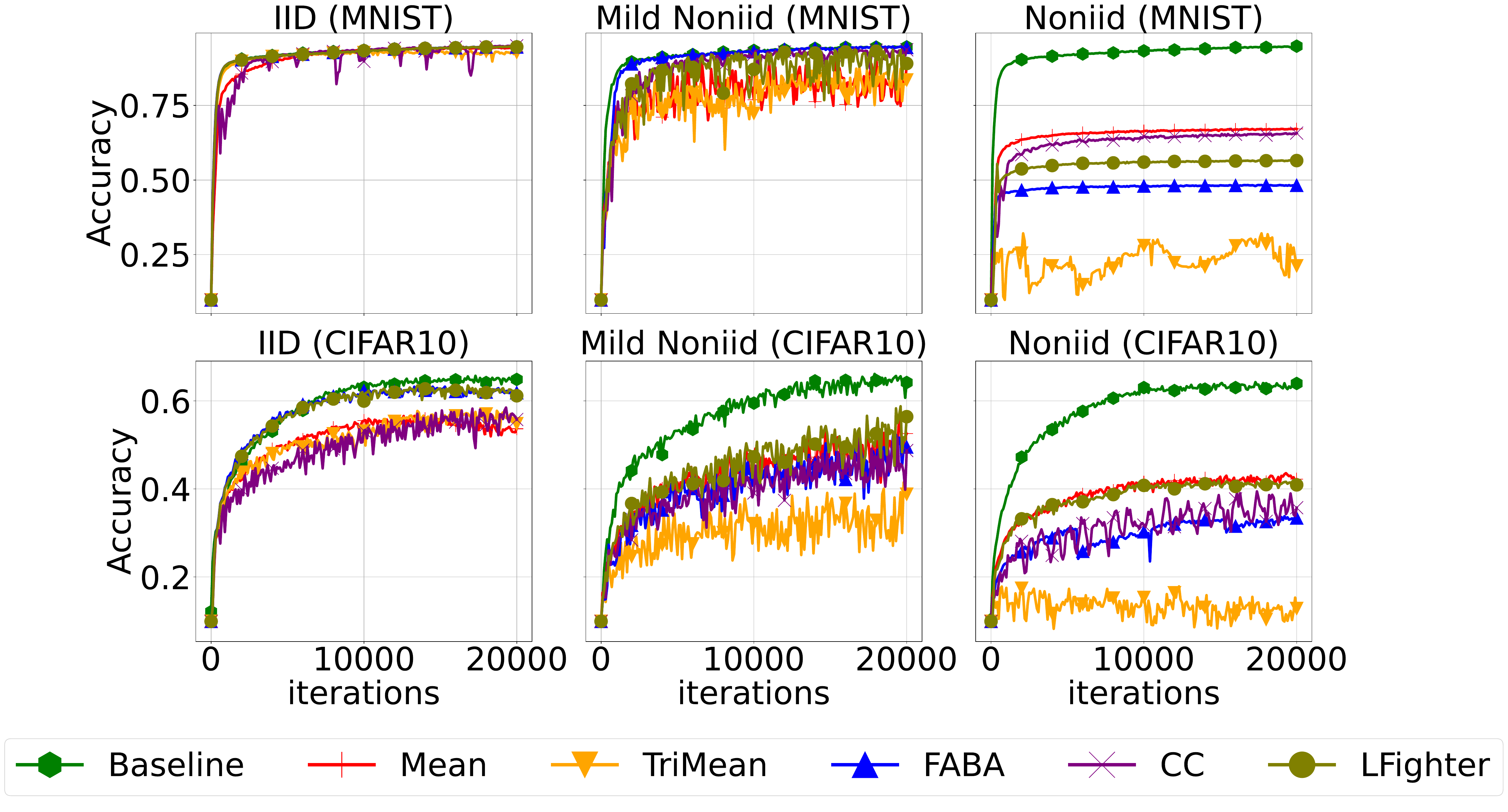}
							\caption{Accuracies of two-layer perceptrons on the MNIST dataset and convolutional neural networks on the CIFAR10 dataset under static label flipping attacks when $R=7$.}
							\label{fig:NeuralNetwork_label_flipping_R=7}
						\end{figure}
						\begin{figure}
							\centering
							\includegraphics[scale=0.18]{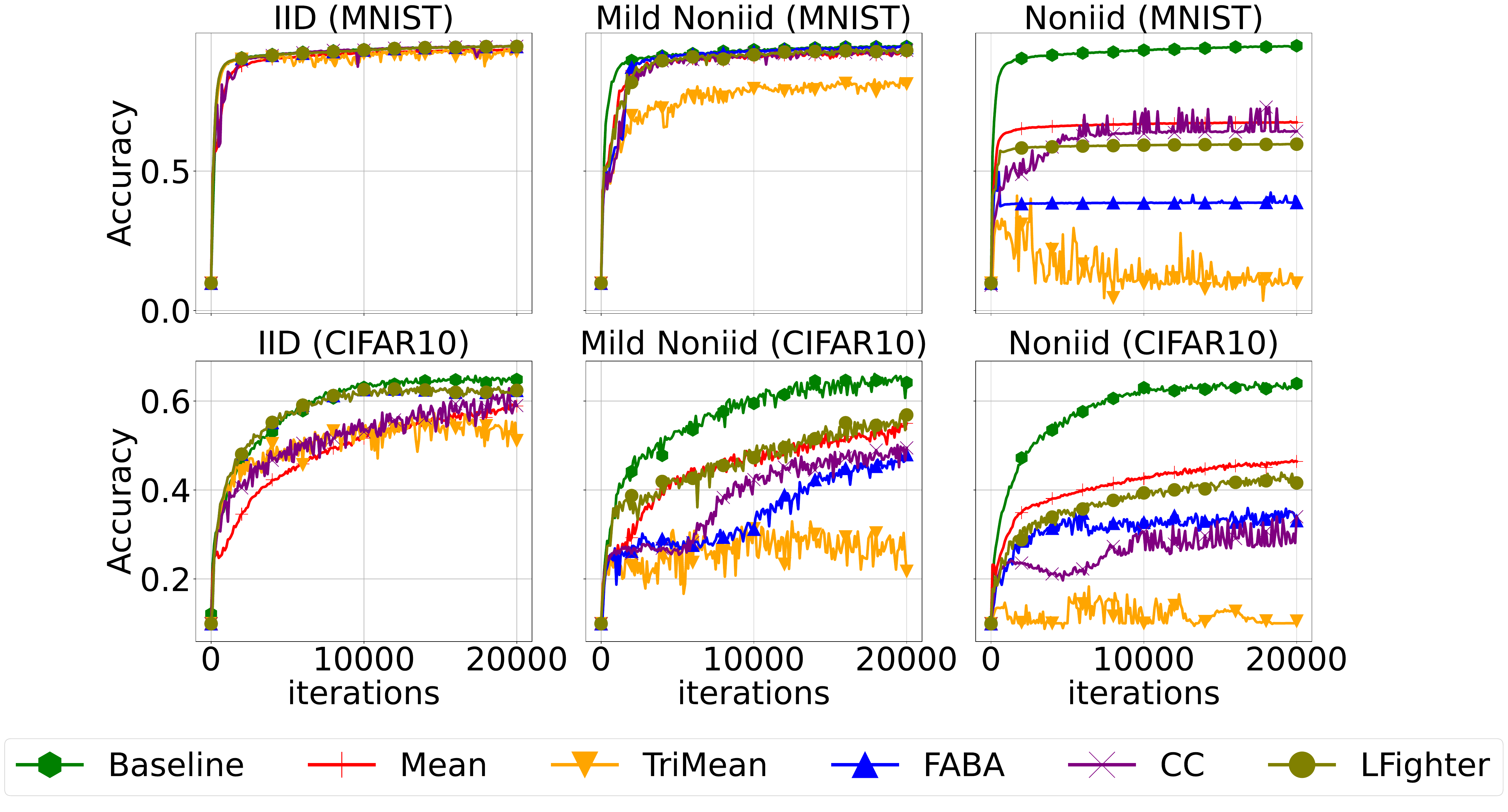}
							\caption{Accuracies of two-layer perceptrons on the MNIST dataset and convolutional neural networks on the CIFAR10 dataset under dynamic label flipping attacks when $R=7$.}
							\label{fig:NeuralNetwork_furthest_label_flipping_R=7}
						\end{figure}

					\vskip 0.2in
					\bibliography{refs}

				\end{document}